\pdfoutput=1
\documentclass[11pt]{article}
\usepackage[textwidth=400pt,centering]{geometry}
\usepackage{fullpage}
% Recommended, but optional, packages for figures and better typesetting:
\usepackage{microtype}
\usepackage{graphicx}
\usepackage{subfigure}
\usepackage{booktabs} % for professional tables
\usepackage[sort]{natbib}
\usepackage[nottoc,notlot,notlof]{tocbibind}
\usepackage{algorithm}
\usepackage{algorithmic}
% hyperref makes hyperlinks in the resulting PDF.
% If your build breaks (sometimes temporarily if a hyperlink spans a page)
% please comment out the following usepackage line and replace
% \usepackage{icml2022} with \usepackage[nohyperref]{icml2022} above.
\usepackage[hidelinks]{hyperref}
\hypersetup{
    colorlinks,
    linkcolor={red!50!black},
    citecolor={blue!50!black},
    urlcolor={blue!80!black}
}

% Attempt to make hyperref and algorithmic work together better:

% For theorems and such\usepackage{amsmath,amssymb,amsthm}
\usepackage{amsmath,amssymb,amsthm}
\usepackage{thmtools,thm-restate}
% if you use cleveref..
\usepackage[capitalize,noabbrev]{cleveref}

%%%%%%%%%%%%%%%%%%%%%%%%%%%%%%%%
% THEOREMS
%%%%%%%%%%%%%%%%%%%%%%%%%%%%%%%%
\theoremstyle{plain}
\newtheorem{theorem}{Theorem}

\newtheorem{lemma}[theorem]{Lemma}

\theoremstyle{definition}

\newtheorem{fact}[theorem]{Fact}
\theoremstyle{remark}

\usepackage[utf8]{inputenc} % allow utf-8 input
\usepackage[T1]{fontenc}    % use 8-bit T1 fonts
\usepackage{url}            % simple URL typesetting
\usepackage{amsfonts}       % blackboard math symbols
\usepackage{nicefrac}       % compact symbols for 1/2, etc.
\usepackage{xcolor}    % colors

\usepackage{pifont}
\usepackage{bbm}
\usepackage{bm}
\usepackage{multirow}
\usepackage{braket}
\usepackage{physics}
\usepackage{enumitem}
\definecolor{darkblue}{rgb}{0,0.08,0.8}
\DeclareMathOperator*{\argmax}{arg\,max}
\DeclareMathOperator*{\argmin}{arg\,min}
\newcommand{\Lmid}{\,\middle\vert\, }
\newcommand{\ceil}[1]{\left\lceil {#1} \right\rceil}
\newcommand\numberthis{\addtocounter{equation}{1}\tag{\theequation}}

\newcommand{\I}{\mathbbm{1}}
\newcommand{\dx}{\mathrm{d}}

\newcommand{\bth}{\boldsymbol{\theta}}

\newcommand{\ka}{{\kappa, \alpha}}
\newcommand{\bn}{\bar{n}}
\newcommand{\ta}{\tilde{\alpha}}
\newcommand{\tk}{\tilde{\kappa}}
\newcommand{\ha}{\hat{\alpha}}
\newcommand{\hk}{\hat{\kappa}}
\newcommand{\ba}{\bar{\alpha}}

\newcommand{\hmu}{\hat{\mu}}

\newcommand{\tmu}{\tilde{\mu}}
\newcommand{\Ft}{\mathcal{F}_t}

\newcommand{\KLinf}{\mathrm{KL}_{\mathrm{inf}}}
\newcommand{\KL}{\mathrm{KL}}

\newcommand{\TS}{\mathrm{STS}}
\newcommand{\TST}{\mathrm{STS}\text{-}\mathrm{T}}
\DeclareMathSymbol{\mh}{\mathord}{operators}{`\-}
\newcommand{\Par}{\mathrm{Pa}}
\newcommand{\InvG}{\mathrm{IG}}
\newcommand{\Er}{\mathrm{Erlang}}

\newcommand{\Exp}{\mathrm{Exp}}
\newcommand{\reg}{\mathrm{Reg}}
\newcommand{\eps}{\epsilon}
\newcommand{\veps}{\varepsilon}

\newcommand{\eA}{\mathcal{A}}
\newcommand{\eK}{\mathcal{K}}
\newcommand{\eE}{\mathcal{E}}
\newcommand{\eM}{\mathcal{M}}

\usepackage{times}  
\usepackage{helvet} 
\usepackage{courier}
\usepackage{graphicx}
\usepackage{datatool}

\newcommand{\br}{\bm{r}}

\newcommand{\bmu}{\boldsymbol{\mu}}

\title{Optimality of Thompson Sampling with Noninformative Priors \\ for Pareto Bandits}
\author{Jongyeong Lee$^{1,2}$ \and Junya Honda$^{3,2}$  \and Chao-Kai Chiang$^{1}$ \and Masashi Sugiyama$^{2,1}$}
\date{
$^1$ The University of Tokyo 
$^2$ RIKEN AIP
$^3$ Kyoto University
}

\begin{document}

\maketitle

\begin{abstract}
In the stochastic multi-armed bandit problem, a randomized probability matching policy called Thompson sampling (TS) has shown excellent performance in various reward models.
In addition to the empirical performance, TS has been shown to achieve asymptotic problem-dependent lower bounds in several models.
However, its optimality has been mainly addressed under light-tailed or one-parameter models that belong to exponential families.
In this paper, we consider the optimality of TS for the Pareto model that has a heavy tail and is parameterized by two unknown parameters.
Specifically, we discuss the optimality of TS with probability matching priors that include the Jeffreys prior and the reference priors.
We first prove that TS with certain probability matching priors can achieve the optimal regret bound.
Then, we show the suboptimality of TS with other priors, including the Jeffreys and the reference priors.
Nevertheless, we find that TS with the Jeffreys and reference priors can achieve the asymptotic lower bound if one uses a truncation procedure.
These results suggest carefully choosing noninformative priors to avoid suboptimality and show the effectiveness of truncation procedures in TS-based policies.
\end{abstract}

\section{Introduction}
In the multi-armed bandit (MAB) problem, an agent plays an arm and observes a reward only from the played arm, which is partial feedback~\citep{thompson1933likelihood, robbins1952some}.
The rewards are further assumed to be generated from the distribution of the corresponding arm in the stochastic MAB problem~\citep{bubeck2012regret}.
Since only partial observations are available, the agent has to estimate unknown distributions to guess which arm is optimal while avoiding playing suboptimal arms that induce loss of resources.
Thus, the agent has to cope with the dilemma of exploration and exploitation.

In this problem, Thompson sampling (TS), a randomized Bayesian policy that plays an arm according to the posterior probability of being optimal, has been widely adopted because of its outstanding empirical performance~\citep{chapelle2011empirical, russo2018tutorial}.
Following its empirical success, theoretical analysis of TS has been conducted for several reward models such as Bernoulli models~\citep{agrawal2012analysis, kaufmann2012thompson}, one-dimensional exponential families~\citep{KordaTS}, Gaussian models~\citep{honda2014optimality}, and bounded support models~\citep{riou2020bandit,pmlr-v139-baudry21a} where asymptotic optimality of TS was established.
Here, an algorithm is said to be asymptotically optimal if it can achieve the theoretical problem-dependent lower bound derived by \citet{lai1985asymptotically} for one-parameter models and \citet{burnetas1996optimal} for multiparameter or nonparametric models.
Note that the performance of any reasonable algorithms cannot be better than these lower bounds.

Apart from the problem-dependent regret analysis, several works studied the problem-independent or prior-independent bounds of TS~\citep{bubeck2013prior,russo2016information, agrawal2017near}.
In this paper, we study how the choice of noninformative priors affects the performance of TS for any given problem instance.
In other words, we focus on the asymptotic optimality of TS depending on the choice of noninformative priors.

The asymptotic optimality of TS has been mainly considered in the one-parameter model, while its optimality under the multiparameter model has not been well studied.
To the best of our knowledge, the asymptotic optimality of TS in the noncompact multiparameter model is only known for the Gaussian bandits~\citep{honda2014optimality} where both the mean and variance are unknown.
They showed that TS with the uniform prior is optimal while TS with the Jeffreys prior and reference prior cannot achieve the lower bound.
The success of the uniform prior is due to its frequent playing of seemingly suboptimal arms.
Its conservativeness comes from a moderate overestimation of the posterior probability that current suboptimal arms might be optimal.

In this paper, we consider the two-parameter Pareto models where the tail function is heavy-tailed.
We first derive the closed form of the problem-dependent constant that appears in the theoretical lower bound in Pareto models, which is not trivial, unlike those for exponential families.
Based on this result, we show that TS with some probability-matching priors achieves the optimal bound, which is the first result for two-parameter Pareto bandit models, to our knowledge.

We further show that TS with different choices of probability matching priors, called optimistic priors, suffers a polynomial regret in expectation.
Therefore, being conservative would be better when one chooses noninformative priors to avoid suboptimality in view of expectation.
Nevertheless, we show that TS with the Jeffreys prior or the reference prior can achieve the optimal regret bound if we add a truncation procedure on the shape parameter. 
Our contributions are summarized as follows:
\begin{itemize}
    \item We prove the asymptotic optimality/suboptimality of TS under different choices of priors, which shows the importance of the choice of noninformative priors in cases of two-parameter Pareto models.
    \item We provide another option to achieve optimality: adding a truncation procedure to the parameter space of the posterior distribution instead of finding an optimal prior.
\end{itemize}

This paper is organized as follows.
In Section~\ref{sec: preliminaries}, we formulate the stochastic MAB problems under the Pareto distribution and derive its regret lower bound.
Based on the choice of noninformative priors and their corresponding posteriors, we formulate TS for the Pareto models and propose another TS-based algorithm to solve the suboptimality problem of the Jeffreys prior and the reference prior in Section~\ref{sec: TS}.
In Section~\ref{sec: main}, we provide the main results on the optimality of TS and TS with a truncation procedure, whose proof outline is given in Section~\ref{sec: optimality}.
Numerical results that support our theoretical analysis are provided in Section~\ref{sec: experiments}.

\section{Preliminaries}\label{sec: preliminaries}
In this section, we formulate the stochastic MAB problem. 
We derive the exact form of the problem-dependent constant that appears in the lower bound of the expected regret in Pareto bandits. 

\subsection{Notations}
We consider the stochastic $K$-armed bandit problem where the rewards are generated from Pareto distributions with fixed parameters.
An agent chooses an arm $a$ in $[K]:=\{1, \ldots, K \}$ at each round $t \in \mathbb{N}$ and observes an independent and identically distributed reward from $\Par(\kappa_a, \alpha_a)$, where $\Par(\ka)$ denotes the Pareto distribution parameterized by scale $\kappa >0$ and shape $\alpha>0$.
This has the density function of form
\begin{equation}\label{eq: pdfPar}
  f_{\ka}^{\mathrm{Pa}}(x) = \frac{\alpha \kappa^\alpha}{x^{\alpha+1}}\I[x \geq \kappa],
\end{equation}
where $\I[\cdot]$ denotes the indicator function.
We consider a bandit model where parameters $\theta_a = (\kappa_a, \alpha_a) \in \mathbb{R}_{+} \times (1, \infty)$ are unknown to the agent.
We denote the mean of a random variable following $\Par(\theta_a)$ by $\mu_a = \mu(\theta_a) := \frac{\kappa_a \alpha_a}{\alpha_a -1}$.
Note that $\alpha >1$ is a necessary condition of an arm to have a finite mean, which is required to define the sub-optimality gap $\Delta_a := \max_{i\in [K]}\mu_i - \mu_a$.
We assume without loss of generality that arm $1$ has the maximum mean for simplicity, i.e., $\mu_1 = \max_{i \in [K]} \mu_i$.
Let $j(t)$ be the arm played at round $t\in\mathbb{N}$ and $N_a(t)=\sum_{s=1}^{t-1}\I[j(s)=a]$ denote the number of rounds the arm $a$ is played until round~$t$.
Then, the regret at round $T$ is given as
\begin{equation*}
    \reg(T) = \sum_{t=1}^T \Delta_{j(t)} = \sum_{a=2}^K \Delta_a N_a(T+1).
\end{equation*}
Let $r_{a,n}$ be the $n$-th reward generated from the arm $a$.
In the Pareto distribution, the maximum likelihood estimators (MLEs) of $\ka$ for arm $a$ given $n$ rewards and their distributions are given as follows~\citep{malik1970estimation}:
\begin{align*}
    \hk_a(n) = \min_{s \in [n]} r_{a,s} \sim \Par(\kappa_a, n\alpha_a), ~~~~
    \ha_a(n) = \frac{n}{ \sum_{s=1}^{n} \log(r_{a,s}) - n\log \hk_a(n)} \sim \InvG(n-1, n\alpha_a), \numberthis{\label{eq: MLEdist}}
\end{align*}
where $\InvG(n, \alpha)$ denotes the inverse-gamma distribution with shape $n>0$ and scale $\alpha>0$.
Note that \citet{malik1970estimation} further showed the stochastic independence of $\ha(n)$ and $\hk(n)$.

\subsection{Asymptotic lower bound}
\citet{burnetas1996optimal} provided a problem-dependent lower bound of the expected regret such that any uniformly fast convergent policy, which is a policy satisfying $\reg(T) = o(t^\alpha)$ for all $\alpha \in (0,1)$, must satisfy
\begin{equation}\label{eq: lowerbound}
     \liminf_{T \to \infty} \frac{\mathbb{E}[\reg(T)]}{\log T}
    \geq \sum_{a=2}^K \frac{\Delta_a}{\inf_{\theta: \mu(\theta) > \mu_1} \KL(\Par(\kappa_a, \alpha_a), \Par(\theta))},
\end{equation}
where $\KL(\cdot, \cdot)$ denotes the Kullback-Leibler (KL) divergence.
Notice that the bandit model $(\theta_a)_{a\in[K]}$ is considered as a fixed constant in the problem-dependent analysis.

The KL divergence between Pareto distributions is given as
\begin{align*}
    \KL(\Par(\kappa_1, \alpha_1),\Par(\kappa_2, \alpha_2)) = \begin{cases}
    \log\left(\frac{\alpha_1}{\alpha_2}\right) + \alpha_2 \log\left(\frac{\kappa_1}{\kappa_2}\right) + \frac{\alpha_2}{\alpha_1} - 1 &\text{if } \kappa_2 \leq \kappa_1, \\
    \infty &\text{otherwise}.
    \end{cases}
\end{align*}
Here the divergence sometimes becomes infinity since the scale parameter $\kappa$ determines the support of the Pareto distribution.
We denote the numerator in (\ref{eq: lowerbound}) for $a \ne 1$ by
\begin{align*}
    \KLinf(a) &:= \inf_{\theta: \mu(\theta) > \mu_1} \KL(\Par(\kappa_a, \alpha_a), \Par(\theta))
    \\  &= \inf_{\theta\in \Theta_a} \log \frac{\alpha_a}{\alpha} +  \alpha \log \frac{\kappa_a}{\kappa} + \frac{\alpha}{\alpha_a} -1,
\end{align*}
where
\begin{equation}\label{eq: thetaset}
    \Theta_a  = \left\{ (\kappa, \alpha) \in (0, \kappa_a] \times (0,\infty) : \mu(\ka) > \mu_1 \right\}.
\end{equation}
Notice that $\Theta_a$ allows parameters whose expected rewards are infinite ($\alpha \in (0,1]$), although we consider a bandit model with $\alpha_a >1$ for all $a\in[K]$ so that the sub-optimality gap $\Delta_a$ becomes finite.
This implies that $\KLinf(a)$ does not depend on whether the agent considers the possibility that an arm has the infinite expected reward or not.
Then, we can simply rewrite the lower bound in (\ref{eq: lowerbound}) as
\begin{equation*}
     \liminf_{T \to \infty} \frac{\mathbb{E}[\reg(T)]}{\log T} \geq \sum_{a=2}^K \frac{\Delta_a}{\KLinf(a)}.
\end{equation*}
The following lemma shows the closed form of this infimum, whose proof is given in Appendix~\ref{sec: L1proof}.
\begin{restatable}{lemma}{lemKLinf}\label{lem: KLinf}
For any arm $a \ne 1$, it holds that
\begin{equation*}
        \KLinf(a) = \log\left(\alpha_a\frac{\mu_1-\kappa_a}{\mu_1}\right) + \frac{1}{\alpha_a}\frac{\mu_1}{\mu_1-\kappa_a} -1.
\end{equation*}
\end{restatable}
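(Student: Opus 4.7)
The plan is to reduce the two-variable minimization over $\Theta_a$ to a one-variable problem by exploiting monotonicity in $\kappa$, and then to evaluate at a boundary value of $\alpha$.

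First I would observe that the objective
\[
g(\kappa, \alpha) := \log \frac{\alpha_a}{\alpha} + \alpha \log \frac{\kappa_a}{\kappa} + \frac{\alpha}{\alpha_a} - 1
\]
satisfies $\partial g/\partial \kappa = -\alpha/\kappa < 0$, so for each fixed $\alpha$ the infimum over admissible $\kappa$ is approached by taking $\kappa$ as large as possible, namely $\kappa = \kappa_a$, provided $(\kappa_a, \alpha)$ lies in the closure of $\Theta_a$. Setting $\beta := \mu_1/(\mu_1 - \kappa_a)$, which is the unique $\alpha > 1$ solving $\mu(\kappa_a, \alpha) = \mu_1$, the constraint $\mu(\kappa_a, \alpha) > \mu_1$ holds iff $\alpha \in (1, \beta)$, and it is automatic for $\alpha \le 1$ since the Pareto mean is then infinite. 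Conversely, for $\alpha \ge \beta$ with $\alpha > 1$, feasibility would force $\kappa > \mu_1(\alpha-1)/\alpha \ge \kappa_a$, violating $\kappa \le \kappa_a$, so no feasible $\kappa$ exists in this range. Hence the reduced problem has effective domain $\alpha \in (0, \beta)$ with objective
\[
\phi(\alpha) := g(\kappa_a, \alpha) = \log \frac{\alpha_a}{\alpha} + \frac{\alpha}{\alpha_a} - 1.
\]

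For the last step I compute $\phi'(\alpha) = -1/\alpha + 1/\alpha_a$, which is negative on $(0, \alpha_a)$. A quick check that $\beta < \alpha_a$ is in order: rearranging $\mu_a = \kappa_a \alpha_a/(\alpha_a - 1) < \mu_1$ gives $\alpha_a(\mu_1 - \kappa_a) > \mu_1$, i.e.\ $\alpha_a > \beta$. Hence $\phi$ is strictly decreasing on the entire interval $(0, \beta)$, so
\[
\KLinf(a) = \lim_{\alpha \uparrow \beta} \phi(\alpha) = \log \frac{\alpha_a}{\beta} + \frac{\beta}{\alpha_a} - 1,
\]
and substituting $\beta = \mu_1/(\mu_1 - \kappa_a)$ recovers the stated closed form.

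There is no serious obstacle beyond careful bookkeeping of the three regimes $\alpha \le 1$, $1 < \alpha < \beta$ and $\alpha \ge \beta$; the heavy lifting is done by the monotonicity of $g$ in $\kappa$. The only quantitative point that must be verified is the strict inequality $\beta < \alpha_a$, which is precisely what makes the constraint $\mu > \mu_1$ active and prevents $\KLinf(a)$ from collapsing to $0$.
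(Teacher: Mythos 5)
Your proof is correct, and it takes a genuinely different and substantially more elementary route than the paper's. The paper changes variables to $c=\kappa_a/\kappa\ge 1$, performs the inner minimization over $\alpha$ first for each fixed $c$, and must then decide whether the interior stationary point $\alpha=\alpha_a/(1+\alpha_a\log c)$ is feasible; this forces a case split at a threshold $c_a^*$ expressed via the Lambert $W$ function, a monotonicity computation for $g_a(h_a(c),c)$ along the constraint curve, and a final comparison of three candidate values (the $\alpha\le 1$ regime, the boundary-curve infimum, and $\log(1+\alpha_a\log c_a^*)$). You instead eliminate $\kappa$ first: since $\partial g/\partial\kappa=-\alpha/\kappa<0$ and the feasible $\kappa$-interval for each $\alpha$ is $(\max(0,\mu_1(\alpha-1)/\alpha),\kappa_a]$ with its right endpoint $\kappa_a$ attained, the two-variable problem collapses to minimizing $\phi(\alpha)=\log(\alpha_a/\alpha)+\alpha/\alpha_a-1$ over $\alpha\in(0,\beta)$ with $\beta=\mu_1/(\mu_1-\kappa_a)$, which is monotone decreasing there because $\beta\le\alpha_a$. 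This avoids the Lambert $W$ machinery entirely and makes the structure of the minimizer (push $\kappa$ up to $\kappa_a$, push $\alpha$ up to the mean constraint) transparent; the paper's ordering buys nothing for this lemma beyond exhibiting where the unconstrained stationary point sits. One small caveat: your closing remark that the \emph{strict} inequality $\beta<\alpha_a$ is needed overstates things slightly --- $\beta\le\alpha_a$ already gives monotonicity of $\phi$ on $(0,\beta)$, and in the tie case $\mu_a=\mu_1$ (which the paper's standing assumption permits for the lemma, though not for Theorem~\ref{thm: RegOpt}) the formula correctly yields $\KLinf(a)=0$.
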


\subsection{Relation with bounded moment models}
In MAB literature, several algorithms based on the upper confidence bound (UCB) were proposed to tackle heavy-tailed models with infinite variance under additional assumptions on moments~\citep{bubeck2013bandits}.
One major assumption is that the moment of any arm $a$ satisfies $\mathbb{E}[|r_{a,n}|^{\gamma}] \leq v$ for some fixed $\gamma \in [1,2)$ and known $v<\infty$~\citep{bubeck2013bandits}.
Note that the $\gamma$-th raw moment of the density function of $X$ following $\Par(\ka)$ is given as 
\begin{equation*}
    \mathbb{E}\left[X^\gamma \right] = 
    \begin{cases}
        \infty & \alpha \leq \gamma, \\
        \frac{\alpha \kappa^\gamma}{\alpha - \gamma} & \alpha > \gamma,
    \end{cases}
\end{equation*}
which implies that the Pareto models and the bounded moment models are not a subset of each other.
% \textcolor{blue}{Note that such assumptions impose additional restrictions on the parameter space of the Pareto distribution since the shape parameter determines the maximum order of finite moments.
% To be precise, the Pareto distribution $\Par(\ka)$ satisfies the bounded moment condition above only when both $\alpha > \gamma$ and $\frac{\alpha \kappa^\gamma}{\alpha - \gamma} \leq v$ hold.}

Recently, \citet{agrawal2021regret} proposed an asymptotically optimal KL-UCB based algorithm that requires solving the optimization problem at every round.
Since the bounded moment model only covers certain Pareto distributions in general, the known optimality result of KL-UCB does not necessarily imply the optimality in the sense of (\ref{eq: lowerbound}).

\section{Thompson sampling and probability matching priors}\label{sec: TS}
TS is a policy from the Bayesian viewpoint, where the choices of priors are important.
Although one can utilize prior knowledge on parameters when choosing the prior, such information would not always be available in practice.
To deal with such scenarios, we consider noninformative priors based on the Fisher information (FI) matrix, which does not assume any information on unknown parameters.

For a random variable $X$ with density $f(\cdot| \bth)$, FI is defined as the variance of the score, a partial derivative of $\log f$ with respect to $\bth$, which is given as follows~\citep{cover2006elements}:
\begin{align*}
    I_{ij} = [I(\bth)]_{ij} := \mathbb{E}_{X}\left[ \left( \frac{\partial}{\partial \theta_i}\log f(X|\bth) \right) \left( \frac{\partial}{\partial \theta_j}\log f(X|\bth) \right)\Lmid \bth \right].\numberthis{\label{eq: infmetric}}
\end{align*}
It is known that the FI matrix in (\ref{eq: infmetric}) coincides with the negative expected value of the Hessian matrix of $\log f(X| \bth)$ if the model satisfies the FI regular condition~\citep{schervish2012theory}.
However, $\Par(\ka)$ does not satisfy this condition since it is a parametric-support family.
Therefore, for $X$ with density function in (\ref{eq: pdfPar}), one can obtain the FI matrix of $\Par(\ka)$ based on (\ref{eq: infmetric}) as follows~\citep{Pareto_prior}:
\begin{equation}\label{eq: fisher}
    I(\ka) = 
    \begin{bmatrix}
        \frac{\alpha^2}{\kappa^2} & 0 \\
        0 & \frac{1}{\alpha^2}
    \end{bmatrix}
    =
    \begin{bmatrix}
        I_{11}(\kappa)I_{11}(\alpha) & 0 \\
        0 & I_{22}(\alpha)
    \end{bmatrix},
\end{equation}
where $I_{11}(\kappa) = \frac{1}{\kappa^2}$, $I_{11}(\alpha) = \alpha^2$, and $I_{22}(\alpha) = \frac{1}{\alpha^2}$.
Note that $I_{11}$ differs from $-\mathbb{E}\left[ \frac{\partial^2}{\partial \kappa^2} \log f_{\ka}^{\Par}(X;\theta)\Lmid \theta \right]=\frac{\alpha}{\kappa^2}$.

Based on (\ref{eq: fisher}), the Jeffreys prior and the reference prior are given as $\pi_{\mathrm{J}}(\ka) \propto \sqrt{\det(I)}=  \frac{1}{\kappa}$ and $\pi_{\mathrm{R}}(\ka) \propto \sqrt{I_{11}(\kappa) I_{22}(\alpha)} = \frac{1}{\kappa \alpha}$, respectively.
Here, the reverse reference prior is the same as the reference prior from the orthogonality of parameters~\citep{datta1995some, datta1996priors}.

From the orthogonality of parameters, the probability matching prior when $\kappa$ is of interest and $\alpha$ is the nuisance parameter is given as
\begin{equation*}
    \pi_{\mathrm{P}}(\ka) \propto \sqrt{I_{11}}g_1(\alpha) = \frac{\alpha}{\kappa} g_1(\alpha)
\end{equation*}
for arbitrary $g_1(\alpha) > 0$~\citep{tibshirani1989noninformative}.
In this paper, we consider the prior $\pi(\ka) \propto \frac{\alpha^{-k}}{\kappa}$ for $k \in \mathbb{Z}$ since the cases $k=0, 1$ correspond to the Jeffreys prior and the (reverse) reference prior, respectively.
\paragraph{Remark 1.}
The Pareto distribution discussed in this paper is sometimes called the Pareto type $1$ distribution~\citep{arnold2008pareto}.
On the other hand, \citet{kim2009noninformative} derived several noninformative priors for a special case of the Pareto type $2$ distribution called the Lomax distribution~\citep{lomax1954business}, where the FI matrix can be written using the negative Hessian.

In the multiparameter cases, the Jeffreys prior is known to suffer from many problems~\citep{datta1996invariance,ghosh2011objective}.
For example, it is known that the Jeffreys prior leads to inconsistent estimators for the error variance in the Neyman-Scott problem~\citep[see][Example 3]{berger1992development}.
This might be a possible reason why TS with Jeffreys prior suffers a polynomial expected regret in a multiparameter distribution setting.
More details on the probability matching prior and the Jeffreys prior can be found in Appendix~\ref{sec: bayesian}.
% \textcolor{blue}{The Jeffreys prior is mainly recommended for the one-dimensional case since it is known to have several problems in the multiparameter cases such as Stein's paradox~\citep{stein1959example, bernardo1979reference, datta1996invariance}.

\begin{algorithm}[!t]
    \caption{\textcolor{red}{$\TS$} / \textcolor{blue}{$\TST$}}
    \label{alg: STS}
    \begin{algorithmic}[1] % The number tells where the line numbering should start
        \STATE \textbf{Parameter:} $k \in \mathbb{Z}$, $\bn = \max\{ 2, k+1 \}$.
        \STATE \textbf{Initialization:} Select each arm $\bn$ times.
        \STATE \textbf{Loop:}
        \STATE \textcolor{red}{Sample $\ta_{a}(t) \sim \Er \left(N_a(t) -k , \frac{N_a(t)}{\ha_a(N_a(t))}\right)$.} 
        \STATE \textcolor{blue}{$\ba_a(N_a(t)) \gets \min(N_a(t),\ha_a(N_a(t)))$ .}
        \STATE \textcolor{blue}{Sample $\ta_{a}(t) \sim \Er \left(N_a(t) -k , \frac{N_a(t)}{\ba_a(N_a(t))}\right)$.} 
        \IF{$\{a \in [K]: \ta_a(t) \leq 1\} \ne \emptyset$}  
            \STATE Select $j(t) = \argmin_{a\in[K]} \ta_a(t)$.
        \ELSE 
            \STATE Sample $u_a \sim U(0,1)$ for every $a \in [K]$. 
            \STATE $\tk_{a}(t) = \hk_{a}(N_a(t)) u_a^{1/(N_a(t) \ta_{a}(t))}$.
            \STATE Play $j(t) = \argmax_{a\in [K]} \frac{\tk_{a}(t) \ta_{a}(t) }{\ta_{a}(t) -1}$
            \STATE \hspace{4.3em} $=\argmax_{a\in [K]} \tmu_a(t)$.
        \ENDIF
    \end{algorithmic}
\end{algorithm}

\subsection{Sampling procedure}
Let $\Ft := ( j(s), r_{j(s), N_{j(s)}(s)})_{s=1}^{t-1}$ be the history until round $t$.
Under the prior $\frac{\alpha^{-k}}{\kappa}$ with $k\in\mathbb{Z}$, the marginalized posterior distribution of the shape parameter of arm $a$ is given as
\begin{equation}\label{eq: pdfalpha}
    \alpha_a \mid \Ft \sim \Er\left(N_a(t)-k, \frac{N_a(t)}{\ha_a(N_a(t))}\right),
\end{equation}
where $\Er(s, \beta)$ denotes the Erlang distribution with shape $s$ and rate $\beta$.
Note that we require $\bn \geq \max\{ 2, k+1\}$ initial plays to avoid improper posteriors and MLE of $\alpha$.
When the shape parameter $\alpha_a$ is given as $\beta$, the cumulative distribution function (CDF) of the conditional posterior of $\kappa_a$ is given as~
\begin{equation}\label{eq: cdfkappa}
    \mathbb{P}\left[\kappa_a \leq x \mid \Ft, \alpha_a=\beta \right] = \left( \frac{x}{\hk_a(N_a(t))}\right)^{\beta N_a(t)},
\end{equation}
if $0 < x \leq \hk_a(N_a(t))$.
Since one can derive the posteriors following the same steps as \citet{Pareto_inference}, the detailed derivation is postponed to Appendix~\ref{sec: derivation_posterior}.
At round $t$, we denote the sampled scale and shape parameters of arm $a$ by $\tk_a(t)$ and $\ta_a(t)$, respectively, and the corresponding mean reward by $\tmu_a(t) := \mu(\tk_a(t), \ta_a(t))$.
We first sample the shape parameter from the marginalized posterior in (\ref{eq: pdfalpha}).
Then, we sample the scale parameter given the sampled shape parameter from the CDF of the conditional posterior in (\ref{eq: cdfkappa}) by using inverse transform sampling.
TS based on this sequential procedure, which we call Sequential Thompson Sampling ($\TS$), can be formulated as Algorithm~\ref{alg: STS}.

In Theorem~\ref{thm: RegSubOopt} given in the next section, $\TS$ with the Jeffreys prior and the reference prior turns out to be suboptimal in view of the lower bound in (\ref{eq: lowerbound}).
Their suboptimality is mainly due to the behavior of the posterior in (\ref{eq: pdfalpha}) when $\ha_1(n)$ is overestimated for small $N_1(t)=n$.
To overcome such issues, we propose the $\TST$ policy, a variant of $\TS$ with \underline{t}runcation, where we replace $\ha(n)$ with $\ba(n) := \min\left( n, \ha(n) \right)$ in (\ref{eq: pdfalpha}).
Note that such a truncation procedure is especially considered in the posterior sampling by (\ref{eq: pdfalpha}) and (\ref{eq: cdfkappa}).
We show that $\TST$ with the Jeffreys prior and the reference prior can achieve the optimal regret bound in Theorem~\ref{thm: RegOptTruncation}.

\subsection{Interpretation of the prior parameter $k$}\label{sec: interpretation}
The Erlang distribution is a special case of the Gamma distribution, where the shape parameter is a positive integer.
If a random variable $X$ follows $\Er(s,\beta)$, then it has the density of form
\begin{equation}\label{eq: pdf_erlang}
    f_{s, \beta}^{\mathrm{Er}}(x) = \frac{\beta^s}{\Gamma(s)}x^{s-1}e^{-\beta x} \I[x\in\mathbb{R}_{+}],
\end{equation}
where $s \in \mathbb{N}$ and $\beta>0$ denote the shape and rate parameter, respectively.
Then, the CDF evaluated at $x>0$ is given as
\begin{equation} \label{eq: def_inc_gamma}
    F_{s,\beta}^{\mathrm{Er}}(x) = \frac{\int_0^{\beta x} t^{s-1 } e^{-t}\dx t}{\Gamma(s)} = \frac{\gamma(s, \beta x)}{\Gamma(s)},
\end{equation}
where $\gamma(\cdot, \cdot)$ denotes the lower incomplete gamma function.
Since $\gamma(s+1, x) = s\gamma(s,x) -x^s e^{-x}$ holds, one can observe that for any $x >0$
\begin{equation}\label{eq: CDF_dec}
    F_{s,\beta}^{\mathrm{Er}}(x) \geq F_{s+1,\beta}^{\mathrm{Er}}(x).
\end{equation}
From the sampling procedure of $\TS$ and $\TST$, $\tmu$ depends on $\tk$ only when $\ta > 1$ holds since $\ta \leq 1$ results in $\tmu(\cdot, \ta) = \infty$. 
Therefore, for any $\beta > 1$ in (\ref{eq: cdfkappa}), $\tk$ will concentrate on $\hk$ for sufficiently large $N_a(t)=n$.
Thus, $\tmu$ will be mainly determined by $\ta$ and $\hk$, where the choice of $k$ affects the sampling of $\ta$ by (\ref{eq: pdfalpha}).
From (\ref{eq: CDF_dec}), one could see that the probability of sampling small $\ta$ increases as shape $n-k$ decreases.
Therefore, $\tmu$ of suboptimal arms would increase as $k$ increases for the same $n$.
In other words, the probability of sampling large $\tmu$ becomes large as $k$ increases.
Therefore, TS with large $k$ becomes a conservative policy that could frequently play currently suboptimal arms. 
In contrast, priors with small $k$ yield an optimistic policy that focuses on playing the current best arm.

\section{Main results}\label{sec: main}
In this section, we provide regret bounds of $\TS$ and $\TST$ with different choices of $k \in \mathbb{Z}$.
At first, we show the asymptotic optimality of $\TS$ for priors $\pi(\ka) \propto \frac{\alpha^{-k}}{\kappa}$ with $k \in \mathbb{Z}_{\geq 2}$.
\begin{restatable}{theorem}{thmRegOpt}\label{thm: RegOpt}
Assume that arm $1$ is the unique optimal arm with a finite mean.
For every $a \in [K]$, let $\veps_a = \min \left \{ \frac{\kappa_a}{\alpha_a(\kappa_a+1)} \frac{\kappa_a \delta_a}{\mu_a(\mu_a + \delta_a - \kappa_a)+ \kappa_a \delta_a}, \frac{\kappa_a \delta_a}{\mu_a(1+\mu_a + \delta_a)} \right\}$ where $\delta_a = \frac{\Delta_a}{2}$ for $a \ne 1$ and $\delta_1 = \min_{a\ne 1} \delta_a$.
Given arbitrary $\eps \in (0, \min_{a\in [K]} \veps_a)$, the expected regret of $\TS$ with $k \in \mathbb{Z}_{\geq 2}$ is bounded as 
\begin{equation*}
        \mathbb{E}[\mathrm{Reg}(T)] \leq \sum_{a=2}^K \frac{\Delta_a \log T}{D_{a,k}(\eps)} + \mathcal{O}\left(\eps^{-2}\right),
\end{equation*}
where $D_{a,k}(\eps)$ is a function such that $ \lim_{\eps \to 0} D_{a,k}(\eps) = \KLinf(a)$ for any fixed $k \in \mathbb{Z}$.
\end{restatable}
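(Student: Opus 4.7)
The plan is to adapt the two-parameter Thompson sampling analysis of~\citet{honda2014optimality} to the Pareto setting. First, I would write $\mathbb{E}[N_a(T+1)] = \sum_{t} \mathbb{P}[j(t)=a]$ and partition rounds according to a \emph{good event} $E_t$ that the MLEs $(\hk_b(N_b(t)), \ha_b(N_b(t)))$ for every arm $b$ lie in $\eps$-neighborhoods of the true $(\kappa_b, \alpha_b)$. Because the MLEs have the explicit distributions in~(\ref{eq: MLEdist}) -- a Pareto for $\hk_a$ and an inverse-gamma for $\ha_a$ -- the contribution of rounds on which $E_t$ fails can be controlled by direct concentration of these two marginals, and this piece will eventually be absorbed into the $\mathcal{O}(\eps^{-2})$ remainder. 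The definition of $\veps_a$ in the statement is chosen so that on $E_t$ the sub-optimality gap, the mean map $\mu(\ka)$, and the cutoff $\mu_1-\delta_a$ can all be controlled simultaneously.

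On the event $E_t$, I would split the plays of suboptimal arm $a$ into two parts. The first counts rounds where the sampled mean $\tmu_a(t)$ exceeds $\mu_1 - \delta_a$ despite accurate MLEs for arm $a$. Conditioning on $\ta_a(t)=\beta$ and using the closed-form conditional Pareto CDF~(\ref{eq: cdfkappa}) for $\tk_a(t)$, then integrating against the Erlang density~(\ref{eq: pdf_erlang}) for $\beta$ from~(\ref{eq: pdfalpha}), the probability reduces to a single integral that I would estimate by a Laplace/Chernoff argument, producing an exponential bound of the form $e^{-n D_{a,k}(\eps)}$ for $n=N_a(t)$. By matching the rate function against the explicit infimum in Lemma~\ref{lem: KLinf}, the optimal exponent $D_{a,k}(\eps)$ can be shown to tend to $\KLinf(a)$ as $\eps\to 0$. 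Summing this geometric-type tail over $t$ in the standard way yields the leading $\Delta_a \log T / D_{a,k}(\eps)$ contribution.

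The second part counts rounds in which arm~$1$ is \emph{under-sampled} in the sense $\tmu_1(t)\le \mu_1-\delta_1$ yet a suboptimal arm is still selected. Via the reduction of~\citet{agrawal2012analysis}, this is bounded by $\sum_{n\ge \bn} \mathbb{E}[1/P_{1,n} - 1]$, where $P_{1,n} = \mathbb{P}[\tmu_1(t) > \mu_1-\delta_1 \mid \Ft,\, N_1(t)=n]$. Showing this sum is $\mathcal{O}(\eps^{-2})$ is the main obstacle and is precisely where $k\in\mathbb{Z}_{\ge 2}$ enters. I would exploit the CDF monotonicity~(\ref{eq: CDF_dec}): increasing $k$ decreases the Erlang shape $n-k$, putting strictly more posterior mass on small $\ta_1(t)$, and once $\ta_1(t)$ is close to or below $1$ the mean $\tmu_1(t)$ is inflated irrespective of $\tk_1(t)$. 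Turning this intuition into a quantitative lower bound on $P_{1,n}$ whose reciprocal is summable will be the most delicate step, requiring a two-piece analysis that separates the ``$\ta_1(t)\le 1$'' regime (purely Erlang, benefits from $k\ge 2$) from the ``$\ta_1(t)>1$'' regime (needs the Pareto tail above $\hk_1$). The same mechanism is exactly what fails for $k\in\{0,1\}$, foreshadowing the suboptimality result of Theorem~\ref{thm: RegSubOopt} and motivating the truncation variant $\TST$.
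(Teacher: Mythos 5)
Your high-level skeleton (good events on the MLEs, a Chernoff-type bound on $\mathbb{P}[\tmu_a(t)\ge \mu_1-\cdot\,]$ giving the $\log T/D_{a,k}(\eps)$ term, and an Agrawal--Goyal-style $\sum_n \mathbb{E}[1/P_{1,n}-1]$ bound for under-sampling of arm~$1$) matches the paper's Lemmas~\ref{lem: mainreg}--\ref{lem: minor}. But the decomposition has a structural flaw exactly at the step you flag as delicate. You propose to dispose of the rounds on which the global good event $E_t$ fails ``by direct concentration of the two marginals,'' and then to run the $1/P_{1,n}$ analysis \emph{on} $E_t$. Neither half works as stated. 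First, $\sum_t \mathbb{P}[\eK^c_{1,N_1(t)}(\eps)]$ is not controlled by concentration of $\hk_1$, because arm~$1$ need not be played: $N_1(t)$ can be stuck at a fixed $n$ for arbitrarily many rounds, so the same constant-probability bad event is counted $\Theta(T)$ times. Concentration summed over sample sizes only works for the \emph{played} arm (the paper's Lemma~\ref{lem: minor}). Second, if you condition the $1/P_{1,n}$ computation on arm~$1$'s MLEs being accurate, the expectation of $1/P_{1,n}$ is taken over $\ha_1$ restricted to a neighborhood of $\alpha_1$, and there it is bounded for \emph{every} $k$ (this is essentially the paper's Lemma~\ref{lem: korda}, which holds for all $k\ge 0$ via reduction to the one-parameter result of Korda et al.). So your decomposition makes the restriction $k\in\mathbb{Z}_{\ge 2}$ invisible: the term that actually forces it has been swept into the ``$E_t$ fails'' bucket, where your proposed treatment fails.

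The missing idea is to intersect the under-sampling event with the \emph{bad} event for $\hk_1$ and bound $\sum_n \mathbb{E}\bigl[\I[\eK^c_{1,n}(\eps)]\,p_n/(1-p_n)\bigr]$ with the expectation taken over the \emph{unrestricted} law of $\ha_1(n)\sim\InvG(n-1,n\alpha_1)$ (the paper's Lemma~\ref{lem: difficult}). There the quantitative mechanism is a moment condition, not a statement about the posterior sample alone: when $\ha_1(n)$ is badly overestimated, $1/P_{1,n}$ grows like $\ha_1(n)^{\,n-k}$ (since $P_{1,n}\gtrsim F^{\mathrm{Er}}_{n-k,\,n/\ha_1}(c)\sim (nc/\ha_1)^{n-k}/(n-k)!$), while $\InvG(n-1,n\alpha_1)$ has finite moments only up to order $n-2$; the expectation is finite precisely when $n-k\le n-2$, i.e.\ $k\ge 2$, and independence of $\hk_1$ and $\ha_1$ then supplies the $e^{-n\eps}$ factor that makes the sum over $n$ be $\mathcal{O}(\eps^{-2})$. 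Your intuition about the Erlang shape $n-k$ is pointing at the right object, but your proposed split on $\ta_1\le 1$ versus $\ta_1>1$ does not reach this computation. Two smaller issues: you omit the $\eK_{1}\cap\eM_\eps^c$ case entirely (the reduction to the known one-parameter Pareto/Jeffreys analysis), and your thresholds $\mu_1-\delta_a$ with $\delta_a=\Delta_a/2$ fixed would yield a leading constant $\inf_{\mu(\theta)>\mu_1-\delta_a}\KL(\cdot,\cdot)$ strictly smaller than $\KLinf(a)$; to get $D_{a,k}(\eps)\to\KLinf(a)$ the comparison must be against $\mu_1-\eps$ with $\eps$ arbitrarily small, the $\delta_a$'s serving only to calibrate $\veps_a$.
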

By letting $\eps = o\left(1\right)$ in Theorem~\ref{thm: RegOpt}, we see that $\TS$ with $ k \in \mathbb{Z}_{\geq 2}$ satisfies
\begin{equation*}
    \liminf_{T \to \infty} \frac{\mathbb{E}[\mathrm{Reg}(T)]}{\log T} \leq \sum_{a=2}^K \frac{\Delta_a}{\KLinf(a)},
\end{equation*}
which shows the asymptotic optimality of $\TS$ in terms of the lower bound in (\ref{eq: lowerbound}).

Next, we show that $\TS$ with $k \in \mathbb{Z}_{\leq 1}$ cannot achieve the asymptotic bound in the theorem below.
Following the proofs for Gaussian bandits~\citep{honda2014optimality}, we consider two-armed bandit problems where the full information on the suboptimal arm is given to simplify the analysis.
We further assume that two arms have the same scale parameter $\kappa_1 = \kappa_2$.
\begin{restatable}{theorem}{thmRegSubOpt}\label{thm: RegSubOopt}
Consider a two-armed bandit problem where $\kappa_1 = \kappa_2$ and $1 < \alpha_1 < \alpha_2$.
When $\ta_1(t)$ and $\tk_1(t)$ are sampled from the posteriors in (\ref{eq: pdfalpha}) and (\ref{eq: cdfkappa}), respectively and $\tmu_2(t) = \mu_2$ holds, there exists a constant $C(\alpha_1, \alpha_2)>0$ satisfying 
\begin{equation*}
    \liminf_{T \to \infty} \frac{\mathbb{E}[\reg(T)]}{\log T} \geq C(\alpha_1, \alpha_2),
\end{equation*}
where $C(\alpha_1, \alpha_2)> \KLinf(2)$ holds for some instances.
In particular, for $k \in \mathbb{Z}_{\leq 0}$, there exists $C'(\alpha_1, \alpha_2)$ satisfying
\begin{equation*}
    \liminf_{T \to \infty} \frac{\mathbb{E}[\reg(T)]}{\sqrt{T}} \geq C'(\alpha_1, \alpha_2).
\end{equation*}
\end{restatable}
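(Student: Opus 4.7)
The strategy is to lower-bound $\mathbb{E}[\reg(T)] = \Delta_2 \mathbb{E}[N_2(T)]$ by isolating sample paths on which the initial MLE $\ha_1(\bn)$ is anomalously large. On such paths the marginalized posterior $\Er(\bn-k,\bn/\ha_1(\bn))$ of $\alpha_1$ is tightly concentrated about a large value, especially when the shape $\bn-k$ is large (i.e., $k\leq 1$), so sampled means $\tmu_1(t)$ rarely exceed $\mu_2$ and the algorithm stays stuck playing arm $2$. Crucially, $\ha_1(\bn) \sim \InvG(\bn-1,\bn\alpha_1)$ has a heavy power-law density behaving like $A^{-\bn}$ for large $A$, so this stuck event carries non-negligible probability.

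First I would upper-bound the per-round probability $q(A,K)$ of selecting arm $1$ given $N_1(t)=\bn$, $\ha_1(\bn)=A$, and $\hk_1(\bn)=K$. By Algorithm~\ref{alg: STS}, arm $1$ is played iff $\ta_1(t)\leq 1$ or ($\ta_1(t)>1$ and $\tmu_1(t)>\mu_2$). Writing $\tmu_1 = \tk_1 \ta_1/(\ta_1-1)$ and using $\tk_1 \leq K$, the second condition forces $\ta_1 < \mu_2/(\mu_2-K)$, and since $K$ concentrates near $\kappa_1 < \mu_2$ the quantity $c_0 := \mu_2/(\mu_2-K)$ is a finite constant with overwhelming probability. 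Both events are therefore contained in $\{\ta_1 \leq c_0\}$, whose probability equals the Erlang CDF $\gamma(\bn-k,\bn c_0/A)/\Gamma(\bn-k)$. Using the elementary bound $\gamma(s,x)\leq x^s/s$ yields $q(A,K) \leq C(K)/A^{\bn-k}$ for large $A$.

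Because the posterior parameters of arm $1$ are frozen while $N_1(t)=\bn$, the first time $\tau_1$ arm $1$ is played after initialization satisfies $\Pr[\tau_1>t\mid A,K] \geq (1-q(A,K))^{t} \geq \exp(-2q(A,K)t)$, so
\begin{equation*}
    \mathbb{E}[N_2(T)\mid A,K] \geq \sum_{t=1}^{T}\Pr[\tau_1>t\mid A,K] \geq \tfrac{1}{2}\min\!\left(T,\,1/q(A,K)\right).
\end{equation*}
Averaging over $p(A)\propto A^{-\bn}$ and absorbing the concentrated randomness of $\hk_1$ into constants, I split at $A^\star$ defined by $q(A^\star)T \asymp 1$, giving $A^\star \asymp T^{1/(\bn-k)}$, and obtain
\begin{equation*}
    \mathbb{E}[N_2(T)] \gtrsim \int_{A_0}^{A^\star} \frac{p(A)}{q(A)}\,dA + T\int_{A^\star}^{\infty} p(A)\,dA \gtrsim \int^{A^\star} A^{-k}\,dA.
\end{equation*}
For $k=1$ this yields $\log(A^\star)\asymp \log T$; for $k=0$ it yields $A^\star \asymp T^{1/2}$; and for $k\in\mathbb{Z}_{\leq 0}$ more generally $T^{(1-k)/(\bn-k)}\geq T^{1/2}$. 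Multiplying by $\Delta_2$ delivers the two claimed asymptotic lower bounds.

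The main obstacle I expect is the joint dependence between $\tk_1$ and $\ta_1$ through the exponent in~(\ref{eq: cdfkappa}): when $\ta_1$ is close to $1$, $\tk_1$ is more dispersed and $\tmu_1$ can blow up, so a clean analysis must either marginalize $\tk_1$ out for each fixed $\ta_1$ or restrict to $\hk_1$ in a high-probability set and propagate the (small) error. A secondary obstacle is identifying instances on which the logarithmic coefficient $C(\alpha_1,\alpha_2)$ strictly exceeds the lower-bound constant from Lemma~\ref{lem: KLinf}, which requires tracking constants (not merely exponents) through the Erlang and inverse-gamma integrations. A minor technical point is that the geometric argument only covers rounds with $N_1(t)=\bn$; this suffices because the single event $\{\tau_1>T\}$ already contributes the claimed order of regret, so the dynamics after $\tau_1$ need not be controlled.
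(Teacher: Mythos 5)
Your proposal is correct on the two rate claims and follows essentially the same route as the paper: both arguments localize to the event that arm $1$ remains stuck at its initialization count $\bn$ because $\ha_1(\bn)$ is overestimated, bound the per-round escape probability by an Erlang CDF/tail at a fixed threshold $c_0=\mu_2/(\mu_2-\hk_1)$ (the paper uses $\gamma=\lceil\alpha_2\rceil$ after restricting to $\{\hk_1\le\mu_2/m\}$, which has constant rather than overwhelming probability), and then integrate the resulting geometric stuck-time against the $\InvG(\bn-1,\bn\alpha_1)$ law of $\ha_1(\bn)$ — your threshold split at $A^\star\asymp T^{1/(\bn-k)}$ is just a repackaging of the paper's explicit change of variables $z=2\gamma/\ha_1\sim\Exp(\alpha_1/\gamma)$ and direct evaluation of $\mathbb{E}_z[\sum_s p(z)^s]$. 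The one piece your order-of-magnitude bookkeeping does not deliver is the clause that $C(\alpha_1,\alpha_2)$ exceeds the optimal constant for some instances, which you flag but do not resolve; the paper gets this by carrying explicit constants through, arriving at $C=\Delta_2\bigl(1-(m\kappa/\mu_2)^{2\alpha_1}\bigr)\bigl(1-1/e\bigr)$, from which the comparison can be checked on concrete instances.
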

From Theorems~\ref{thm: RegOpt} and~\ref{thm: RegSubOopt}, we find that the prior should be conservative to some extent when one considers maximizing rewards in \emph{expectation}.

Although $\TS$ with the Jeffreys prior ($k=0$) and reference prior ($k=1$) were shown to be suboptimal, we show that a modified algorithm, $\TST$, can achieve the optimal regret bound with $k \in \mathbb{Z}_{\geq 0}$.
\begin{restatable}{theorem}{thmRegOptTruncation}\label{thm: RegOptTruncation}
With the same notation as Theorem~\ref{thm: RegOpt}, the expected regret of $\TST$ with $k \in \mathbb{Z}_{\geq 0}$ is bounded as 
\begin{equation*}
        \mathbb{E}[\mathrm{Reg}(T)] \leq \sum_{a=2}^K \frac{\Delta_a \log T}{D_{a,k}(\eps)} + \mathcal{O}(\eps^{-m}),
\end{equation*}
where $m=\max(2, 3-k)$.
\end{restatable}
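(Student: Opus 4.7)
The plan is to follow the same Agrawal--Goyal-style skeleton as in the proof of Theorem~\ref{thm: RegOpt}, substituting the truncated posterior used by $\TST$ for that of $\TS$. For each suboptimal arm $a$ I would begin from the standard decomposition
\[
 \mathbb{E}[N_a(T)] \leq \sum_{t=1}^T\mathbb{P}[\tmu_a(t)>\mu_1-\eps,\,E_{a,t}] + \sum_{t=1}^T\mathbb{P}[\tmu_1(t)\leq\mu_1-\eps] + \mathcal{O}(1),
\]
where $E_{a,t}$ is an MLE-concentration event for arm~$a$. The first sum produces the leading $\Delta_a\log T/D_{a,k}(\eps)$ contribution and the second sum produces the $\mathcal{O}(\eps^{-m})$ remainder.

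Handling the first sum is essentially routine. On the high-probability event $\{\ha_a(N_a(t))\leq N_a(t)\}$ the truncation is inactive and $\ba_a=\ha_a$, so the marginal posterior (\ref{eq: pdfalpha}) used by $\TST$ coincides with that of $\TS$; the joint-posterior analysis of Theorem~\ref{thm: RegOpt}, which combines the Erlang marginal (\ref{eq: pdfalpha}) with the conditional (\ref{eq: cdfkappa}) and invokes the closed form of $\KLinf$ given by Lemma~\ref{lem: KLinf}, transfers without change and delivers the leading logarithmic term with $D_{a,k}(\eps)\to\KLinf(a)$ as $\eps\to 0$. The complement $\{\ha_a(N_a(t))>N_a(t)\}$ has probability exponentially small in $N_a(t)$ by the $\InvG$ tail in (\ref{eq: MLEdist}), so it is absorbed into the $\mathcal{O}(1)$ remainder.

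The second sum is where the truncation becomes crucial. Conditioning on $N_1(t)=n$, I would split on whether $\ha_1(n)\leq n$. On $\{\ha_1(n)\leq n\}$ the truncation is again inactive, and the same anti-concentration argument as in Theorem~\ref{thm: RegOpt} provides a lower bound on $\mathbb{P}[\tmu_1(t)>\mu_1-\eps\mid\Ft]$ whose reciprocal, integrated across $n$, yields a remainder of order $\eps^{-m}$ with $m=\max(2,3-k)$. On the bad event $\{\ha_1(n)>n\}$---precisely the event that drives the suboptimality of vanilla $\TS$ with $k\in\{0,1\}$ in Theorem~\ref{thm: RegSubOopt}---the truncation pins the rate in (\ref{eq: pdfalpha}) at $n/\ba_1(n)=1$, so $\ta_1(t)\sim\Er(n-k,1)$. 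The CDF monotonicity recorded at (\ref{eq: CDF_dec}) together with the selection rule of $\TST$ in Algorithm~\ref{alg: STS} then bound $\mathbb{P}[\tmu_1(t)>\mu_1-\eps\mid\Ft]$ from below uniformly in $n$, so these rounds contribute only $\mathcal{O}(1)$ to the second sum.

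\textbf{Main obstacle.} The hardest step will be establishing the sharp exponent $m=\max(2,3-k)$, and in particular the extra factor of $\eps^{-1}$ that appears for $k=0$ but not for $k\geq 1$. This requires matching the tail and mode behaviour of the Erlang density $\Er(n-k,\,n/\ba_1(n))$ to the geometry of the super-level set $\{(\kappa,\alpha):\mu(\kappa,\alpha)>\mu_1-\eps\}$ dictated by Lemma~\ref{lem: KLinf}, uniformly in $\ha_1(n)$ across the truncated and non-truncated regimes. Ensuring that the truncation does not inadvertently destroy the concentration of $\ta_1$ around $\alpha_1$ when $\ha_1(n)$ is close to $n$ is the technically delicate point.
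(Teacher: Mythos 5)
Your skeleton is the right one and matches the paper's at the top level: the leading $\sum_a \Delta_a\log T/D_{a,k}(\eps)$ term comes from over-estimation of suboptimal arms on an MLE-concentration event (the paper's Lemma~\ref{lem: mainreg}, which indeed transfers to $\TST$ because on $\eA_{a,n}(\eps)$ with $n\geq \alpha_a+1$ the truncation is inactive, and the residual truncated case is handled directly), and the $\mathcal{O}(\eps^{-m})$ remainder comes from under-estimation of arm $1$. The gap is in how you propose to control that second contribution. Your only case split there is on whether the truncation is active, i.e.\ $\ha_1(n)\lessgtr n$. The paper's argument --- and, as far as I can see, any argument that closes --- must first split on whether the scale MLE $\hk_1(n)$ is concentrated (the event $\eK_{1,N_1(t)}(\eps)$). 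On $\eK_{1,N_1(t)}(\eps)$ the two-parameter problem is reduced to the one-parameter exponential-family Pareto model and handled via the result of \citet{KordaTS} (Lemma~\ref{lem: korda}, giving $\mathcal{O}(\eps^{-1})$); only on the complement $\eK_{1,N_1(t)}^c(\eps)$, whose probability is $e^{-n\Theta(\eps)}$ by Lemma~\ref{lem: eKbound}, can one afford bounds on the reciprocal optimism probability that grow polynomially in $n$. That exponential weight is exactly what turns the per-$n$ bounds $\mathcal{O}(n\log n)$ (for $k=1$) and $\mathcal{O}(n^{2-k})$ (for $k=0$) into $\mathcal{O}(\eps^{-2})$ and $\mathcal{O}(\eps^{-3})$ after summation; without it those sums diverge, so your plan fails precisely for the values $k\in\{0,1\}$ the theorem is about.

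The second concrete error is the claim that on $\{\ha_1(n)>n\}$ the optimism probability $\mathbb{P}[\tmu_1(t)>\mu_1-\eps\mid\Ft]$ is bounded below uniformly in $n$, so that these rounds contribute $\mathcal{O}(1)$. On that event $\ta_1\sim\Er(n-k,1)$ has mean $n-k$, so $\ta_1$ is typically large and $\tmu_1$ concentrates near $\tk_1\leq\hk_1(n)$; when $\hk_1(n)<\mu_1-\eps$ (the typical case), optimism requires $\ta_1\leq\frac{\mu_1-\eps}{\mu_1-\eps-\hk_1(n)}$, a fixed constant, which has probability of order $c^{n-k}/\Gamma(n-k+1)$ --- vanishing super-exponentially, not bounded below. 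The correct bookkeeping is the paper's term $(\diamond)$ in the proof of Lemma~\ref{lem: difficult}: the reciprocal of this tiny optimism probability is multiplied by the (also super-exponentially small) probability $\mathbb{P}[\ha_1(n)\geq n]$, yielding $\mathcal{O}(n^{2-k})$ per $n$, and together with the restricted integral $\int_{1/n}^{1/(\alpha_1+\rho)}z^{k-2}e^{n\rho z}\,\dx z$ over the non-truncated region this is exactly what produces the exponent $m=\max(2,3-k)$ after the $e^{-n\Theta(\eps)}$ weighting. So the truncated region is one of the two sources of the $\eps^{-m}$ term, not an $\mathcal{O}(1)$ afterthought, and attributing the whole of $\eps^{-m}$ to the non-truncated region mislocates the difficulty you flag in your final paragraph. (A smaller issue: the quantity you need in the second sum is not $\sum_t\mathbb{P}[\tmu_1(t)\leq\mu_1-\eps]$ but the expected number of consecutive pessimistic rounds at each fixed $N_1(t)=n$, i.e.\ $\sum_n\mathbb{E}[p_n/(1-p_n)]$; your later remarks about reciprocals suggest you intend this, but the displayed decomposition should be corrected.)
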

% To achieve the lower bound in (\ref{eq: lowerbound}), one can set $\eps = o(1)$.
From Theorems~\ref{thm: RegOpt} and~\ref{thm: RegOptTruncation}, we have two choices to achieve the lower bound in (\ref{eq: lowerbound}): use either the conservative priors with MLEs or moderately optimistic priors with truncated samples.
Since initialization steps require playing every arm $\max(2, k+1)$ times, if the number of arms $K$ is large, the Jeffreys priors or the reference prior with the truncated estimator would be a better choice.
On the other hand, if the model can contain arms with large $\alpha$, where the truncation might be problematic for small $n$, it would be better to use $\TS$ with conservative priors.

\section{Experiments}\label{sec: experiments}
In this section, we present numerical results to demonstrate the performance of $\TS$ and $\TST$, which supports our theoretical analysis.
We consider the $4$-armed bandit model $\bth_4$ with parameters given in Table~\ref{tab: 4armbandit} as an example where suboptimal arms have smaller, equal, and larger $\kappa$ compared with the optimal arm.
$\bth_4$ has $\bmu=(4.55, \,3.2,\, 2.74,\, 3)$ and infinite variance.
Further experimental results can be found in Appendix~\ref{app: Exp}.
\begin{table}[h]
\caption{$4$-armed bandit model $\bth_4$.}
\label{tab: 4armbandit}
    \begin{center}
    \begin{sc}
    \begin{tabular}{lcccc}
         & arm $1$ & arm $2$ & arm $3$  &arm $4$\\
         \toprule
        $\kappa$ & 1.3 & 1.2 & 1.3 & 1.5 \\
        $\alpha$ & 1.4 & 1.6 & 1.9 & 2.0
    \end{tabular}
    \end{sc}
    \end{center}
\end{table}

Figure~\ref{fig: overall} shows the cumulative regret for the proposed policies with various choices of parameters $k$ on the prior.
The solid lines denote the averaged cumulative regret over 100,000 independent runs of priors that can achieve the optimal lower bound in (\ref{eq: lowerbound}), whereas the dashed lines denote that of priors that cannot.
The green dotted line denotes the problem-dependent lower bound and shaded regions denote a quarter standard deviation.

In Figures~\ref{fig: all_k} and~\ref{fig: all_k_remain}, we investigate the difference between $\TS$ and $\TST$ with the same $k$.
The solid lines denote the averaged cumulative regret over 100,000 independent runs.
The shaded regions and dashed lines show the central $99\%$ interval and the upper $0.05\%$ of regret.
\paragraph{The Jeffreys prior ($k=0$)}
In Figure~\ref{fig: STS}, the Jeffreys prior seems to have a larger order of the regret compared with priors $k=1,3$, which performed the best in this setting.
As Theorem~\ref{thm: RegOptTruncation} states, its performance improves under $\TST$, which shows a similar performance to that of $k=1,3$.
\begin{figure*}[!t]
    \centering
    \subfigure[Cumulative regret of $\TS$ with various $k$]{\label{fig: STS}\includegraphics[width=0.4\textwidth, height=0.4\textwidth]{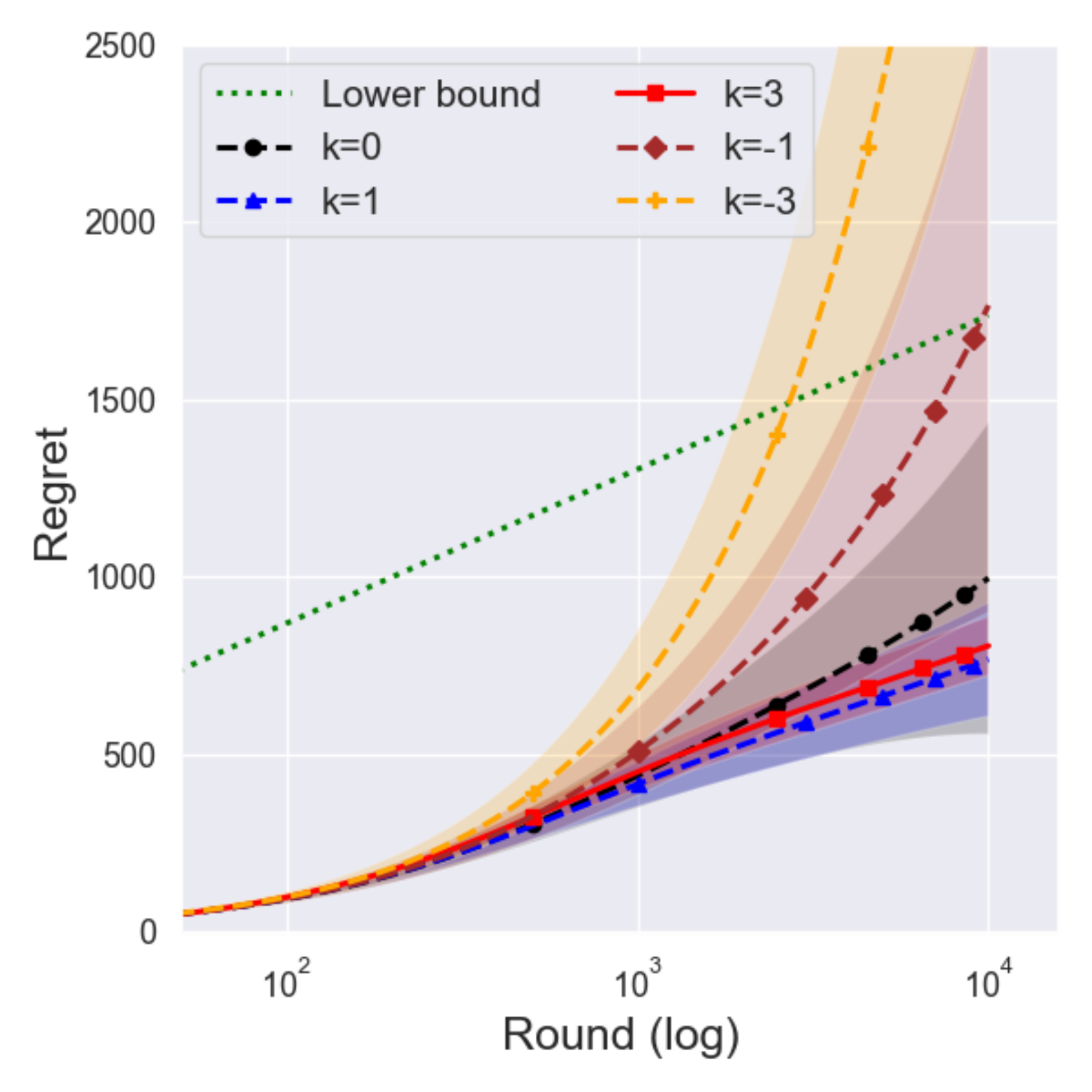}}
    \quad
    \subfigure[Cumulative regret of $\TST$ with various $k$]{\label{fig: STST}\includegraphics[width=0.4\textwidth, height=0.4\textwidth]{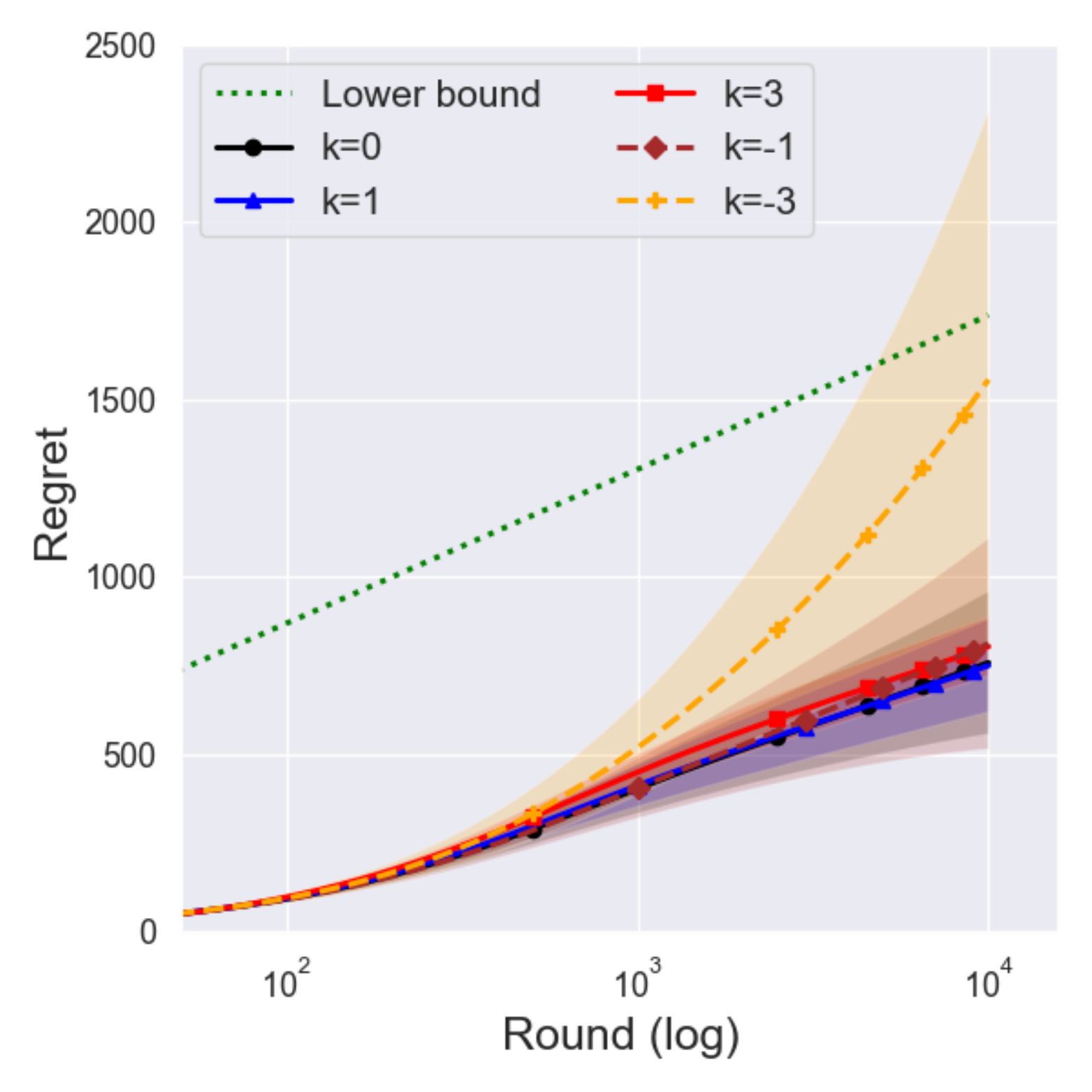}}
\caption{The solid lines denote the averaged cumulative regret over 100,000 independent runs of priors that can achieve the optimal lower bound in (\ref{eq: lowerbound}).
The dashed lines denote that of priors that cannot achieve the optimal lower bound in (\ref{eq: lowerbound}).
The shaded regions show a quarter standard deviation.
The green dotted line denotes the problem-dependent lower bound based on Lemma~\ref{lem: KLinf}.}
\label{fig: overall}
\end{figure*}
\begin{figure*}[!t]
\centering
    \subfigure[The Jeffreys prior $k=0$]{\label{fig: k0}\includegraphics[width=0.32\textwidth, height=0.3\textwidth ]{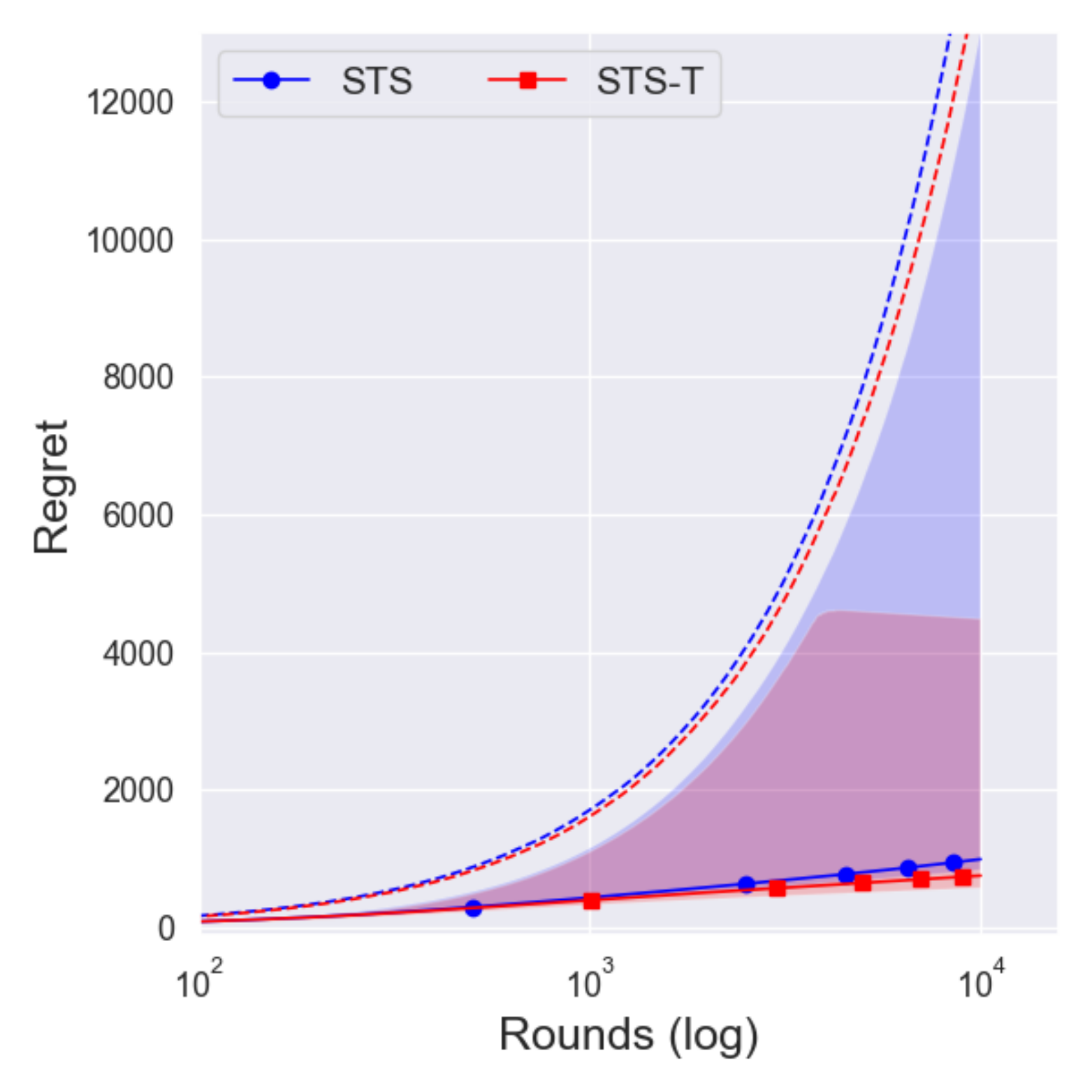}}
     \hfill
    \subfigure[The reference prior $k=1$]{\label{fig: k1}\includegraphics[width=0.32\textwidth, height=0.3\textwidth]{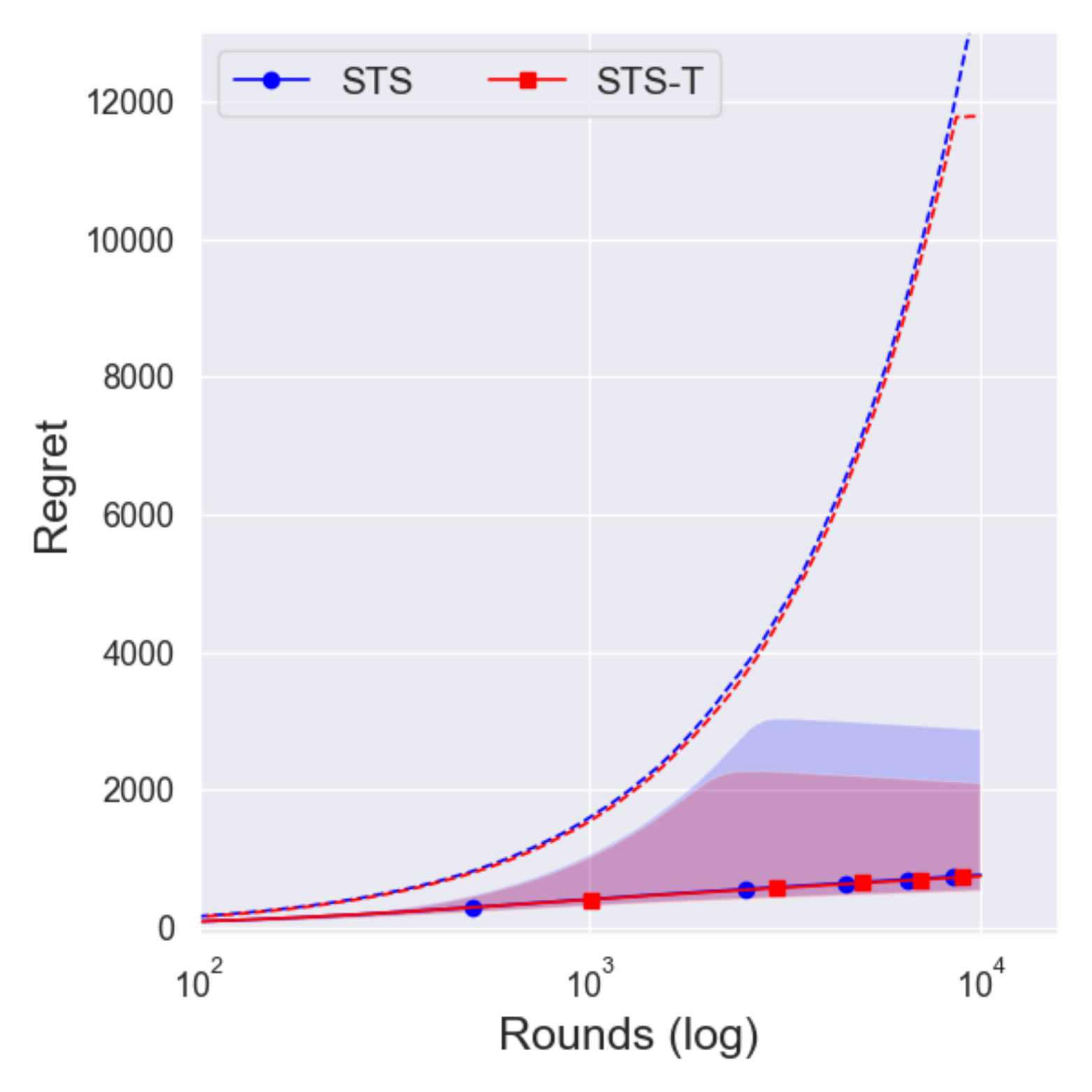}}
    \hfill
    \subfigure[Prior with $k=3$]{\label{fig: k3}\includegraphics[width=0.32\textwidth, height=0.3\textwidth]{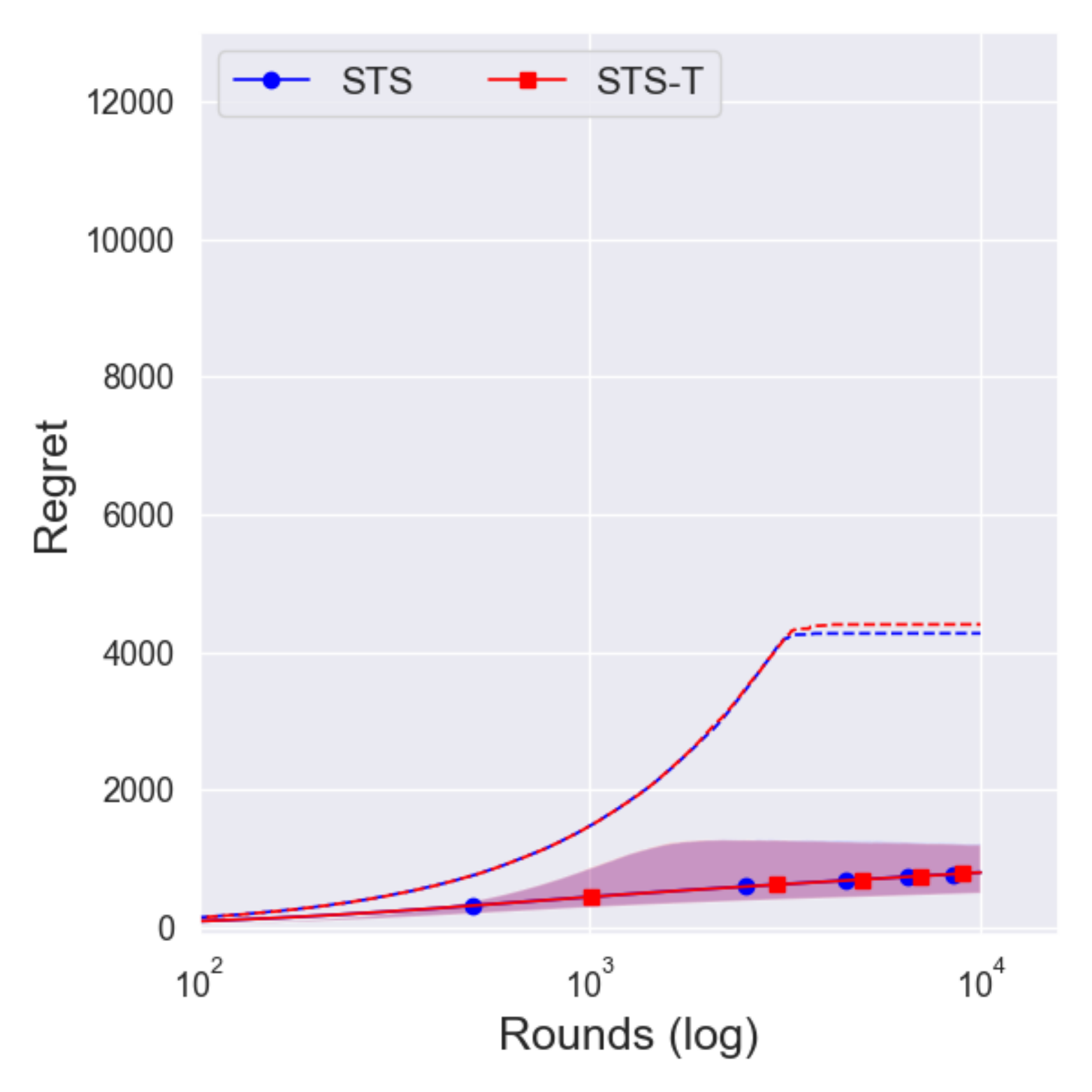}} 
    % \\
    % \subfigure[Prior with $k=-1$]{\label{fig: k-1}\includegraphics[width=0.35\textwidth, height=0.31\textwidth]{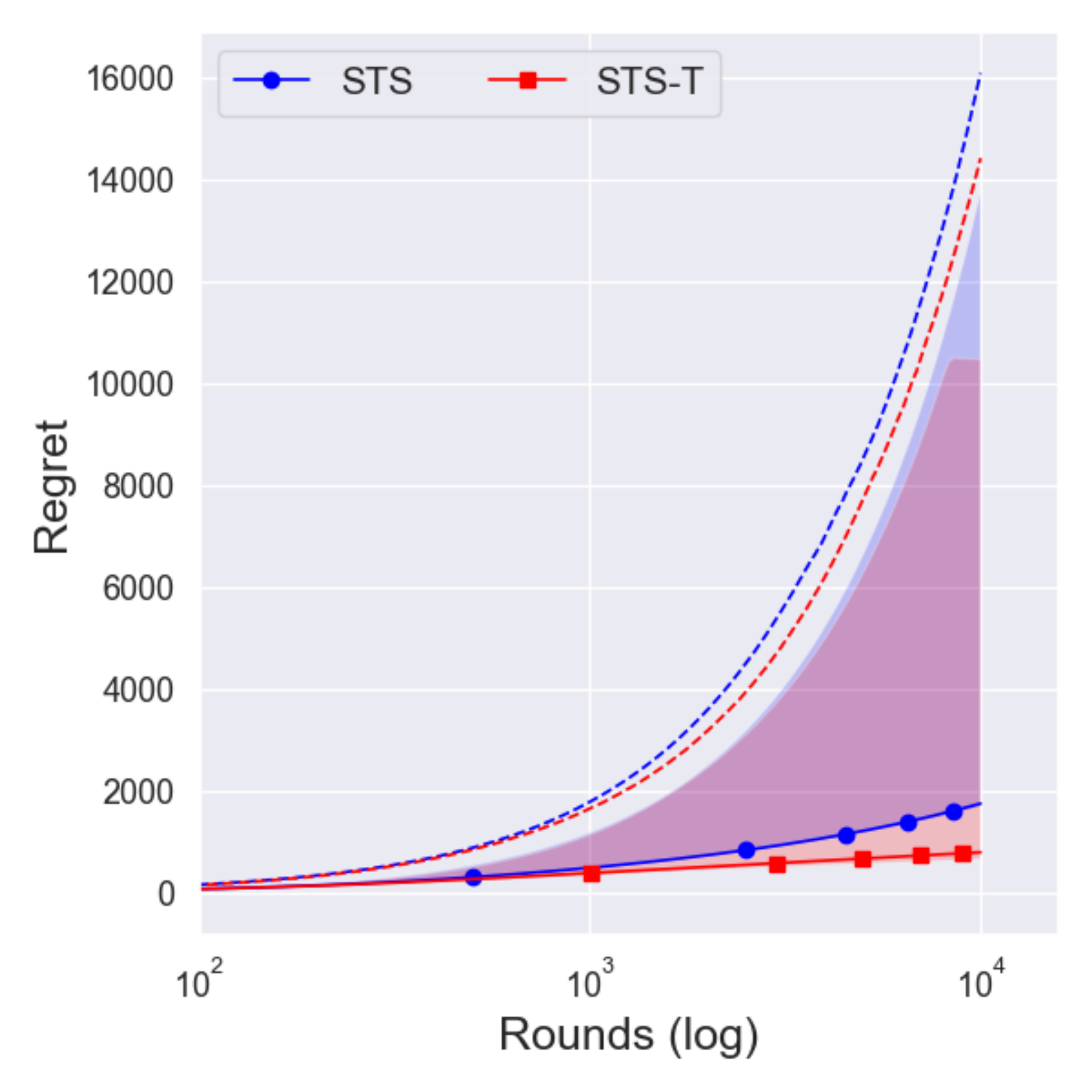}}
    %  \quad
    % \subfigure[Prior with $k=-3$]{\label{fig: k-3}\includegraphics[width=0.35\textwidth, height=0.31\textwidth]{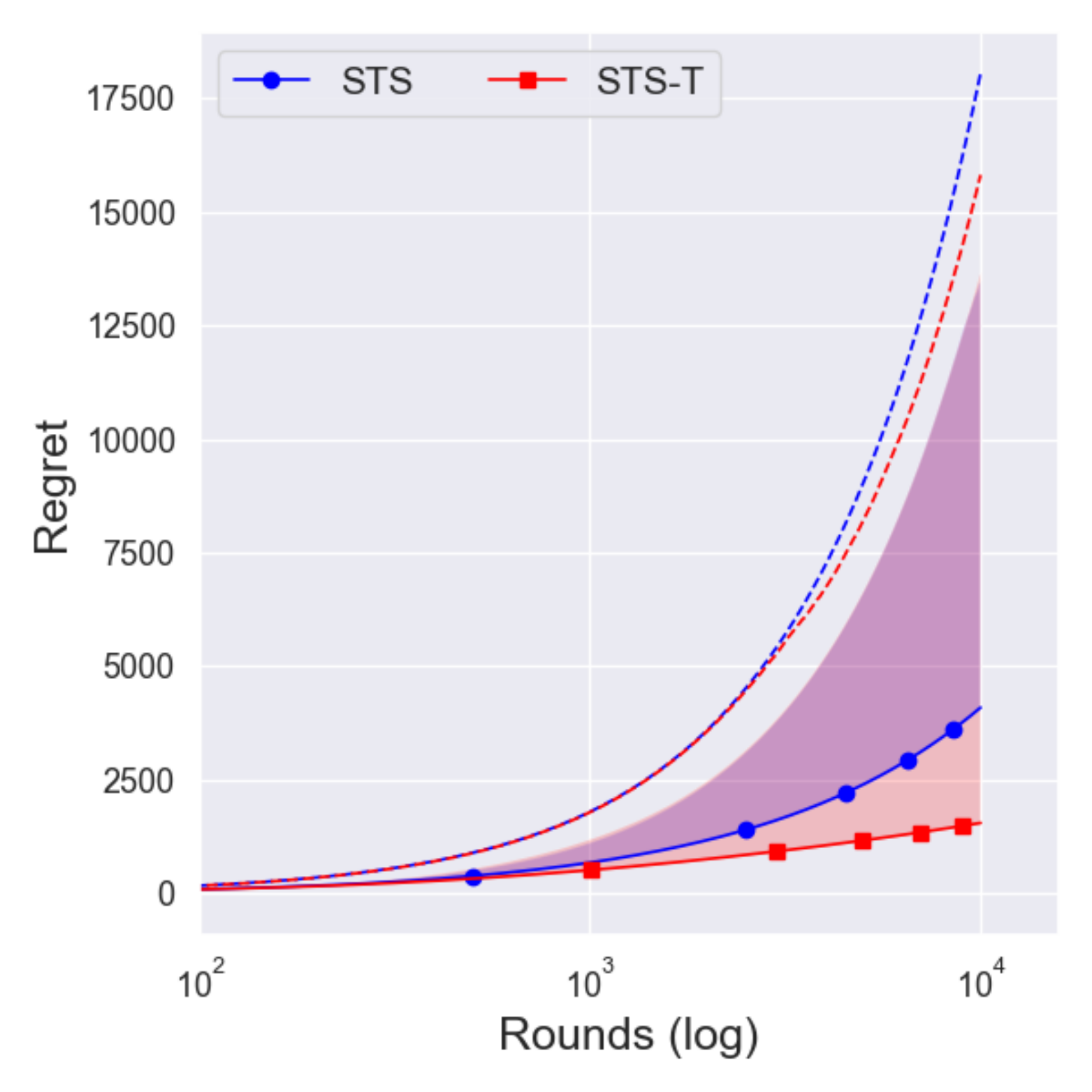}}
\caption{The solid lines denote an averaged regret over independent 100,000 runs. The shaded regions and dashed lines show the central $99\%$ interval and the upper $0.05\%$ of the regret, respectively.}
\label{fig: all_k}
\end{figure*}
Figure~\ref{fig: k0} illustrates the possible reason for the improvements, where the central $99\%$ interval of the regret noticeably shrank under $\TST$.
Since the suboptimality of $\TS$ with the Jeffreys prior ($k=0$) is due to an extreme case that induces a polynomial regret with small probability, this kind of shrink contributes to decreasing the expected regret of $\TST$ with the Jeffreys prior. 

\paragraph{The reference prior ($k=1$)}
The reference prior showed a similar performance to the asymptotically optimal prior $k=3$, although it was shown to be suboptimal for some instances under $\TS$ in Theorem~\ref{thm: RegSubOopt}.
Similarly to the Jeffreys prior~$(k=0)$, the reference prior~($k=1$) under $\TST$ has a smaller central $99\%$ interval of the regret than that under $\TS$ as shown in Figure~\ref{fig: k1}, although its decrement is comparably smaller than that of the Jeffreys prior.
This would imply that the reference prior is more conservative than the Jeffreys prior.
\begin{figure*}[!ht]


\centering
    \subfigure[Prior with $k=-1$]{\label{fig: k-1}\includegraphics[width=0.4\textwidth, height=0.38\textwidth]{img/regret_4_compare_-1_995.pdf}}
     \quad
    \subfigure[Prior with $k=-3$]{\label{fig: k-3}\includegraphics[width=0.4\textwidth, height=0.38\textwidth]{img/regret_4_compare_-3_995.pdf}}
\caption{The solid lines denote an averaged regret over independent 100,000 runs. The shaded regions and dashed lines show the central $99\%$ interval and the upper $0.05\%$ of the regret, respectively.}
\label{fig: all_k_remain}
\end{figure*}
\paragraph{The conservative prior ($k=3$)}
% \textcolor{blue}{Although the prior $k=3$ is the only prior that achieves the lower bound considered in Figure~\ref{fig: STS}, it shows a similar performance to the reference prior ($k=1$) in the finite horizon.
% Such results might~stem from the conservativeness of the prior $k=3$, which plays currently suboptimal arms more frequently and requires additional initial plays ($\bn=4$).}
Interestingly, Figure~\ref{fig: k3} showed that a truncated procedure does not affect the central $99\%$ interval of the regret and even degrade the performance in upper $0.05\%$.
Notice that the upper $0.05\%$ of the regret of $k=3$ is much lower than that of $k=0,1$, which shows the stability of the conservative prior in Figure~\ref{fig: all_k}.

Since a truncation procedure was adopted to prevent an extreme case that was a problem for $k\in\mathbb{Z}_{\leq 1}$, it is natural to see that there is no difference between $\TS$ and $\TST$ with $k=3$. 
This would imply that $k=3$ is sufficiently conservative, and so the truncated procedure does not affect the overall performance.

\paragraph{Optimistic priors ($k<0$)}
In Figure~\ref{fig: STS}, one can see that the averaged regret of $k=-1$ and $k=-3$ increases much faster than that of $k=0,1,3$ under the $\TS$ policy, which illustrates the suboptimality of $\TS$ with priors $k \in \mathbb{Z}_{<0}$.

As the optimistic priors ($k<0$) showed better performance under $\TST$ in Figure~\ref{fig: overall}, we can check the effectiveness of a truncation procedure in the posterior sampling with optimistic priors.
However, as Figures~\ref{fig: k-1} and~\ref{fig: k-3} illustrate, there is no big difference in the central $99\%$ interval of the regret between $\TS$ and $\TST$ with $k=-1,-3$, which might imply that a prior with $k \in \mathbb{Z}_{<0}$ is too optimistic.
Therefore, we might need to use a more conservative truncation procedure such as the one using $\bar{\alpha}_a(n) = \max( \sqrt{n}, \ha_{a}(n))$ or $\max( \log n, \ha_{a}(n))$, which would induce a larger regret in the finite time horizon.

\section{Proof outline of optimal results}\label{sec: optimality}
In this section, we provide the proof outline of Theorem~\ref{thm: RegOpt} and Theorem~\ref{thm: RegOptTruncation}, whose detailed proof is given in Appendix~\ref{sec: Opt}.
Note that the proof of Theorem~\ref{thm: RegSubOopt} is postponed to Appendix~\ref{sec: SubOpt}.

Let us first consider good events on MLEs defined by
\begin{align*}
    \eK_{a,n}(\eps)  &:= \{ \hk_{a}(n) \in [\kappa_a, \kappa_a + \eps] \} \\
    \eA_{a,n}(\eps)  &:= \{ \ha_{a}(n) \in [\alpha_a - \eps_{a,l}(\eps), \alpha_a + \eps_{a,u}(\eps)] \} \\
    \eE_{a,n}(\eps)  &:= \eK_{a,n}(\eps) \cap \eA_{a,n}(\eps), 
\end{align*}
where $n\in\mathbb{N}$ and
\begin{equation}\label{eq: epslu}
    \eps_{a,l}(\eps) = \frac{\eps \alpha_a^2}{1+\eps \alpha_a}, \hspace{1em} \eps_{a,u}(\eps) = \frac{\eps \alpha_a^2(\kappa_a+1)}{\kappa_a - \eps\alpha_a(\kappa_a +1)}.
\end{equation}
Note that $\ba_a(n) = \ha_a(n)$ holds on $\eA_{a,n}(\eps)$ for any $n \geq \alpha_a+1$.
Here, we set $\veps_a$ to satisfy $\hmu_a \in \left[ \mu_a -\delta_a, \mu_a + \delta_a  \right]$ on $\eE_a(\eps)$ for any $\eps \leq \veps_a$.
Define an event on the sample of the optimal arm
\begin{equation*}
    \eM_{\eps}(t) := \{ \tmu_1(t) \geq \mu_1 - \eps \}.
\end{equation*}
Then, the expected regret at round $T$ can be decomposed as follows:
\begin{align*}
    \mathbb{E}[\reg (T)] &= \mathbb{E}\left[\sum_{t=1}^T \Delta_{j(t)}\right] 
   \\&= \sum_{a=2}^K \Delta_a \left( \bn + \sum_{t=\bn K+1}^T \mathbb{E}[\I [j(t)=a]] \right) 
    \\
    &\leq \Delta_{2} \sum_{t=\bn K+1}^T \bigg(\mathbb{E}\left[\I [j(t)\ne 1, \eK_{1,N_1(t)}(\eps), \eM_{\eps}^c(t)]\right] + \mathbb{E}\left[\I [j(t)\ne 1, \eK_{1,N_1(t)}^c(\eps), \eM_{\eps}^c(t)]\right] \bigg) \\
    &\quad+  \sum_{a=2}^K \Delta_a \bigg\{ \bn +  \sum_{t=\bn K+1}^T \bigg(\mathbb{E}\left[\I [j(t)=a, \eM_{\eps}(t), \eE_{a,N_a(t)}(\eps) ]\right] \\
    &\hspace{15em}+ \mathbb{E}\left[ \I [j(t)=a, \eM_{\eps}(t), \eE_{a,N_a(t)}^c(\eps)]\right]  \bigg) \bigg\},
\end{align*}
where $\eE^c$ denotes the complementary set of $\eE$.
Lemmas~\ref{lem: difficult}--\ref{lem: minor} complete the proof of Theorems~\ref{thm: RegOpt} and~\ref{thm: RegOptTruncation}, whose proofs are given in Appendix~\ref{sec: Opt}.

\begin{restatable}{lemma}{lemdifficult}\label{lem: difficult}
Under $\TS$ with $k \in \mathbb{Z}_{\geq 2}$,
\begin{equation*}
        \sum_{t=\bn K+1}^T \mathbb{E}\left[\I [j(t) \ne 1, \eK_{1,N_1(t)}^c(\eps), \eM_{\eps}^c(t)]\right] \leq \mathcal{O}(\eps^{-2}).
\end{equation*}
and under $\TST$ with $k \in \mathbb{Z}_{\geq 0}$,
\begin{equation*}
        \sum_{t=\bn K+1}^T \mathbb{E}\left[\I [j(t) \ne 1, \eK_{1,N_1(t)}^c(\eps), \eM_{\eps}^c(t)]\right] \leq \mathcal{O}(\eps^{-m}),
\end{equation*}
where $m=\max(2, 3-k)$.
\end{restatable}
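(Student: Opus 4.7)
The plan is to decompose the sum over the phase index $n = N_1(t)$ and to control each phase by combining a tail bound on $\eK^c_{1,n}(\eps)$ with a standard Thompson-sampling coupling that replaces the ``bad rounds'' count by a geometric-type quantity. Writing
\[
\sum_{t=\bn K+1}^T \I[j(t)\neq 1,\, \eK^c_{1,N_1(t)}(\eps),\, \eM^c_\eps(t)]
= \sum_{n\geq \bn} \I[\eK^c_{1,n}(\eps)]\cdot L_n,
\]
with $L_n := \#\{t\leq T: N_1(t)=n,\ j(t)\neq 1,\ \tmu_1(t)<\mu_1-\eps\}$, two facts from the paper are used: $\hk_1(n)$ and $\ha_1(n)$ are independent by~\eqref{eq: MLEdist}, and $\hk_1(n)\sim\Par(\kappa_1,n\alpha_1)$, which gives $\mathbb{P}(\eK^c_{1,n}(\eps))=\left(\tfrac{\kappa_1}{\kappa_1+\eps}\right)^{n\alpha_1}$ and hence $\sum_n\mathbb{P}(\eK^c_{1,n}(\eps))=\mathcal{O}(\eps^{-1})$.

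Second, conditional on $(\hk_1(n),\ha_1(n))$ (resp.\ on $(\hk_1(n),\ba_1(n))$ for $\TST$), the samples $(\tk_1(t),\ta_1(t))$ for $t$ with $N_1(t)=n$ are i.i.d.\ and independent of the states of the other arms, and the phase ends exactly when arm $1$ is played. A coupling with the waiting time of an i.i.d.\ Bernoulli sequence yields an estimate of the form
\[
\mathbb{E}[L_n\mid \hk_1(n),\ha_1(n)] \ \leq\ \frac{1-p_n}{p_n}, \qquad p_n := \mathbb{P}\!\left(\tmu_1\geq \mu_1-\eps\,\middle|\, \hk_1(n),\ha_1(n)\right).
\]
Combining the first two steps reduces the lemma to bounding $\sum_n \mathbb{E}\!\left[\I[\eK^c_{1,n}(\eps)]\cdot (1-p_n)/p_n\right]$.

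Third, to estimate this last sum I would analyse $p_n$ on the event $\{\hk_1(n)=\kappa^*>\kappa_1+\eps\}$ via the explicit posteriors~\eqref{eq: pdfalpha}--\eqref{eq: cdfkappa}: writing $\tk_1=\kappa^*U^{1/(n\ta_1)}$ for $U\sim U(0,1)$ recasts $\{\tmu_1<\mu_1-\eps\}$ as an event in $(U,\ta_1)$; integrating over $U$ and then against the Erlang density of $\ta_1$ (with rate $n/\ha_1$ for $\TS$, or $n/\ba_1$ for $\TST$) yields an explicit estimate of $p_n$. Taking expectation over $\ha_1\sim\InvG(n-1,n\alpha_1)$ and summing in $n$ then produces the stated orders: $\mathcal{O}(\eps^{-2})$ for $\TS$ with $k\geq 2$ and $\mathcal{O}(\eps^{-\max(2,3-k)})$ for $\TST$ with $k\geq 0$.

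The main obstacle is this last step, namely the joint tail analysis. When $\hk_1(n)$ is over-estimated, two opposite pathologies can force $\tmu_1<\mu_1-\eps$: a sampled $\ta_1$ so large that $\ta_1/(\ta_1-1)\approx 1$ pulls $\tmu_1\approx \tk_1$ below $\mu_1-\eps$; conversely a very small $\ta_1$ inflates $\tmu_1$ but requires $\ha_1$ itself to be small, which is rare by~\eqref{eq: MLEdist}. The prior index $k$ (through the Erlang shape $n-k$) and, for $\TST$, the truncation $\ba_1=\min(n,\ha_1)$ (through the Erlang rate $n/\ba_1$) are the two knobs that respectively suppress these modes, and their quantitative interplay is what produces the exponents $2$ and $\max(2,3-k)$. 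Verifying that $k\geq 2$ suffices for $\TS$ while $k\geq 0$ suffices after truncation---and tracking the exact power of $\eps$ produced by each posterior factor---is the delicate computation.
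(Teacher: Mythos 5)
Your outline reproduces the paper's architecture: the same phase decomposition over $n=N_1(t)$, the same geometric waiting-time bound $\mathbb{E}[L_n\mid\cdot]\le(1-p_n)/p_n$ (the paper's (\ref{eq: pn_decom}) with $p_n$ denoting the bad-event probability), and the same plan of bounding the conditional posterior probability given $(\hk_1(n),\alpha_{1,n})$ and then integrating against the laws in (\ref{eq: MLEdist}). Steps one and two are sound. The problem is that step three---the only place where the hypotheses $k\ge 2$ for $\TS$ and $k\ge 0$ for $\TST$, and the exponent $\max(2,3-k)$, can enter---is never executed: you assert that the integration ``produces the stated orders'' and then concede that verifying this is ``the delicate computation.'' That computation is the entire content of the lemma; note also that your observation $\sum_n\mathbb{P}(\eK^c_{1,n}(\eps))=\mathcal{O}(\eps^{-1})$ cannot by itself yield the result, since the factor $1/p_n$ inside the expectation is unbounded and must be integrated against the law of $\ha_1$ before summing.

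Concretely, the dangerous regime is $\hk_1(n)\in(\kappa_1+\eps,\mu_1-\eps)$ together with $\alpha_{1,n}\ge\alpha_1+\rho$. There the good probability is bounded below only by (a constant times) $G_k(1/\alpha_{1,n};n):=F^{\mathrm{Er}}_{n-k,\,n/\alpha_{1,n}}(\alpha_1+\rho)$, so the phase contributes $\mathbb{E}\bigl[\I[\cdot]/G_k(1/\ha_1;n)\bigr]$. Writing $z=1/\ha_1\sim\Er(n-1,n\alpha_1)$ and using $G_k(z;n)\ge e^{-n(\alpha_1+\rho)z}\,(n(\alpha_1+\rho)z)^{n-k}/\Gamma(n-k+1)$, this expectation reduces (after cancellation of the exponentials and powers of $z$) to a multiple of $\int_0^{1/(\alpha_1+\rho)}z^{k-2}e^{n\rho z}\,\dx z$, which \emph{diverges} at $z=0$ for every $k\le 1$. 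This divergence is precisely why $\TS$ requires $k\ge 2$, and why the truncation $\ba_1=\min(n,\ha_1)$---which restricts the integral to $z\ge 1/n$---rescues $k\in\{0,1\}$ at the price of an extra factor of order $n^{2-k}$ (respectively $n\log n$ for $k=1$), whence $\sum_n n^{2-k}e^{-cn\eps}=\mathcal{O}(\eps^{-(3-k)})$ and the exponent $\max(2,3-k)$. Until you carry out this integral and its truncated counterpart, your argument cannot distinguish the convergent from the divergent cases and therefore does not establish the statement. A smaller point: your ``second pathology'' (very small $\ta_1$ inflating $\tmu_1$) describes the good event $\eM_\eps(t)$, not a failure mode; the only mechanism to control is a large $\ta_1$ forced by an overestimated $\ha_1$, which is exactly what the integral above quantifies.
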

Although Lemma~\ref{lem: mainreg} contributes to the main term of the regret, the proof of Lemma~\ref{lem: difficult} is the main difficulty in the regret analysis.
We found that our analysis does not result in a finite upper bound for $\TS$ with $k\in \mathbb{Z}_{< 2}$ and designed $\TST$ to solve such problems.
\begin{restatable}{lemma}{lemmainreg}\label{lem: mainreg}
Under $\TS$ and $\TST$ with $k \in \mathbb{Z}$, it holds that for any $a \in [K]$
\begin{align*}
        \sum_{t=\bn K+1}^T \mathbb{E}&[\I [j(t)=a, \eM_{\eps}(t), \eE_{a,N_a(t)}(\eps) ]] \leq \max(0,k) + 1 + \frac{1}{\alpha_a \eps} \I[k >0] + \frac{\log T}{D_{a,k}(\eps)}.
\end{align*}
where $D_{a,k}(\eps) > 0$ is a finite problem-deterministic constant satisfying $\lim_{\eps \to 0} D_{a,k}(\eps) = \KLinf(a)$.
\end{restatable}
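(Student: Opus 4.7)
The plan is to adapt the standard posterior-tail argument for Thompson sampling to the two-dimensional Pareto posterior. The starting observation is that on $\{j(t)=a\}\cap \eM_\eps(t)$ the winning posterior sample must satisfy $\tmu_a(t) \geq \tmu_1(t) \geq \mu_1 - \eps$, so one can drop the indicator $\I[j(t)=a]$ and upper bound the summand by $\I[\tmu_a(t) \geq \mu_1 - \eps,\ \eE_{a,N_a(t)}(\eps)]$. Reindexing by $n = N_a(t)$, which takes each value at most once among the rounds in which arm $a$ is pulled, reduces the task to bounding
\[
\sum_{n \geq \bn} \mathbb{E}\!\left[\I[\tmu_a(\tau_n) \geq \mu_1 - \eps,\ \eE_{a,n}(\eps)]\right],
\]
where $\tau_n$ is the round of the $(n{+}1)$-th pull of arm $a$. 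Note that on $\eE_{a,n}(\eps)$ for $n$ past a problem-dependent threshold the truncation $\ba_a(n)=\ha_a(n)$ agrees, so the same analysis covers both $\TS$ and $\TST$.

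Next I would establish the central posterior tail estimate: on $\eE_{a,n}(\eps)$,
\[
\mathbb{P}[\tmu_a(\tau_n) \geq \mu_1 - \eps \mid \Ft] \leq C(n,k)\exp(-n\, D_{a,k}(\eps)),
\]
with $C(n,k)$ at most polynomial in $n$ and $D_{a,k}(\eps) \to \KLinf(a)$ as $\eps\to 0$. To derive it, first integrate out $\tk$ using (\ref{eq: cdfkappa}): for a fixed $\ta = \beta > 1$, the condition $\tk\beta/(\beta-1) \geq \mu_1 - \eps$ yields conditional probability $\bigl(\tfrac{\hk_a(n)\beta}{(\mu_1-\eps)(\beta-1)}\bigr)^{\beta n}\wedge 1$. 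Using $\hk_a(n) \leq \kappa_a+\eps$ on $\eE_{a,n}$ and combining with the Erlang density in (\ref{eq: pdfalpha}), the integrand takes the form $\mathrm{poly}(n)\exp(-n\, h_\eps(\beta))$ for a function $h_\eps$ reminiscent of the KL divergence. A Laplace-type evaluation identifies $D_{a,k}(\eps) = \inf_{\beta>1} h_\eps(\beta)$, and matching the saddle-point minimizer with that of the KL functional in the definition of $\KLinf(a)$ (via Lemma~\ref{lem: KLinf}) gives the required convergence.

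Having this tail estimate, the final step is the usual split at $L := \lceil \log T / D_{a,k}(\eps) \rceil$: the first $L$ terms contribute at most $\log T / D_{a,k}(\eps) + 1$, and for $n > L$ the geometric decay gives
\[
\sum_{n > L} C(n,k)\exp(-n D_{a,k}(\eps)) = O\!\left(\frac{1}{T\, D_{a,k}(\eps)}\right) = O(1).
\]
The additive constants $\max(0,k)+1$ and $\frac{1}{\alpha_a\eps}\I[k>0]$ come from two technical corrections. The first accounts for the initial window $n < \bn = \max(2, k+1)$, where the Erlang posterior is too wide for exponential concentration to apply and at most $\max(0,k)+1$ rounds are counted trivially. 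The second accounts for the boundary region $\ta \leq 1$, where $\tmu_a = \infty$ makes the constraint automatic; for conservative priors ($k>0$) the Erlang shape $n-k$ places extra mass near zero, and summing $\mathbb{P}[\ta \leq 1]$ over $n$ together with the local density of the Erlang at $\beta = 1$ yields the $\frac{1}{\alpha_a\eps}$ term.

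The main obstacle is the two-dimensional Laplace evaluation, specifically handling the singular boundary $\ta \downarrow 1$ (where $\mu(\cdot,\ta)$ diverges) while maintaining quantitative control of the polynomial prefactors so that they do not swamp the exponential rate, and verifying $\lim_{\eps\to 0} D_{a,k}(\eps) = \KLinf(a)$ uniformly in $k$ by tracking how the saddle-point $\beta^*$ approaches the minimizer of the KL functional as the thresholds $\eps_{a,l}(\eps)$, $\eps_{a,u}(\eps)$ in (\ref{eq: epslu}) vanish.
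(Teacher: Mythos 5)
Your skeleton (drop $\I[j(t)=a]$ via the winning-sample argument, reindex by $n=N_a(t)$, prove an exponential posterior-tail bound, split at $n_a\approx\log T/D_{a,k}(\eps)$) matches the paper's. The gap is in the central tail estimate, which you defer to a two-dimensional Laplace/saddle-point evaluation that you yourself flag as the main obstacle and do not carry out. The paper never performs such an evaluation: it observes that since $\tk_a\in(0,\hk_a]$ and $\hk_a\le\kappa_a+\eps$ on $\eE_{a,n}(\eps)$, the event $\{\tmu_a\ge\mu_1-\eps\}$ forces $\ta_a\le \frac{\mu_1-\eps}{\mu_1-\eps-\hk_a}\le\alpha_a-\eta_a(\eps)$ (after simply discarding the nonnegative correction coming from the conditional law of $\tk_a$), so the whole problem collapses to a \emph{one-dimensional} lower-tail bound for $\Er(n-k,n/\alpha_{a,n})$. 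That tail is then handled by rewriting the Erlang variable as a $\chi^2_{2(n-k)}$ variable and applying a Cram\'er/Chernoff bound with rate function $x-1-\log x$, which yields the explicit $D_{a,k}(\eps)$ of (\ref{eq: defDa}); the limit $D_{a,k}(\eps)\to\KLinf(a)$ is then a direct computation using $\eta_a(\eps)\to(\alpha_a-1)\frac{\mu_1-\mu_a}{\mu_1-\kappa_a}$ and Lemma~\ref{lem: KLinf}, with no saddle-point matching required. Your route could in principle work, but as written it leaves the existence and identification of $D_{a,k}(\eps)$ unproved, and it also introduces polynomial prefactors $C(n,k)$ that the paper's argument avoids entirely.

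A second, smaller problem is that your accounting of the two additive constants is wrong. The $\frac{1}{\alpha_a\eps}\I[k>0]$ term does not come from the boundary region $\ta_a\le 1$: that region gives $\tmu_a=\infty$ and is already absorbed into the event $\{\ta_a\le\alpha_a-\eta_a(\eps)\}$, whose probability decays exponentially in $n$ at an $\eps$-independent rate, so summing it contributes $\mathcal{O}(1)$, not $\mathcal{O}(1/(\alpha_a\eps))$. In the paper the term arises because for $k>0$ the Erlang shape is $n-k$ while the rate scales with $n$, and one needs $n\ge\frac{1}{\alpha_a\eps}+k+1$ to guarantee $\frac{n}{n-k}\bigl(\frac{1}{\alpha_a}+\eps\bigr)\le\frac{1}{\alpha_a}+(k+1)\eps$ before the Chernoff bound applies; the rounds below that threshold are counted trivially, which is exactly where $\frac{1}{\alpha_a\eps}$ and part of $\max(0,k)+1$ come from.
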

Since large $k$ yields a more conservative policy and it requires additional initial plays of every arm, large $k$ might induce larger regret for a finite time horizon $T$, which corresponds to the component of the regret discussed in Lemma~\ref{lem: mainreg}.
Thus, this lemma would imply that the policy has to be conservative to some extent and being overly conservative would induce larger regrets in a finite time.
\begin{restatable}{lemma}{lemkordaused}\label{lem: korda}
Under $\TS$ and $\TST$ with $k \in \mathbb{Z}_{\geq 0}$,
\begin{equation*}
    \sum_{t=\bn K+1}^T \mathbb{E}\left[\I [j(t)\ne 1, \eK_{1,N_1(t)}(\eps), \eM_{\eps}^c(t)]\right] \leq \mathcal{O}(\eps^{-1}).
\end{equation*}
\end{restatable}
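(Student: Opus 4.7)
The plan is to use the classical Agrawal-Goyal style decomposition, adapted to the two-parameter Pareto posterior, reducing the bound to an anti-concentration estimate for the posterior sample $\tmu_1$ of the optimal arm. Let $\tau_n$ denote the round at which arm~1 is played for the $(n+1)$-th time, so that arm 1's posterior is frozen throughout $t \in (\tau_{n-1}, \tau_n]$ and $j(t) \ne 1$ holds automatically on the open interval $(\tau_{n-1}, \tau_n)$. By the standard TS reduction (as in Agrawal-Goyal and Korda-Kaufmann-Munos), the expected number of rounds $t \in (\tau_{n-1}, \tau_n)$ with $\eM_\eps^c(t)$ is bounded, conditionally on $\mathcal{F}_{\tau_{n-1}+1}$, by $(1 - p_{1,n}(\eps))/p_{1,n}(\eps)$, where
\begin{equation*}
    p_{1,n}(\eps) := \mathbb{P}\bigl(\tmu_1 \geq \mu_1 - \eps \,\big|\, \mathcal{F}_{\tau_{n-1}+1}\bigr)
\end{equation*}
is an $\mathcal{F}_{\tau_{n-1}+1}$-measurable function of $(\hk_1(n), \ha_1(n))$ (respectively of $(\hk_1(n), \ba_1(n))$ for $\TST$). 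Hence it suffices to prove
\begin{equation*}
    \sum_{n \geq \bn} \mathbb{E}\!\left[\I[\eK_{1,n}(\eps)] \left(\tfrac{1}{p_{1,n}(\eps)} - 1\right)\right] = \mathcal{O}(\eps^{-1}).
\end{equation*}

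To lower bound $p_{1,n}(\eps)$ on $\eK_{1,n}(\eps)$, I would use the posterior sampling recipe: draw $\ta_1 \sim \Er(n-k, n/\ha_1(n))$ (or $n/\ba_1(n)$ for $\TST$), then $\tk_1 = \hk_1(n) U^{1/(n\ta_1)}$ with $U \sim U(0,1)$. Because $\mu(\kappa,\alpha) = \kappa\alpha/(\alpha-1)$ is increasing in $\kappa$ and decreasing in $\alpha$ for $\alpha>1$, and since $\hk_1(n) \geq \kappa_1$ on $\eK_{1,n}(\eps)$, one can identify a region $\ta_1 \in (1,\alpha_1]$, $\tk_1 \geq \hk_1(n)(1-\delta)$ with $\delta = \Theta(\eps)$ on which $\tmu_1 \geq \mu_1 - \eps$; the event $\ta_1 \leq 1$ contributes additionally, since then arm~1 is played automatically by the algorithm and $\tmu_1=\infty$. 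I would then split according to whether $\ha_1(n) \in \eA_{1,n}(\eps)$: on the good event, the Erlang has $\Omega(1)$ mass on $(1,\alpha_1]$, and the conditional power-law CDF $\mathbb{P}(\tk_1 \geq \hk_1(n)(1-\delta) \mid \ta_1 = \beta) = 1 - (1-\delta)^{n\beta}$ is $\Omega(1)$ once $n\delta = \Omega(1)$, giving $p_{1,n}(\eps) = \Omega(1)$ for $n$ past an $\mathcal{O}(\eps^{-1})$ threshold; on the complement $\eA_{1,n}^c(\eps)$, a large-deviations estimate for $\ha_1(n) \sim \InvG(n-1, n\alpha_1)$ provides an exponentially small factor that renders the expectation summable even under the crudest $p_{1,n}(\eps) \geq c\eps$ lower bound that the Erlang density affords for small $n$.

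The main obstacle is constructing the anti-concentration bound so that the full sum telescopes to $\mathcal{O}(\eps^{-1})$. Unlike one-parameter TS analyses, the $(\kappa,\alpha)$-region forcing $\tmu_1 \geq \mu_1 - \eps$ must be handled by two nested samplings with different concentration rates, and the thinnest ingredient sets the overall rate. The $\eps^{-1}$ factor is unavoidable: it arises from the $\mathcal{O}(1/\eps)$ early rounds, where the Erlang marginal of $\ta_1$ (or the conditional power-law of $\tk_1$) is too diffuse to place $\Omega(1)$ mass on an $\eps$-wide favorable window, so only a linear-in-$\eps$ lower bound is available. The assumption $k \geq 0$ is precisely what prevents the Erlang shape $n-k$ from being non-positive at round $\bn=\max(2,k+1)$ and keeps the density well-defined from the start; this is why the present lemma covers both $\TS$ and $\TST$ under the common range $k \in \mathbb{Z}_{\geq 0}$, even though the companion term treated in Lemma~\ref{lem: difficult} requires the truncation of $\TST$ when $k \in \{0,1\}$.
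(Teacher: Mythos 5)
Your reduction to $\sum_{n\ge\bn}\mathbb{E}\bigl[\I[\eK_{1,n}(\eps)]\bigl(\tfrac{1}{p_{1,n}(\eps)}-1\bigr)\bigr]$ is the standard geometric-series step and is the same device the paper uses elsewhere (it is exactly (\ref{eq: pn_decom}) in the proof of Lemma~\ref{lem: difficult}). The gap is in the next step: for $\TS$ with $k\in\{0,1\}$ — which this lemma must cover — that sum is not $\mathcal{O}(\eps^{-1})$; each summand is in fact infinite. There is no uniform lower bound $p_{1,n}(\eps)\ge c\eps$: conditionally on $\ha_1(n)=M$ the favorable event requires $\ta_1\lesssim\alpha_1+\rho$ with $\ta_1\sim\Er(n-k,n/M)$, so $p_{1,n}(\eps)\asymp\gamma(n-k,n(\alpha_1+\rho)/M)/\Gamma(n-k)\asymp (C_n/M)^{n-k}$ as $M\to\infty$. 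Writing $z=1/\ha_1(n)\sim\Er(n-1,n\alpha_1)$, whose density near $0$ is $\asymp z^{n-2}$, the expectation of $1/p_{1,n}$ over the region $z\to 0$ behaves like $\int_0 z^{n-2}\,z^{-(n-k)}\,\dx z=\int_0 z^{k-2}\,\dx z$, which diverges for $k\le 1$; and since $\hk_1(n)$ is independent of $\ha_1(n)$, inserting $\I[\eK_{1,n}(\eps)]$ only multiplies this by a constant close to $1$ rather than supplying the exponentially small factor you invoke. (The tail probability $\mathbb{P}[\ha_1(n)\ge M]$ decays only polynomially in $1/M$, not exponentially.) This is precisely the divergence the paper isolates at (\ref{eq: integral})--(\ref{eq: intgrslt}), and it is the reason $\TST$ exists at all; your appeal to it for $\TS$ with the Jeffreys and reference priors cannot be repaired within the $\mathbb{E}[1/p]$ framework. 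A secondary issue: even on $\eA_{1,n}(\eps)$ the Erlang mass on the favorable window $(1,\alpha_1+\rho]$ with $\rho=\Theta(\eps)$ need not be $\Omega(1)$, since $\ha_1(n)$ may overestimate $\alpha_1$ by $\eps_{1,u}(\eps)=\Theta(\eps)$ with a larger constant than $\rho$, putting the target $\Theta(\eps\sqrt{n})$ standard deviations below the posterior mean.

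The paper avoids all of this by never forming $\mathbb{E}[1/p_{1,n}]$. It uses Lemma~\ref{lem: tmusmall} to bound $\I[\eK_{1,n}(\eps)]\,\mathbb{P}[\tmu_1(t)\le\mu_1-\eps\mid\Ft]$ by a convex combination whose weight $\Gamma(n-k,\tfrac{n}{\alpha_{1,n}}\xi)/\Gamma(n-k)$ is, for $k\ge 0$, dominated via the monotonicity (\ref{eq: CDF_dec}) by the corresponding $k=0$, known-scale quantity $\Gamma(n,\tfrac{n}{\ha_1}\xi)/\Gamma(n)$; after integrating out $(\hk_1,\ha_1)$ this reduces the whole term to the one-parameter Pareto model (a one-dimensional exponential family with Jeffreys prior), for which the Korda--Kaufmann--Munos term-(A) bound (Lemma~\ref{lem: kordaresult}) is invoked as a black box. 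That black box handles the consecutive-failure accounting by a more delicate argument than the geometric series — necessarily so, since $\mathbb{E}[1/p]$ already diverges in the one-parameter Jeffreys case. This comparison is also where the hypothesis $k\ge 0$ is actually used (the inequality between incomplete-gamma ratios reverses for $k\le -1$); it is not about keeping the Erlang shape positive, which the initialization $\bn=\max(2,k+1)$ guarantees for every $k\in\mathbb{Z}$.
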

The key to Lemma~\ref{lem: korda} is to convert the term on $\tmu_1(t)$, $\eM_\eps(t)$, to a term on $\ta_1(t)$.
Since $\mu(\ka)=\infty$ holds for $\alpha \leq 1$, $\tmu_1=\mu(\tk_1, \ta_1)$ becomes infinity regardless of the value of $\tk_1$ if $\ta_1 \leq 1$ holds, which implies $\mathbb{P}[\eM_\eps^c(t), \ta_1(t)\leq 1] = 0$.
Therefore, it is enough to consider the case where $\ta_1(t) > 1$ holds to prove Lemma~\ref{lem: korda}.
Although density functions of $\ta_1$ under $\TS$ and $\TST$ are different, conditional CDFs of $\tk_1$ given $\alpha_1=\ta_1$ are the same, which is given in (\ref{eq: cdfkappa}) as
\begin{equation*}
    \mathbb{P}[\tk_1 \leq x | \Ft,\ta_1=\alpha_1 ] = \left( \frac{x}{\hk_1(N_1(t))} \right)^{\ta_1 N_1(t)}.
\end{equation*}
Therefore, for sufficiently large $N_1(t)$ and $\ta_1(t)~>~1$, $\tk_1(t)$ will concentrate on $\hk_1(N_1(t))$ with high probability, which is close to its true value $\kappa_1$ under the event $\{\eK_{1, N_1(t)}(\eps)\}$.
Thus, $\tmu_1 = \frac{\tk_1 \ta_1}{\ta_1 -1}\geq~\frac{\kappa_1 \ta_1}{\ta_1 -1} = \mu(\kappa_1, \ta_1)$ holds with high probability, which implies that $\mathbb{P}[\eK_{1,N_1(t)}(\eps), \eM_{\eps}^c(t)| \Ft]$ can be roughly bounded by $\mathbb{P}[\eK_{1,N_1(t)}(\eps), \ta_1(t) \geq c| \Ft]$ for some problem-dependent constants $c>1$.
Since $\eK_1$ is deterministic given $\Ft$, we have 
\begin{align*}
    \mathbb{P}[\eK_{1,N_1(t)}(\eps), \,&\ta_1(t) \geq c| \Ft] = \I[\eK_{1,N_1(t)}(\eps)]\mathbb{P}[ \ta_1(t) \geq c| \Ft],
\end{align*}
which implies $\tmu_1(t)$ is mainly determined by the value of $\ta_1(t)$ under the event $\{\eK_{1, N_1(t)}(\eps)\}$ for both policies.
In such cases, $\TS$ and $\TST$ behave like TS in the Pareto distribution with a known scale parameter, where $\tmu_1(t) := \mu(\kappa_1, \ta_1(t))$ for $t \in \mathbb{N}$.
Here, the Pareto distribution with the known scale parameter belongs to the one-dimensional exponential family, where \citet{KordaTS} showed the optimality of TS with the Jeffreys prior.
Since the posterior of $\alpha$ under the Jeffreys prior is given as the Erlang distribution with shape $N_1(t)+1$ in the one-parameter Pareto model, we can apply the results by \citet{KordaTS} to prove Lemma~\ref{lem: korda} by using some properties of the Erlang distribution such as (\ref{eq: CDF_dec}).

\begin{restatable}{lemma}{lemminor}\label{lem: minor}
Under $\TS$ and $\TST$ with $k \in \mathbb{Z}$, it holds that for any $a \ne 1$
\begin{equation*}
    \sum_{t=\bn K+1}^T \mathbb{E}\left[\I [j(t)=a, \eE_{a,N_a(t)}^c(\eps)]\right] \leq \mathcal{O}\left(\eps^{-2}\right).
\end{equation*}
\end{restatable}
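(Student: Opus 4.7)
The plan is to convert the time-indexed sum to a sample-indexed sum and then apply standard concentration of the maximum likelihood estimators in (\ref{eq: MLEdist}). Since $N_a(t)$ increments by exactly one each time $j(t)=a$, the values of $N_a(t)$ on the event $\{j(t)=a\}$ run through $\{\bn, \bn+1, \ldots, N_a(T)\}$ without repetition. Hence
\begin{equation*}
    \sum_{t=\bn K+1}^T \I[j(t)=a, \eE_{a,N_a(t)}^c(\eps)] \leq \sum_{n=\bn}^{T} \I[\eE_{a,n}^c(\eps)],
\end{equation*}
and taking expectations reduces the lemma to showing $\sum_{n \geq \bn} \mathbb{P}[\eE_{a,n}^c(\eps)] = \mathcal{O}(\eps^{-2})$. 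A union bound further splits $\mathbb{P}[\eE_{a,n}^c] \leq \mathbb{P}[\eK_{a,n}^c] + \mathbb{P}[\eA_{a,n}^c]$, so I will handle the scale and shape tails separately.

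For the scale tail, (\ref{eq: MLEdist}) gives $\hk_a(n) \sim \Par(\kappa_a, n\alpha_a)$, so $\hk_a(n) \geq \kappa_a$ almost surely and the only one-sided event to bound is
\begin{equation*}
    \mathbb{P}[\eK_{a,n}^c(\eps)] = \mathbb{P}[\hk_a(n) > \kappa_a + \eps] = \left(\frac{\kappa_a}{\kappa_a+\eps}\right)^{n\alpha_a}.
\end{equation*}
Summing this geometric series in $n$ yields $\sum_n \mathbb{P}[\eK_{a,n}^c(\eps)] \leq \left(1-(\kappa_a/(\kappa_a+\eps))^{\alpha_a}\right)^{-1} = \mathcal{O}(\eps^{-1})$, since the denominator is $\Theta(\eps)$ for small $\eps$.

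For the shape tail, (\ref{eq: MLEdist}) gives $\ha_a(n) \sim \InvG(n-1, n\alpha_a)$, equivalently $1/\ha_a(n) \sim \Gamma(n-1, n\alpha_a)$, which equals in law the sum of $n-1$ independent exponentials with rate $n\alpha_a$ and therefore has mean $(n-1)/(n\alpha_a) = 1/\alpha_a + \mathcal{O}(1/n)$. From (\ref{eq: epslu}) both thresholds $\eps_{a,l}(\eps)$ and $\eps_{a,u}(\eps)$ are $\Theta(\eps)$ as $\eps \to 0$, so each of the tail events $\{\ha_a(n) \leq \alpha_a - \eps_{a,l}\}$ and $\{\ha_a(n) \geq \alpha_a + \eps_{a,u}\}$ translates into a $\Theta(\eps)$ deviation of $1/\ha_a(n)$ from its mean. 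A standard Chernoff/Bernstein bound for the Gamma distribution (whose log moment generating function is quadratic in the deviation for small deviations) then gives $\mathbb{P}[\eA_{a,n}^c(\eps)] \leq 2\exp(-c_a n \eps^2)$ for some $n$-free constant $c_a>0$, and summing over $n$ contributes $\mathcal{O}(\eps^{-2})$.

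Combining the two contributions yields $\sum_n \mathbb{P}[\eE_{a,n}^c(\eps)] = \mathcal{O}(\eps^{-1} + \eps^{-2}) = \mathcal{O}(\eps^{-2})$, which finishes the proof. The result holds uniformly for both $\TS$ and $\TST$ and for every $k \in \mathbb{Z}$ since the bound concerns only the distribution of the rewards of arm $a$, not the sampling procedure. No genuine obstacle is anticipated in this lemma; the only mild care needed is to extract an $n$-free quadratic Chernoff rate from the Gamma tails, which follows from the sub-exponential structure of sums of independent exponentials.
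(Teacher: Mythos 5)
Your proposal is correct and follows essentially the same route as the paper: convert the time-indexed sum to a sum over $n=N_a(t)$ (each value occurring at most once), split $\eE_{a,n}^c(\eps)$ into the scale and shape tails, bound the scale tail via the exact law $\hk_a(n)\sim\Par(\kappa_a,n\alpha_a)$ and sum the geometric series to get $\mathcal{O}(\eps^{-1})$, and bound the shape tail by a subexponential concentration rate $e^{-c_a n\eps^2}$ whose sum gives $\mathcal{O}(\eps^{-2})$. The only cosmetic difference is that you invoke the marginal law $\ha_a(n)\sim\InvG(n-1,n\alpha_a)$ directly, whereas the paper's Lemma~\ref{lem: eAbound} intersects the shape event with $\eK_{a,n}(\eps)$ and applies Bernstein's inequality to the i.i.d.\ exponentials $\log(r_{a,s}/\kappa_a)$; both yield the same bound.
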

Lemma~\ref{lem: minor} controls the regret induced when estimators of the played arm are not close to their true parameters, which is not difficult to analyze as in the usual analysis of TS.
In fact, the proof of this lemma is straightforward since the upper bounds of $\mathbb{P}[\eK_a^c]$ and $\mathbb{P}[\eK_a, \eA_a^c]$ can be easily derived based on the distributions of $\hk_a$ and $\ha_a$ in (\ref{eq: MLEdist}).

\section{Conclusion}\label{sec: conclusion}
We considered the MAB problems under the Pareto distribution that has a heavy tail and follows the power-law.
While most previous research on TS has focused on one-dimensional or light-tailed distributions, we focused on the Pareto distribution characterized by unknown scale and shape parameters.
By sequentially sampling parameters via their marginalized and conditional posterior distributions, we can realize an efficient sampling procedure.
We showed that TS with the appropriate choice of priors achieves a problem-dependent optimal regret bound in such a setting for the first time.
Although the Jeffreys prior and the reference prior are shown to be suboptimal under the direct implementation of TS, we showed that they can achieve the optimal regret bound if we add a truncation procedure.
Experimental results support our theoretical results, which show the optimality of conservative priors and the effectiveness of the truncation procedure for the Jeffreys prior and the reference prior.

\section*{Acknowledgement}
JL was supported by JST SPRING, Grant Number JPMJSP2108.
CC and MS were supported by the Institute for AI and Beyond, UTokyo.

\bibliography{ref}
\bibliographystyle{plainnat}

\newpage
\appendix

\section{Notations}
Tables~\ref{tab: 1}--\ref{tab: 4} summarize the symbols used in this paper.

\begin{table}[!ht]
\caption{Notations for the bandit problem.}
\label{tab: 1}
\centering
\begin{tabular}{ll}
    \toprule
      Symbol  &  Meaning\\
     \midrule
     $K$ & the number of arms. \\
     $T$ & time horizon. \\ 
     $j(t)$ & the index of the played arm at round $t$. \\
     $k\in\mathbb{Z}$ & prior parameter, see Section~\ref{sec: TS} for details. \\
     $\bn =\max(2, k+1)$ & initial plays to avoid improper posteriors. \\
     $N_a(t)$ & the number of playing arm $a$ until round $t$.\\
     $r_{a,n}$ & $n$-th reward generated from the arm $a$.\\
     $\mu(\theta)$ &  the expected value of the random variable following $\Par(\theta)$.\\
     $\mu_a = \mu(\theta_a)$ & the expected rewards of arm $a$. \\
     $\Delta_a$ & sub-optimality gap of arm $a$.\\
     $\delta_a = \frac{\Delta_a}{2}$ for $a\ne 1$ & a half of sub-optimality gap of arm $a$.\\
     $\delta_1 := \min_{a\ne 1} \delta_a$ & defined as the minimum of sub-optimality gap.\\
     $\Ft=(j(s),r_{j(s),N_{j(s)}})_{s=1}^{t-1}$ & the history until round $t$. \\
     $\mathbb{P}_t[\cdot]:=\mathbb{P}[\cdot | \Ft]$ & conditional probability given $\Ft$.\\
     $g_a(c, \alpha)$ & KL-divergence from $\Par\left( \frac{\kappa_a}{c},\alpha \right)$ to $\Par(\kappa_a, \alpha_a)$ for $c \geq 1$.\\
     $h_a(c,\bmu) = h_a(c)$ & the upper bound of $\alpha$ satisfying $\mu\left( \frac{\kappa_a}{c},\alpha \right) \geq \mu_1$ for $c\geq 1$. \\
\end{tabular}
\end{table}
\begin{table}[!ht]
\caption{Notations for probability distributions and estimators}
\label{tab: 2}
\centering
\resizebox{\columnwidth}{!}{\begin{tabular}{ll}
    \toprule
      Symbol  &  Meaning\\
     \midrule
    $\Par(\ka)$ & Pareto distribution with the scale $\kappa >0$ and shape $\alpha >0$. \\
      $f_{\ka}^{\mathrm{Pa}}(x)$ & density function of $\Par(\ka)$.\\
      $\Er(s,\beta)$ & Erlang distribution with the shape $s >0$ and rate $\beta>0$. \\
      $f_{s,\beta}^{\mathrm{Er}}(x)$ &  density function of $\Er(s,\beta)$. \\
      $F_{s,\beta}^{\mathrm{Er}}(x)$ &  CDF of $\Er(s,\beta)$ evaluated at $x>0$. \\
        $\InvG(s, \beta)$ & Inverse Gamma distribution with shape $s>0$ and scale $\beta>0$. \\
        $F_n(x)$ & CDF of the chi-squared distribution with $n$ degree of freedom.\\
      $\Gamma(s)$ & Gamma function. \\
      $\gamma(s,x)$ & the lower incomplete gamma function.\\
      $\Gamma(s,x)$ & the upper incomplete gamma function.\\
      \midrule
      $\hk_a(n),\,\ha_a(n)$ & MLEs of the scale and shape parameter of arm $a$ after $n$ observations, defined in (\ref{eq: MLEdist}). \\
      $\tk_a(t),\,\ta_a(t)$ & sampled parameters at round $t$ from posterior  distribution in (\ref{eq: cdfkappa}) and (\ref{eq: pdfalpha}).\\
      $\ba_a(n)=\max(\ha_a(n), n)$ & truncated estimator of the shape parameter. \\
      $\alpha_{a,n}$ & a temporary notation that can be replaced by both $\ha_a(n)$ ($\TS$) and $\ba_a(n)$ ($\TST$). \\
      $\hmu_a(n) = \mu(\hk_a(n),\ha_a(n))$ & computed mean rewards by the MLEs after $n$ observation.\\
      $\tmu_a(t) = \mu(\tk_a(t),\ta_a(t))$ & computed mean rewards by sampled parameters $\tk_a(t)$ and $\ta_a(t)$ at round $t$.\\
      $\theta_a=(\kappa_a, \alpha_a)$ & tuple of true parameters of arm  $a\in[K]$. \\
      $\hat{\theta}_{a,n} = (\hk_a(n), \ha_a(n))$ & tuple of MLEs of arm $a$ after $n$ observations.\\
      $\bar{\theta}_{a,n} = (\hk_a(n), \ba_a(n))$ & tuple of estimators with a truncation procedure of arm $a$ after $n$ observations. \\
      $\theta_{a,n} = (\hk_a(n), \alpha_{a,n})$ & a temporary notation that can be replaced by both $\hat{\theta}_{a,n}$ ($\TS$) and $\bar{\theta}_{a,n}$ ($\TST$).\\
\end{tabular}}
\end{table}

\begin{table}[!ht]
\caption{Notations for the regret analysis}
\label{tab: 3}
\centering
\resizebox{\columnwidth}{!}{\begin{tabular}{ll}
    \toprule
      Symbol  &  Meaning\\
     \midrule
      $D_{a,k}(\eps)$ & a function contributes to the main term of regret analysis defined in (\ref{eq: defDa}). \\
        $\eK_{a,n}(\eps)$ & an event where MLE of $\kappa$ is close to its true value at round $t$ after $n$ observations.\\
        $\eA_{a,n}(\eps)$& an event where MLE of $\alpha$ is close to its true value at round $t$ after $n$ observations.\\
        $\eE_{a,n}(\eps)$& intersection of $\eK_{a,n}(\eps)$ and $\eA_{a,n}(\eps)$. \\
        $\eM_\eps(t)$ & an event where sampled mean of the optimal arm is close to its true mean reward at round $t$. \\
        $p_n(x | \theta_{1,n})$ & probability of $\{\tmu_1(t) \leq \mu_1 - x\}$ after $n$ observation of arm $1$ given $\theta_{1,n}$.\\ 
        % $h(\mu_1, \eps, n)$ & a function that is smaller than $1$ for any $n \geq \bn$ and $\eps>0$ in (\ref{eq: Gkdef}).\\
        $G_k(x;n)$ & another expression of the CDF of the Erlang distribution in (\ref{eq: Gkdef}). \\
\end{tabular}}
\end{table}

\begin{table}[!ht]
\caption{Notations for (deterministic) constants}
\label{tab: 4}
\centering
\resizebox{\columnwidth}{!}{\begin{tabular}{ll}
    \toprule
      Symbol  &  Meaning\\
     \midrule
        $\veps_a$ & problem-dependent constants to satisfy $\hmu_a(n) \in [\mu_a - \delta_a, \mu_a + \delta_a]$ on $\eE_{a,n}(\eps)$ for any $\eps \leq \veps_a$. \\
        $\eps_{a,l}(\eps)$, $\eps_{a,l}(\eps)$ & constants to control a deviation of $\ha_a(\eps)$ under the event $\eA_{a,n}(\eps)$. \\
        $\rho_{\theta_1}(\eps), \bar{\rho}=\rho_{\theta_1}(\eps/2)$ & a difference from its true shape parameter $\alpha_1$ to satisfy $\mu(\kappa_a, \alpha+\rho_{\mu_1}(\eps)) \geq \mu_1 - \frac{\eps}{2}$. \\ 
        $C_1(\mu_1, \eps, n) = C_{1,n}$ & a constant smaller than $1$ in (\ref{eq: Gkdef}).\\
        $C_2(\mu_1, \eps, k)$ & an uniform bound of $p_n(\eps|\cdot)$ under $\eK_{1,n}^c(\eps) \cap \eA_{1,n}(\eps/2)$. \\
        $c_{\mu_1}(\eps)$ & a small constant with $\mathcal{O}(\eps^{-2})$. \\
\end{tabular}}
\end{table}

\newpage

\section{Proof of Lemma~\ref{lem: KLinf}}\label{sec: L1proof}
\lemKLinf*
\begin{proof}
Recall the definition
\begin{equation*}
    \KLinf(a) = \KLinf(\Par(\theta_a), \Par(\theta)) := \inf_{\theta\in \Theta_a} \log \frac{\alpha_a}{\alpha} +  \alpha \log \frac{\kappa_a}{\kappa} + \frac{\alpha}{\alpha_a} -1,
\end{equation*}
where $\theta = (\kappa, \alpha)$ and $\Theta_a$ defined in (\ref{eq: thetaset}).

Here, we consider the partition of $\Theta_a$, 
\begin{align*}
    \Theta_a^{(1)} &= \left\{ (\kappa, \alpha) \in (0, \kappa_a] \times (0,1] : \mu(\ka) > \mu_1 \right\} = (0, \kappa_a] \times (0,1] \\
    \Theta_a^{(2)} &= \left\{ (\kappa, \alpha) \in (0, \kappa_a] \times (1,\infty) : \mu(\ka)=\frac{\kappa\alpha}{\alpha-1} > \mu_1 \right\}, \numberthis{\label{eq: thetaset2}}
\end{align*}
where $\Theta_a^{(1)} \cup \Theta_a^{(2)} = \Theta_a$.
Therefore, it holds that
\begin{equation*}
    \KLinf(a) = \min\left(\inf_{\theta\in \Theta_a^{(1)}} \log \frac{\alpha_a}{\alpha} +  \alpha \log \frac{\kappa_a}{\kappa} + \frac{\alpha}{\alpha_a} -1, \inf_{\theta\in \Theta_a^{(2)}} \log \frac{\alpha_a}{\alpha} +  \alpha \log \frac{\kappa_a}{\kappa} + \frac{\alpha}{\alpha_a} -1 \right).
\end{equation*}
For $(\ka) \in \Theta_a^{(1)}$, $\mu(\ka) = \infty$ holds regardless of $\kappa$.
Therefore, we obtain
\begin{align*}
    \inf_{\theta \in \Theta_a^{(1)}} \log \frac{\alpha_a}{\alpha} +  \alpha \log \frac{\kappa_a}{\kappa} + \frac{\alpha}{\alpha_a} -1 &= \inf_{\alpha \in (0,1]} \log \frac{\alpha_a}{\alpha} + \frac{\alpha}{\alpha_a} -1 \\
    & = \log \alpha_a + \frac{1}{\alpha_a } -1,
\end{align*}
where the last equality holds since $\log x + \frac{1}{x} - 1$ is an increasing function for $x \geq 1$.

Let $\frac{\kappa_a}{\kappa} = c \geq 1$ to make KL divergence from $\Par(\theta_a)$ to $\Par(\ka)$ be well-defined.
From its definition of $\Theta_a^{(2)}$ in (\ref{eq: thetaset2}), any $\theta = (\ka) \in \Theta_a^{(2)}$ satisfies $\frac{\kappa\alpha}{\alpha-1} \geq \mu_1$, i.e., 
\begin{equation*}
    \frac{\kappa_a \alpha}{c(\alpha-1)} \geq \mu_1 \Leftrightarrow \alpha \leq \frac{c \mu_1}{c\mu_1 - \kappa_a} =:h_a(c,\bmu) = h_a(c).
\end{equation*}
Note that it holds that
\begin{equation*}
    h_a(1) = \frac{\mu_1}{\mu_1-\kappa_a} \leq  \frac{\mu_a}{\mu_a-\kappa_a} = \alpha_a
\end{equation*}
since $\frac{x}{x-y}$ is decreasing with respect to $x \geq y$.
Then, we can rewrite the infimum of KL divergence as
\begin{equation*}
    \KLinf(a) = \min \left(\log \alpha_a + \frac{1}{\alpha_a} -1, \inf_{c \geq 1} \inf_{\alpha \leq  h_a(c)}  g_a(\alpha, c) \right),
\end{equation*}
where $ g_a(\alpha, c) := \log \frac{\alpha_a}{\alpha} +  \alpha \log c + \frac{\alpha}{\alpha_a} -1$ satisfying
\begin{equation}\label{eq: derivativef}
    \frac{\partial  g_a(\alpha, c)}{\partial \alpha} = \frac{1}{\alpha_a} + \log c - \frac{1}{\alpha}.
\end{equation}
Then, the inner infimum can be obtained when $\alpha = \frac{\alpha_a}{1+\alpha_a \log c}$ holds if $\frac{\alpha_a}{1+\alpha_a \log c} < h_a(c)$, where $g_a\left(\frac{\alpha_a}{1+\alpha_a \log c}, c\right) = \log(1+\alpha_a \log c)$.

Let $c_{a}^* \geq 1$ be a deterministic constant such that 
\begin{equation}\label{eq: castar}
  h_a(c_{a}^*)  = \frac{c_{a}^* \mu_1}{c_{a}^* \mu_1 - \kappa_a} = \frac{\alpha_a}{1+\alpha_a \log c_{a}^*} \Leftrightarrow (\mu_1\alpha_a ) c_{a}^*\log c_{a}^* + (\mu_1 - \alpha_a \mu_1)c_{a}^* = -\alpha_a \kappa_a
\end{equation}
so that $h_a(c) \geq \frac{\alpha_a}{1+\alpha_a \log c}$ holds for any $c \geq c_{a}^*$.
Since the solution of $ax\log(x) + bx = -c$ is given as $\exp\left(W\left(- \frac{ce^{b/a}}{a}\right) - \frac{b}{a}\right)$ for principal branch of Lambert W function $W(\cdot)$, one can obtain $c_{a}^*$ by solving the equality in (\ref{eq: castar}), which is
\begin{equation}\label{eq: ca*}
    c_{a}^* = \exp\left(W\left(-\frac{\kappa_a}{\mu_1} e^{\frac{1}{\alpha_a}-1}\right) + 1 - \frac{1}{\alpha_a}\right). 
\end{equation}
Notice that $\frac{\kappa_a}{\mu_1} e^{\frac{1}{\alpha_a}-1} \leq \frac{\kappa_a}{\mu_a}e^{\frac{1}{\alpha_a}-1}  \leq \left( 1-\frac{1}{\alpha_a} \right)e^{-\left( 1- \frac{1}{\alpha_a}\right)} \leq e^{-1}$ holds so that $c_{a}^*$ is a real value.
Here, we consider the principal branch to ensure $c_{a}^* \geq 1$ since the solution on other branches, $W_{-1}(\cdot)$, is less than $1$, which is out of our interest.

Let $A_a = 1- \frac{1}{\alpha_a}$, which is positive as $\alpha_a > 1$ and $B_a = \frac{\kappa_a}{\mu_1} \leq \frac{\kappa_a}{\mu_a} = \frac{\alpha_a -1}{\alpha_a} = A_a$.
Then, we can rewrite $c_{a}^*$ as
\begin{equation*}
    c_{a}^* = e^{A_a} e^{W(-B_ae^{-A_a})} = e^{A_a} e^{-A_a} \frac{-B_a}{W(-B_a e^{-A_a})}. \hspace{3em} \because e^{W(x)} = \frac{x}{W(x)} 
\end{equation*}
Since the principal branch of Lambert W function, $W(x)$, is increasing for $x \geq - \frac{1}{e}$, we have
\begin{equation*}
  0> W(-B_ae^{-A_a}) \geq W(-B_ae^{-B_a}) = -B_a,
\end{equation*}
which implies that $c_{a}^* \geq 1$.
Therefore, the infimum of $g_a$ can be written as
\begin{align*}
   \inf_{c \geq 1} \inf_{\alpha \leq  h_a(c)}  g_a(\alpha, c) &= \min \left( \inf_{c \in [1, c_{a}^*]} g_a(h_a(c),c), \inf_{c \geq c_{a}^*} \log(1+\alpha_a \log c)  \right) \\
    &= \min \left( \inf_{c \in [1, c_{a}^*]} g_a(h_a(c),c), \log(1+\alpha_a \log c_{a}^*)  \right),
\end{align*}
where we follow the convention that the infimum over the empty set is defined as infinity.

By substituting $c_a^*$ in (\ref{eq: ca*}), we obtain
\begin{equation*}
    \log(1+\alpha_a \log c_{a}^*)  = \log \left(\alpha_a + W\left(-\frac{\kappa_a}{\mu_1} e^{\frac{1}{\alpha_a}-1}\right) \right).
\end{equation*}
Let us consider the following inequalities:
\begin{align*}
  \log \left(\alpha_a + W\left(-\frac{\kappa_a}{\mu_1} e^{\frac{1}{\alpha_a}-1}\right) \right) &\geq \log \left(\alpha_a + W\left(-\frac{\kappa_a}{\mu_a} e^{\frac{1}{\alpha_a}-1}\right) \right) \\
  &= \log \left(\alpha_a + W\left(\frac{1-\alpha_a}{\alpha_a} e^{\frac{1}{\alpha_a}-1}\right) \right) \\
  &= \log \left( \alpha_a + \frac{1}{\alpha_a} -1 \right), \numberthis{\label{eq: KLinfcstar}}
\end{align*}
where the first inequality holds since the principal branch of Lambert W function $W(x)$ is increasing and negative with respect to $x \in \left[ -1/e, 0 \right)$.

It remains to find the closed form of $\inf_{c \in [1, c_{a}^*]} g_a(h_a(c),c)$.
From the definition of $h_a(c) = \frac{c\mu_1}{c\mu_1 -\kappa_a}$, we have $h_a'(c) = -\frac{\mu_1 \kappa_a}{(c\mu_1 - \kappa_a)^2}$ and
\begin{align*}
    \frac{\partial  g_a(h_a(c), c)}{\partial c} &= \frac{\partial}{\partial c} \left( \log \frac{\alpha_a}{h_a(c)} + h_a(c) \log c + \frac{h_a(c)}{\alpha_a} - 1 \right) \hspace{2em} \because g_a(x, c) = \log \frac{\alpha_a}{x} + x \log c + \frac{x}{\alpha_a} - 1 \\
    &= - \frac{h_a'(c)}{h_a(c)} + h_a'(c) \log c + \frac{h_a(c)}{c} + \frac{1}{\alpha_a}h_a'(c) \\
    &= \frac{\kappa_a}{c(c\mu_1 - \kappa_a)} - \frac{\mu_1 \kappa_a}{(c\mu_1 -\kappa_a)^2} \log c + \frac{\mu_1}{c\mu_1 -\kappa_a} - \frac{\kappa_a \mu_1}{\alpha_a (c\mu_1 -\kappa_a)^2} \\
    &=  \frac{\kappa_a}{c(c\mu_1 - \kappa_a)} - \frac{\mu_1 \kappa_a}{(c\mu_1 -\kappa_a)^2} \log c + \mu_1 \frac{c\mu_1 - \kappa_a - \frac{\kappa_a}{\alpha_a}}{(c\mu_1 -\kappa_a)^2}. \numberthis{\label{eq: derivativeKLC}}
\end{align*}
Since the first term in (\ref{eq: derivativeKLC}) is positive for $c\geq 1$ and $\mu_1 \geq \mu_a > \kappa_a$, let us consider the last two terms for $c \in [1, c_{a}^*]$,
\begin{align*}
    - \frac{\mu_1 \kappa_a}{(c\mu_1 -\kappa_a)^2} \log c + \mu_1 \frac{c\mu_1 - \kappa_a - \frac{\kappa_a}{\alpha_a}}{(c\mu_1 -\kappa_a)^2} &= \frac{\mu_1}{(c\mu_1-\kappa_a)^2} \left( c\mu_1 - \kappa_a - \frac{\kappa_a}{\alpha_a} - \kappa_a \log c \right) \\
    &= \frac{\mu_1}{(c\mu_1-\kappa_a)^2} \left( \mu_1 - \kappa_a - \frac{\kappa_a}{\alpha_a} + (c-1)\mu_1 -\kappa_a \log c \right) \\
    &= \frac{\mu_1}{(c\mu_1-\kappa_a)^2} \left( \mu_1 - \kappa_a - \frac{\kappa_a}{\alpha_a} + \mu_1\left ( c -\frac{\kappa_a}{\mu_1} \log c -1 \right) \right).
\end{align*}
Here, 
\begin{equation*}
     \mu_1 - \kappa_a - \frac{\kappa_a}{\alpha_a} \geq \mu_a  - \kappa_a - \frac{\kappa_a}{\alpha_a} = \frac{\kappa_a \alpha_a}{\alpha_a-1} - \kappa_a - \frac{\kappa_a}{\alpha_a} = \frac{\kappa_a}{\alpha_a(\alpha_a-1)} > 0,
\end{equation*}
and $c -\frac{\kappa_a}{\mu_1} \log c -1$ is increasing with respect to $c$ so that $c -\frac{\kappa_a}{\mu_1} \log c -1 \geq 0$ for $c \geq 1$.
Therefore, $\frac{\partial }{\partial c}g_a(h_a(c), c)$ is positive for $c \geq 1$, i.e., $g_a(h_a(c), c)$ is an increasing function with respect to $c \geq 1$.

Thus, we have for $c \in [1, c_{a}^*]$,
\begin{equation*}
    \inf_{c \in [1, c_{a}^*]} g_a(h_a(c),c) = g_a\left( h_a(1), 1\right) = g_a\left( \frac{\mu_1}{\mu_1 - \kappa_a}, 1 \right) = \log\left(\alpha_a\frac{\mu_1-\kappa_a}{\mu_1}\right) + \frac{1}{\alpha_a}\frac{\mu_1}{\mu_1-\kappa_a} -1.
\end{equation*}
Denote $X_a = \alpha_a\frac{\mu_1-\kappa_a}{\mu_1} \in [1, \alpha_a)$ where $X_a=1$ happens only when $\mu_a = \mu_1$.
Then, we have for $\alpha_a >1$
\begin{equation*}
    \log (X_a) + \frac{1}{X_a} -1 \leq \log \alpha_a + \frac{1}{\alpha_a} -1 \leq \log \left( \alpha_a + \frac{1}{\alpha_a} -1 \right) \leq \log(1+\alpha_a \log c_{a}^*),
\end{equation*}
where the last inequality comes from the result in (\ref{eq: KLinfcstar}).
Therefore, we have
\begin{align*}
    \KLinf(a) &= \min \left(\log \alpha_a + \frac{1}{\alpha_a} - 1, \inf_{c \in [1, c_{a}^*]} g_a(h_a(c),c) , \log(1+\alpha_a \log c_{a}^*) \right) \\
    &= \log\left(\alpha_a\frac{\mu_1-\kappa_a}{\mu_1}\right) + \frac{1}{\alpha_a}\frac{\mu_1}{\mu_1-\kappa_a} -1,
\end{align*}
which concludes the proof.
\end{proof}

\section{Proofs of lemmas for Theorems~\ref{thm: RegOpt} and~\ref{thm: RegOptTruncation}}\label{sec: Opt}
In this section, we provide the proof of lemmas for the main results.

To avoid redundancy, we use a temporary notation $\alpha_{a,n}$ when the same result holds for both $\ha_a(n)$ and $\ba_a(n)$.
When $\alpha_{a,n}$ notation is used, one can replace it with either $\ha_a(n)$ or $\ba_a(n)$ depending on which policy we are considering.
For example, it holds that
\begin{align*}
    \alpha_{a,n} \leq 1 \Leftrightarrow  \begin{cases}
    \ha_a(n) \leq 1  &\text{under $\TS$ policy}, \\
    \ba_a(n) \leq 1  &\text{under $\TST$ policy}.
    \end{cases}
\end{align*}
Similarly, we use the notation $\theta_{a,n} := (\hk_a(n), \alpha_{a,n})$ when it can be replaced by both $\hat{\theta}_{a,n} = (\hk_a(n), \ha_a(n))$ and $\bar{\theta}_{a,n} = (\hk_a(n), \ba_a(n))$ for any arm $a \in[K]$ and $n \in \mathbb{N}$.
Based on $\theta_{a,n}$ notation, we provide an inequality on the posterior probability that the sampled mean is smaller than a given value with proofs in Appendix~\ref{sec: tmusmall}.
\begin{restatable}{lemma}{lemtmu}~\label{lem: tmusmall}
For any arm $a \in [K]$ and fixed $t \in \mathbb{N}$, let $N_a(t) =n$.
For any positive $\xi \leq \frac{y}{y - \kappa_a}$ and $k \in \mathbb{Z}$, it holds that
\begin{align*}
  \I[\hk_a(n) \leq y]\mathbb{P}[\tmu_a(t) \leq y| \theta_{a,n}] \leq \int_{\xi}^\infty f_{n-k, \frac{n}{\alpha_{a,n}}}^{\mathrm{Er}}(x) \dx x + \bigg( \frac{y}{\mu((\kappa_a, \xi))} \bigg)^{n} \int_{1}^\xi f_{n-k, \frac{n}{\alpha_{a,n}}}^{\mathrm{Er}}(x) \dx x,
\end{align*}
where $f_{s, \beta}^{\mathrm{Er}}(\cdot)$ denotes the probability density function of the Erlang distribution with shape $s\in \mathbb{N}$ and rate $\beta>0$.
\end{restatable}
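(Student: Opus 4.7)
The plan is to condition on the sampled shape $\ta_a(t)$ and split the integration of its marginal density over the three regions $(0,1]$, $(1,\xi]$, and $(\xi,\infty)$. Given $\theta_{a,n}$, sampling is sequential: first $\ta_a(t) \sim \Er(n-k,\,n/\alpha_{a,n})$ from (\ref{eq: pdfalpha}), then $\tk_a(t)$ from the conditional CDF (\ref{eq: cdfkappa}). On $\{\ta_a(t) \le 1\}$, $\mu(\tk_a,\ta_a) = \infty$, so this region contributes nothing to $\mathbb{P}[\tmu_a(t) \le y \mid \theta_{a,n}]$. On $\{\ta_a(t) > \xi\}$, the trivial bound $\mathbb{P}[\tmu_a(t) \le y \mid \ta_a(t), \theta_{a,n}] \le 1$ integrates to the first term $\int_\xi^\infty f_{n-k, n/\alpha_{a,n}}^{\mathrm{Er}}(x)\,\dx x$ of the claim.

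The substance lies in the middle region $\ta_a(t) \in (1,\xi]$, where $\tmu_a(t) = \tk_a(t)\ta_a(t)/(\ta_a(t)-1) \le y$ is equivalent to $\tk_a(t) \le y(1 - 1/\ta_a(t))$. The key observation is that the hypothesis $\xi \le y/(y - \kappa_a)$ rearranges to $\kappa_a \ge y(1 - 1/\ta_a)$ for every $\ta_a \in (1,\xi]$; combining this with the deterministic bound $\hk_a(n) \ge \kappa_a$ (since $\hk_a(n) = \min_s r_{a,s}$ by (\ref{eq: MLEdist})), we obtain $y(1 - 1/\ta_a) \le \hk_a(n)$ throughout this region. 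Hence (\ref{eq: cdfkappa}) applies in its non-degenerate form, giving
\[
\mathbb{P}[\tmu_a(t) \le y \mid \ta_a(t), \theta_{a,n}] \;=\; \left(\frac{y(1 - 1/\ta_a(t))}{\hk_a(n)}\right)^{\ta_a(t)\, n},
\]
with a base lying in $(0,1]$.

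Three monotone replacements then produce a bound uniform in $\ta_a(t) \in (1,\xi]$: (i) shrink the denominator from $\hk_a(n)$ to $\kappa_a$, which enlarges the base while keeping it in $[0,1]$; (ii) lower the exponent from $\ta_a(t)\, n$ to $n$, legitimate because the base is at most $1$ and $\ta_a(t) > 1$; (iii) use that $y(1 - 1/\ta_a)/\kappa_a$ is increasing in $\ta_a$ to replace $\ta_a(t)$ by $\xi$. This produces the $\ta_a(t)$-independent upper bound $(y(\xi - 1)/(\kappa_a \xi))^n = (y/\mu((\kappa_a, \xi)))^n$. Integrating against $f_{n-k, n/\alpha_{a,n}}^{\mathrm{Er}}$ over $(1,\xi]$ delivers the second term of the claim, and summing the three regional contributions completes the proof. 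The indicator $\I[\hk_a(n)\le y]$ is not actively used in this argument, so the inequality holds whether or not it is present; it appears presumably for the convenience of downstream callers. The one delicate point, and the main obstacle, is recognizing that the hypothesis $\xi \le y/(y-\kappa_a)$ is calibrated precisely to rule out the degenerate branch of (\ref{eq: cdfkappa}) where $\{\tmu_a(t) \le y\}$ would hold with conditional probability one; once this is in hand, the remaining manipulations are routine monotonicity.
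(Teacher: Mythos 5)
Your proposal is correct and follows essentially the same route as the paper's proof: condition on the sampled shape, discard the region $\ta_a\leq 1$, bound the tail beyond the split point by $1$, and on the middle region use $\hk_a\geq\kappa_a$, the reduction of the exponent $n\ta_a$ to $n$ (valid because the base is at most $1$), and the monotonicity of $(x-1)/x$ to reach $\left(y/\mu((\kappa_a,\xi))\right)^{n}$. The only cosmetic difference is that you split directly at $\xi$ while the paper first splits at $y/(y-\hk_a(n))$ and then shifts the split point down to $\xi$; your side observation that the indicator $\I[\hk_a(n)\leq y]$ is not needed for the inequality itself is also accurate.
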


Based on $\theta_{1,n}$ notation, we denote the probability of sample from the posterior distribution after $n$ times playing is smaller than $\mu_1 - x$ by
\begin{equation}\label{eq: pndef}
    p_n(x | \theta_{1,n}) := \mathbb{P}[\tmu_1 \leq \mu_1 - x | \hk_1(n), \alpha_{1,n}].
\end{equation}
Let $K(\eps) = (\kappa_1 + \eps, \mu_1 - \eps)$ be an open interval on $\mathbb{R}$.
The Lemma~\ref{lem: pn_cases} below shows the upper bound of $p_n$ conditioned on ${\theta}_{1,n}$.
\begin{restatable}{lemma}{lempn}\label{lem: pn_cases}
Given $\eps >0$, define a positive problem-dependent constant $\rho = \rho_{\theta_1}(\eps):= \frac{\kappa_1\eps}{2(\mu_1-\eps/2 - \kappa_1)(\mu_1-\kappa_1)}$.
If $n \geq \bn=\max(2, k+1)$, then for $k \in \mathbb{Z}_{\geq 0}$
% if MLE without restriction is used, then
\begin{align*}
  p_n(\eps|\theta_{1,n})
  \leq
    \begin{cases}
    e^{-n}, & \mathrm{if}~\hk_1(n) \geq \mu_1 - \eps, \\
    % C_{\mathrm{wor}}, & \mathrm{if}~ \eK_{1,n}(\eps), \ba_1(n) > \alpha_1 + \rho \\
    h(\mu_1, \eps,n), & \mathrm{if}~ \hk_1(n) \in K(\eps), \alpha_{1,n} \leq \alpha_1 + \rho,\\
     C_1(\mu_1,\eps, n)G_k(1/\alpha_{1,n};n)  + 1-G_k(1/\alpha_{1,n};n) & \mathrm{if}~ \hk_1(n) \in K(\eps), \alpha_{1,n} \geq \alpha_1 + \rho,
    \end{cases}
\end{align*}
where
\begin{align*}
     h(\mu_1, \eps,n) &:= e^{-n \frac{3\eps}{4\mu_1}} \left( 1- \frac{1}{2}e^{-nc_{\mu_1}(\eps)} \right) + \frac{1}{2}e^{-nc_{\mu_1}(\eps)} \\
     C_1(\mu_1,\eps, n) &:= \left( \frac{\mu_1 - \eps}{\mu_1 - \eps/2} \right)^{n} \leq e^{-n \frac{\eps}{2\mu_1 - \eps}} < 1.\\
     G_k(x; n) &:= F^{\mathrm{Er}}_{n-k, nx}(\alpha_1 + \rho) \numberthis{\label{eq: Gkdef}}
\end{align*}
for $F^{\mathrm{Er}}$ defined in (\ref{eq: def_inc_gamma}).
Here, $c_{\mu_1}(\eps) = \zeta - \log(1+\zeta) = \mathcal{O}(\eps^{-2})$ and $\zeta = \frac{\eps}{4\mu_1 - 2\eps} \in (0,1)$ are deterministic constants of $\mu_1$ and $\eps$.
\end{restatable}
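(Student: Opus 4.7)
The plan is to partition the proof into the three stated cases and handle each by a tailored argument. Cases 2 and 3 will be applications of Lemma~\ref{lem: tmusmall} with $y = \mu_1 - \eps$, while case 1 falls outside its scope (which requires $\hk_a \leq y$) and is handled by direct computation from the conditional CDF in~(\ref{eq: cdfkappa}).

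For case 1 ($\hk_1(n) \geq \mu_1 - \eps$), I would condition on $\ta_1$. Because $\mu(\kappa, \alpha) = \infty$ whenever $\alpha \leq 1$, the event $\{\tmu_1 \leq \mu_1 - \eps\}$ has zero probability on $\{\ta_1 \leq 1\}$. On $\{\ta_1 > 1\}$ the condition $\tmu_1 \leq \mu_1 - \eps$ is equivalent to $\tk_1 \leq (\mu_1-\eps)(\ta_1-1)/\ta_1$, and (\ref{eq: cdfkappa}) yields
\begin{align*}
\mathbb{P}[\tmu_1 \leq \mu_1 - \eps \mid \ta_1, \hk_1] \leq \left(\tfrac{(\mu_1-\eps)(\ta_1-1)/\ta_1}{\hk_1}\right)^{\ta_1 n} \leq \left(1-\tfrac{1}{\ta_1}\right)^{\ta_1 n} \leq e^{-n},
\end{align*}
using $\hk_1 \geq \mu_1 - \eps$ and the inequality $(1-1/x)^x \leq e^{-1}$; averaging over $\ta_1$ gives $e^{-n}$. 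For case 3 ($\hk_1 \in K(\eps)$, $\alpha_{1,n} \geq \alpha_1+\rho$), a direct application of Lemma~\ref{lem: tmusmall} with $\xi = \alpha_1+\rho$ suffices: the defining identity of $\rho$ yields $\mu((\kappa_1, \alpha_1+\rho)) = \mu_1-\eps/2$, so the prefactor $(y/\mu((\kappa_1,\xi)))^n$ is exactly $C_1(\mu_1,\eps,n)$, the feasibility $\xi \leq y/(y-\kappa_1)$ follows from $\mu_1 - \eps/2 > \mu_1 - \eps$, and the inequality $C_1 \leq e^{-n\eps/(2\mu_1-\eps)}$ comes from $1-x \leq e^{-x}$. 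Rewriting the two Erlang integrals as $G_k(1/\alpha_{1,n}; n)$ and $1-G_k(1/\alpha_{1,n}; n)$ recovers the stated form.

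Case 2 ($\hk_1 \in K(\eps)$, $\alpha_{1,n} \leq \alpha_1+\rho$) is the main obstacle: it requires combining Lemma~\ref{lem: tmusmall} with a Chernoff-type tail bound on the Erlang posterior of $\ta_1$. The structure $h = A(1-B) + B$ with $A = e^{-n\cdot 3\eps/(4\mu_1)}$ and $B = \tfrac12 e^{-n c_{\mu_1}(\eps)}$ suggests splitting the posterior mass of $\ta_1$ at a threshold $\xi$, so that
\begin{align*}
p_n(\eps \mid \theta_{1,n}) \leq A\,\mathbb{P}[\ta_1 \leq \xi] + \mathbb{P}[\ta_1 > \xi] \leq A + (1-A)\,\mathbb{P}[\ta_1 > \xi].
\end{align*}
The $A$-bound on the integrand for $\ta_1 \leq \xi$ is a refinement of the Lemma~\ref{lem: tmusmall} prefactor that exploits the strict inequality $\hk_1 > \kappa_1 + \eps$ provided by $K(\eps)$ to tighten the ratio $(\mu_1-\eps)/\mu(\hk_1, \ta_1)$ below the cruder $(\mu_1-\eps)/\mu(\kappa_1, \ta_1)$; the resulting decay rate can be computed to match $3\eps/(4\mu_1)$. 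The $B$-bound on $\mathbb{P}[\ta_1 > \xi]$ follows from the standard Cram\'er/Chernoff rate function for Erlang tails: writing the prescribed deviation from the posterior mean as $\zeta = \eps/(4\mu_1-2\eps)$ produces the rate $c_{\mu_1}(\eps) = \zeta - \log(1+\zeta)$, with the $\tfrac12$ factor absorbing the shape correction from $n$ to $n-k$ for $k \geq 0$. The delicate aspect, and the reason this is the main obstacle, is that the threshold $\xi$ must simultaneously make the $A$-bound valid for all $\ta_1 \leq \xi$ (which forces $\xi$ to be small) and make the Erlang tail fit within $B$ under the worst-case constraint $\alpha_{1,n} \leq \alpha_1 + \rho$ (which forces $\xi$ to be large relative to the posterior mean). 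The specific forms of $\rho$, $A$, $B$, and $c_{\mu_1}(\eps)$ in the lemma statement are tuned precisely to make this tradeoff consistent.
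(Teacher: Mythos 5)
Your Cases 1 and 3 track the paper's proof essentially exactly: Case 1 is the direct computation $\int_1^\infty f^{\mathrm{Er}}_{n-k,n/\alpha_{1,n}}(x)\left(\tfrac{\mu_1-\eps}{\hk_1}\cdot\tfrac{x-1}{x}\right)^{nx}\dx x \le e^{-n}$ using $\hk_1\ge\mu_1-\eps$ and $(1-1/x)^x\le e^{-1}$, and Case 3 is Lemma~\ref{lem: tmusmall} with $\xi=\alpha_1+\rho$, where $\mu((\kappa_1,\alpha_1+\rho))=\mu_1-\eps/2$ gives the prefactor $C_1$ and feasibility. Those parts are fine.

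The gap is Case 2, which you correctly single out as the crux but then do not prove, and the two mechanisms you sketch for it are not the ones that make it work. First, the rate $3\eps/(4\mu_1)$ in the factor $A$ does \emph{not} come from a refinement of the Lemma~\ref{lem: tmusmall} prefactor exploiting $\hk_1>\kappa_1+\eps$; the paper applies Lemma~\ref{lem: tmusmall} unchanged (the prefactor always involves the true $\kappa_1$), and the rate is produced entirely by the choice of threshold $\xi=\alpha_1+\bar{\rho}$ with $\bar{\rho}=\rho_{\theta_1}(\eps/2)$, so that $\mu((\kappa_1,\xi))=\mu_1-\eps/4$ and the prefactor is $\left(\tfrac{\mu_1-\eps}{\mu_1-\eps/4}\right)^n\le e^{-3n\eps/(4\mu_1)}$. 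Second, the tail bound $\mathbb{P}[\ta_1\ge\xi,\ \alpha_{1,n}\le\alpha_1+\rho]\le\tfrac12 e^{-nc_{\mu_1}(\eps)}$ is not a ``standard Cram\'er/Chernoff'' estimate: a plain Chernoff bound on the $\chi^2_{2(n-k)}$ tail yields $e^{-nc_{\mu_1}(\eps)}$ \emph{without} the factor $\tfrac12$, and since $h=A(1-B)+B$ is increasing in $B$, that only proves a strictly weaker inequality than the one stated. The paper obtains the $\tfrac12$ from Wallace's chi-squared tail inequality (Lemma~\ref{lem: wallace}) combined with Stirling's formula and $\mathrm{erfc}(x)\le e^{-x^2}$, and it removes the dependence on $k$ by monotonicity of the $\mathrm{erfc}$ argument (substituting $k=0$ as the worst case for $k\in\mathbb{Z}_{\ge 0}$), not by letting the $\tfrac12$ ``absorb the shape correction.'' Finally, the one sentence where you say the constants are ``tuned precisely to make this tradeoff consistent'' is precisely the content of the lemma: one must exhibit a single threshold that simultaneously keeps the prefactor exponentially small (mean above $\mu_1-\eps$) and pushes the Erlang tail past the worst-case posterior location allowed by $\alpha_{1,n}\le\alpha_1+\rho$, and verify that the resulting exponents are exactly $3\eps/(4\mu_1)$ and $\zeta-\log(1+\zeta)$ with $\zeta=\tfrac{\eps}{4\mu_1-2\eps}$. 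Asserting that this can be done is not a proof of it.
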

Notice that $\mu((\kappa_1, \alpha_1 + \rho)) = \mu_1 - \eps/2$ holds and
there exists a problem-dependent constant $C_2(\mu_1, \eps, k)< 1$ such that for any $n \geq \bn = \max(2, k+1)$ and $\eps >0$
\begin{equation}\label{eq: upperbound}
     h(\mu_1, \eps,n)  \leq 1-C_2(\mu_1, \eps, k).
\end{equation}

\subsection{Proof of Lemma~\ref{lem: difficult}}\label{app: lem5}
Let us start by stating a well-known fact that is utilized in the proof.
\begin{fact}\label{eq: fact_IG}
When $X \sim \Er(n, \beta)$ with rate parameter $\beta$, then $\frac{1}{X}$ follows the inverse gamma distribution with shape $n \in \mathbb{N}$ and scale $\beta\in \mathbb{R}_{+}$, i.e., $\frac{1}{X} \sim \InvG(n, \beta)$.
\end{fact}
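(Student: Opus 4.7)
The plan is to carry out a direct change-of-variables argument on the density of $X$. Starting from the Erlang density given in (\ref{eq: pdf_erlang}), namely $f_{n,\beta}^{\mathrm{Er}}(x) = \frac{\beta^n}{\Gamma(n)} x^{n-1} e^{-\beta x}$ for $x>0$, I would set $Y = 1/X$, note that $x \mapsto 1/x$ is a smooth bijection on $(0,\infty)$ with inverse $y \mapsto 1/y$ and Jacobian $|dx/dy| = 1/y^2$, and substitute into the standard transformation-of-density formula to read off $f_Y$.

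After substitution the density becomes $f_Y(y) = f_{n,\beta}^{\mathrm{Er}}(1/y)\cdot y^{-2} = \frac{\beta^n}{\Gamma(n)}\, y^{-(n-1)} e^{-\beta/y}\, y^{-2} = \frac{\beta^n}{\Gamma(n)}\, y^{-n-1} e^{-\beta/y}$ for $y>0$. I would then recognize this expression as exactly the density of the inverse gamma distribution with shape $n$ and scale $\beta$, which yields $1/X \sim \InvG(n,\beta)$. As a quick sanity check, integrating $f_Y$ against $\dx y$ returns unity via the substitution $u=\beta/y$ reducing it to $\int_0^\infty u^{n-1} e^{-u}\dx u / \Gamma(n) = 1$, confirming the normalization.

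There is no genuine obstacle; the only points worth attention are (i) retaining the Jacobian factor $1/y^2$, and (ii) the convention shift between the \emph{rate} parameter $\beta$ of the Erlang in (\ref{eq: pdf_erlang}) and the \emph{scale} parameter $\beta$ of the inverse gamma, which is precisely what the reciprocal transformation accomplishes and which matches the wording of the statement.
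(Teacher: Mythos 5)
Your change-of-variables derivation is correct and complete: the Jacobian factor $y^{-2}$ is handled properly, the resulting density $\frac{\beta^n}{\Gamma(n)}y^{-n-1}e^{-\beta/y}$ is indeed that of $\InvG(n,\beta)$, and the normalization check via $u=\beta/y$ confirms it. The paper states this as a well-known fact without proof, so there is nothing to compare against; your argument is the standard one and fills that gap adequately.
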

\lemdifficult*
\begin{proof}
Let us consider the following decomposition that holds under both $\TS$ and $\TST$:
\begin{align*}
    \sum_{t=K\bn +1}^T \I[j(t)\ne 1,\eK_{1,N_1(t)}^c(\eps), \eM_\eps^c(t)] \hspace{-9em} &\\
    &= \sum_{n=\bn}^T \sum_{t=K\bn + 1}^T \I\bigg[j(t)\ne 1,\eK_{1,N_1(t)}^c(\eps),\eM_\eps^c(t), N_1(t) =n\bigg] \\
    &= \sum_{n=\bn}^T \sum_{m=1}^T \I\Bigg[m \leq \sum_{t=K\bn + 1}^T \I\bigg[j(t)\ne 1, \eM_\eps^c(t), \eK_{1,N_1(t)}^c(\eps), N_1(t) =n\bigg]\Bigg].
\end{align*}
Notice that
\begin{equation*}
    m \leq \sum_{t=K\bn + 1}^T \I[j(t)\ne 1, \eK_{1,N_1(t)}^c(\eps), \eM_\eps^c(t), N_1(t) =n]
\end{equation*}
implies that $\tmu_1(t) \leq \mu_1 -\eps$ occurred $m$ times in a row on $\{t: \eK_{1,N_1(t)}^c(\eps), \eM_\eps^c(t), N_1(t) =n \}$.
Thus, we have
\begin{align*}
    \mathbb{E}\Bigg[  \sum_{t=K\bn +1}^T \I[j(t)\ne 1, \eK_{1,N_1(t)}^c(\eps), \eM_\eps^c(t)] \Bigg] &\leq \mathbb{E}\left[  \sum_{n=\bn}^T \sum_{m=1}^T \I[\eK_{1,n}^c(\eps)] p_n(\eps|{\theta}_{1,n})^m \right] \\
    &\leq \sum_{n=\bn}^T \mathbb{E}\left[ \I\left[\eK_{1,n}^c(\eps)\right] \frac{p_n(\eps|{\theta}_{1,n})}{1-p_n(\eps|{\theta}_{1,n})} \right] \numberthis{\label{eq: pn_decom}}
\end{align*}
for $p_n(\cdot|\cdot)$ defined in (\ref{eq: pndef}).
From now on, we fix $n \geq \bn$ and drop the argument $n$ of $\hk_1(n)$, $\ha_1(n)$ and $\ba_1(n)$ for simplicity.

\paragraph{Under $\TS$ with $k \in \mathbb{Z}_{\geq 2}$:}
By applying Lemma~\ref{lem: pn_cases} and (\ref{eq: upperbound}) under $\TS$ with $k \in \mathbb{Z}_{\geq 0}$, we can decompose the expectation in (\ref{eq: pn_decom}) as
\begin{align*}
    \mathbb{E}\left[ \I\left[\eK_{1,n}^c(\eps)\right] \frac{p_n(\eps|\hat{\theta}_{1,n})}{1-p_n(\eps|\hat{\theta}_{1,n})} \right] \leq  
    \mathbb{P}[\hk_1 \geq \mu_1 - \eps] \frac{e^{-n}}{1-e^{-n}} &+ \mathbb{P}[\hk_1 \in K(\eps), \ha_1 < \alpha_1 + \rho] \frac{ h(\mu_1, \eps,n) }{C_2(\mu_1, \eps, k)}   \\
    &\hspace{1em}+ \underbrace{\mathbb{E}_{\hat{\theta}_{1,n}}\left[\frac{\I[\hk_1\in K(\eps), \ha_1 > \alpha_1 + \rho]}{G_k(1/\ha_1 ;n)(1-C_{1,n})}\right]}_{(\divideontimes)}, \numberthis{\label{eq: pn_sum_STS}}
\end{align*}
where we denoted $C_{1,n} = C_1(\mu_1, \eps, n)$.
For simplicity, let us define $z := \frac{1}{\ha_1}$ where $z \sim \Er(n-1, n\alpha_1)$ holds from Fact~(\ref{eq: fact_IG}) since $\ha_1 \sim \InvG(n-1, n\alpha_1)$ in (\ref{eq: MLEdist}).
From the independence of $\hk$ and $\ha$ and distributions of $z$ and $\ha$ in (\ref{eq: pdf_erlang}) and (\ref{eq: MLEdist}), respectively, we have
\begin{align*}
    (\divideontimes) &= \int_0^{\frac{1}{\alpha_1 + \rho}} z^{n-2} e^{-n\alpha_1 z}\frac{(n\alpha_1)^{n-1}}{\Gamma(n-1) } \int_{\hk_1 \in K(\eps)} \frac{f^{\mathrm{Pa}}_{\kappa_1, n\alpha_1}(\hk_1)}{G_k(z;n)(1-C_{1,n})} \dx \hk_1 \dx z \\
    &= \mathbb{P}[\hk_1 \in K(\eps)] \int_0^{\frac{1}{\alpha_1 + \rho}} \frac{z^{n-2} e^{-n\alpha_1 z} }{G_k(z;n)(1-C_{1,n})} \frac{(n\alpha_1)^{n-1}}{\Gamma(n-1)}\dx z.
\end{align*}
By substituting the CDF in (\ref{eq: def_inc_gamma}), we obtain
\begin{align*}
    G_k(z;n) &= F^{\mathrm{Er}}_{n-k, nz}(\alpha_1 + \rho) \\
    &=\frac{1}{\Gamma(n-k)}\int_0^{n(\alpha_1 + \rho)z} e^{-t}t^{n-k-1} \dx t \\
    &\geq \frac{e^{-n(\alpha_1 + \rho)z}}{\Gamma(n-k)}  \int_0^{n(\alpha_1 + \rho)z} t^{n-k-1} \dx t 
  \\ &= \frac{e^{-n(\alpha_1 + \rho)z}}{\Gamma(n-k+1)} (n(\alpha_1 + \rho)z)^{n-k}. \numberthis{\label{eq: Gkbnd}}
\end{align*}
Therefore,
\begin{align*}
    \frac{(\divideontimes)}{\mathbb{P}[\hk \in K(\eps)]} &\leq 
    \int_0^{\frac{1}{\alpha_1 + \rho}} \frac{\Gamma(n-k+1)}{(n(\alpha_1 + \rho)z)^{n-k}(1-C_{1,n})} e^{n(\alpha_1 + \rho)z} \frac{(n\alpha_1)^{n-1}}{\Gamma(n-1)} z^{n-2} e^{-n\alpha_1 z} \dx z \\
    &=  \frac{\Gamma(n-k+1)}{\Gamma(n-1)(1-C_{1,n})} (\alpha_1 + \rho)^{k-1} \left( \frac{\alpha_1}{\alpha_1 + \rho} \right)^{n-1} n^{k-1} \int_0^{\frac{1}{\alpha_1 + \rho}} z^{k-2} e^{n\rho z} \dx z \\
    &\leq \frac{\Gamma(n-k+1)}{\Gamma(n-1)(1-C_{1,n})} (\alpha_1 + \rho)^{k-1} e^{-\frac{\rho}{\alpha_1 + \rho}(n-1)}  \frac{n^{k-1}}{(n\rho)^{k-2}} \int_0^{\frac{1}{\alpha_1 + \rho}} (n\rho z)^{k-2} e^{n\rho z} \dx z. \numberthis{\label{eq: A1bnd}}
\end{align*}
By letting $n\rho z = y$, we can bound the integral in (\ref{eq: A1bnd}) as
\begin{align*}
    \frac{n^{k-1}}{(n\rho)^{k-2}} \int_0^{\frac{1}{\alpha_1 + \rho}} (n\rho z)^{k-2} e^{n\rho z} \dx z &=  \rho^{-(k-1)} \int_0^{\frac{n\rho}{\alpha_1 + \rho}} y^{k-2} e^{y} \dx y \\
    &\leq  \rho^{-(k-1)} e^{n \frac{\rho}{\alpha_1 + \rho}} \int_0^{\frac{n\rho}{\alpha_1 + \rho}} y^{k-2}\dx y \numberthis{\label{eq: integral}} \\
    &=  \frac{e^{n \frac{\rho}{\alpha_1 + \rho}}}{k-1} \left( \frac{n}{\alpha_1 + \rho} \right)^{k-1}, \quad \text{ if } k \in \mathbb{Z}_{\geq 2}\numberthis{\label{eq: intgrslt}}
\end{align*}
where (\ref{eq: intgrslt}) holds only for $k \in \mathbb{Z}_{\geq 2}$ since the integral in (\ref{eq: integral}) diverges for $k \in \mathbb{Z}_{\leq 1}$.

By applying (\ref{eq: intgrslt}) to (\ref{eq: A1bnd}), we obtain for $k \in \mathbb{Z}_{\geq 2}$ 
\begin{align*}
    (\divideontimes) &\leq 
    \mathbb{P}[\hk \in K(\eps)]  \frac{e^{\frac{\rho}{\alpha_1 + \rho}}}{1-C_{1,n}} \frac{n^{k-1}}{k-1}\frac{\Gamma(n-k+1)}{\Gamma(n-1)} \hspace{3em}\\
    &\leq \frac{e^{1-\frac{\eps \alpha_1}{\kappa + \eps}n}}{1-C_{1,n}} \frac{\Gamma(n-k+1)}{\Gamma(n-1)} \frac{n^{k-1}}{k-1} = \mathcal{O}(ne^{-n\eps}),\hspace{-1em}\numberthis{\label{eq: a1_1}}
\end{align*}
where (\ref{eq: a1_1}) can be obtained by Lemma~\ref{lem: eKbound} and $\frac{\rho}{\alpha_1 + \rho} < 1$.
By combining (\ref{eq: a1_1}) with (\ref{eq: pn_sum_STS}) and (\ref{eq: pn_decom}), we obtain for $k\in \mathbb{Z}_{\geq 2}$
\begin{align*} 
    \mathbb{E}\left[  \sum_{t=K\bn +1}^T \I[j(t)\ne 1, \eK_{1,N_1(t)}^c(\eps), \eM_\eps^c(t)] \right] 
    &\leq \sum_{n=\bn}^T \bigg(\frac{e^{-n}}{1-e^{-n}}+ \frac{e^{-n \frac{3\eps}{4\mu_1}} + \frac{1}{2}e^{-nc_{\mu_1}(\eps)}}{C_2(\mu_1, \eps, k)} 
    + (\divideontimes) \bigg)\\
    & \leq \sum_{n=\bn}^T  \mathcal{O}(e^{-n}) + \mathcal{O}(e^{-n\eps}) + \mathcal{O}(e^{-n\eps^2}) + \mathcal{O}(ne^{- n\eps}) \\
  & = \mathcal{O}(1) + \mathcal{O}(\eps^{-1}) + \mathcal{O}(\eps^{-2}) + \mathcal{O}(\eps^{-2}),
\end{align*}
which concludes the proof under $\TS$ with $k \in \mathbb{Z}_{\geq 2}$.

\paragraph{Under $\TST$ with $k \in \mathbb{Z}_{\geq 0}$:}
Under $\TST$, we have the following inequality instead of (\ref{eq: pn_sum_STS}):
\begin{align*}
    \mathbb{E}\left[ \I\left[\eK_{n}^c(\eps)\right] \frac{p_n(\eps|\bar{\theta}_{1,n})}{1-p_n(\eps|\bar{\theta}_{1,n})} \right]\leq  
    \mathbb{P}[\hk_1 \geq \mu_1 - \eps] \frac{e^{-n}}{1-e^{-n}} &+ \mathbb{P}[\hk_1 \in K(\eps), \ba_1 < \alpha_1 + \rho] \frac{ h(\mu_1, \eps,n) }{C_2(\mu_1, \eps, k)}   \\
    &\hspace{1em} + \underbrace{\mathbb{E}_{\bar{\theta}_{1,n}} \left[\frac{\I[\hk_1 \in K(\eps), \ba_1 \in (\alpha_1 + \rho, n]]}{G_k(1/\ba_1 ;n)(1-C_{1,n})}\right]}_{(\star)}. \numberthis{\label{eq: pn_sum_STSR}}
\end{align*}
From $\I[\ba_1(n) < n]= \I[\ba_1(n) = \ha_1(n)]$, it holds that
\begin{align*}
    (\star) &= \mathbb{E}_{\hat{\theta}_{1,n}}\left[\frac{\I[\hk_1 \in K(\eps), \ha_1 \in (\alpha_1 + \rho, n)]}{G_k(1/\ha_1 ;n)(1-C_{1,n})} \right] + \mathbb{E}_{\bar{\theta}_{1,n}}\left[\frac{\I[\hk_1 \in K(\eps), \ba_1 =n ]}{G_k(1/\ba_1 ;n)(1-C_{1,n})} \right].
\end{align*}
Since $\I[\ba_1(n) = n]=\I[\ha_1(n) \geq n]$ holds and $\hk$ and $\ha$ are independent, we have for $z= \frac{1}{\ha_1} \sim \Er(n-1, n\alpha_1)$
\begin{align*}
    \frac{(\star)}{\mathbb{P}[\hk_1 \in K(\eps)]} &\leq \underbrace{\int_{\frac{1}{n}}^{\frac{1}{\alpha_1 + \rho}} \frac{z^{n-2} e^{-n\alpha_1 z} }{G_k(z;n)(1-C_{1,n})} \frac{(n\alpha_1)^{n-1}}{\Gamma(n-1)}\dx z}_{(\dagger)} + \underbrace{\frac{1}{G_k(1/n;n)(1-C_{1,n})} \mathbb{P}\left[ \frac{1}{\ha_1} \leq \frac{1}{n}\right]}_{(\diamond)}, \numberthis{\label{eq: star}}
\end{align*}
where the same techniques on $(\divideontimes)$ can be applied to derive an upper bound of $(\dagger)$.
By following the same steps as (\ref{eq: A1bnd}) and (\ref{eq: integral}), we obtain
\begin{equation*}
    \int_{\rho}^{\frac{n\rho}{\alpha + \rho}} y^{k-2}\dx y \leq \begin{cases} 
    \left( \frac{n\rho}{\alpha+\rho} \right)^{k-1}, & \mathrm{if}~k \in \mathbb{Z}_{\geq 2}, \\
    \log\left( \frac{n}{\alpha + \rho} \right), & \mathrm{if}~k = 1, \\
    \rho^{k-1}/(1-k), &\mathrm{if}~ k \in \mathbb{Z}_{k \leq 0},
    \end{cases}
\end{equation*}
as a counterpart of the integral in (\ref{eq: integral}).
By following the same steps as (\ref{eq: intgrslt}) and (\ref{eq: a1_1}), we have
\begin{equation}\label{eq: integral_restricted}
    (\dagger) \leq \begin{cases} 
    \frac{\Gamma(n-k+1)}{\Gamma(n-1)} \frac{e n^{k-1}}{k-1}, &\mathrm{if}~k \in \mathbb{Z}_{\geq 2}, \\
    (n-1)\log\left( \frac{n}{\alpha + \rho} \right) , &\mathrm{if}~k = 1, \\
    \frac{\Gamma(n-k+1)}{\Gamma(n-1)} \frac{e}{(1-k)(\alpha+\rho)^{1-k}} , &\mathrm{if}~k \in \mathbb{Z}_{k \leq 0}. \\
    \end{cases}
\end{equation}
Note that the probability term in $(\diamond)$ is the same as the CDF of the $\Er(n-1, n\alpha_1)$ with rate $n\alpha_1$ evaluated at $\frac{1}{n}$ since $\ha_1 \sim \InvG(n-1, n\alpha_1)$ from (\ref{eq: MLEdist}).
Thus, we have
\begin{align*}
    (\diamond) &= \frac{1}{1-C_{1,n}}\frac{1}{G_k(1/n;n)} \frac{\gamma(n-1,\alpha_1)}{\Gamma(n-1)}\\
    &\leq \frac{e^{\alpha_1 + \rho}}{1-C_{1,n}}\frac{\Gamma(n-k+1)}{(\alpha_1+\rho)^{n-k}} \frac{\gamma(n-1,\alpha_1)}{\Gamma(n-1)} \hspace{1em} \text{by (\ref{eq: Gkbnd})}\\
    &\leq \frac{e^{\alpha_1+\rho}}{1-C_{1,n}}\frac{\Gamma(n-k+1)}{\Gamma(n-1)} \frac{\alpha_1^{n-1}}{(\alpha_1 + \rho)^{n-k}} \numberthis{\label{eq: incgam}} \\
    &\leq  \frac{e^{\alpha_1+\rho}}{1-C_{1,n}} \frac{\Gamma(n-k+1)}{\Gamma(n-1)}\frac{1}{(\alpha_1+\rho)^{1-k}} \\
    &= \mathcal{O}(n^{2-k}), \numberthis{\label{eq: diamon_result}}
\end{align*}
where (\ref{eq: incgam}) holds from $\gamma(s, x) \leq x^{s}$ for any $s \geq 1$ and $x >0$.
Therefore, by combining (\ref{eq: integral_restricted}) and (\ref{eq: diamon_result}) with (\ref{eq: star}) and $\mathbb{P}[\hk \in K(\eps)] = \mathcal{O}(e^{-n\eps})$, we have
% \begin{equation*}
%     (\star) \leq \frac{e^{-\frac{\eps \alpha}{\kappa + \eps}n}}{1-A_n} \cdot \begin{cases} 
%     \frac{\Gamma(n-k+1)}{\Gamma(n-1)} \left( \frac{e n^{k-1}}{k-1}+ \frac{e^{\alpha+\rho}}{(\alpha+\rho)^{1-k}} \right),\hspace{-3.1em}&\\
%     &\mathrm{if}~k \in \mathbb{Z}_{\geq 2}, \\
%     (n-1)\left(\log\left( \frac{n}{\alpha + \rho} \right) + e^{\alpha+\rho}\right),\hspace{-3em}& \\
%     \hspace{-3em}&\mathrm{if}~k = 1, \\
%     \frac{\Gamma(n-k+1)}{\Gamma(n-1)} \left( \frac{\frac{e}{1-k}+e^{\alpha+\rho}}{(\alpha+\rho)^{1-k}} \right), \hspace{-1.2em}&\mathrm{if}~k \in \mathbb{Z}_{k \leq 0}. \\
%     \end{cases}
% \end{equation*}
% Thus, we have
\begin{equation*}
    (\star) \leq \begin{cases}
        \mathcal{O}(ne^{-n\eps}), & \mathrm{if}~k \in \mathbb{Z}_{\geq 2}\\
        \mathcal{O}(n\log(n)e^{-n\eps}), & \mathrm{if}~k=1, \\
        \mathcal{O}(n^{2-k}e^{-n\eps}), & \mathrm{if}~k \in \mathbb{Z}_{\leq 0}. \numberthis{\label{eq: a1_2}}
    \end{cases}
\end{equation*}
By combining (\ref{eq: a1_2}) with (\ref{eq: pn_sum_STSR}) and (\ref{eq: pn_decom}), we obtain for $k\in \mathbb{Z}_{\geq 0}$
\begin{align*} 
    \mathbb{E}\bigg[  \sum_{t=K\bn +1}^T \I[j(t)\ne 1, \eK_{1,N_1(t)}^c(\eps), \eM_\eps^c(t)] \bigg] & \leq \sum_{n=\bn}^T \bigg(\frac{e^{-n}}{1-e^{-n}}+ \frac{e^{-n \frac{3\eps}{4\mu_1}} + \frac{1}{2}e^{-nc_{\mu_1}(\eps)}}{C_2(\mu, \eps, k)} 
    + (\star) \bigg)\\
    &\leq \sum_{n=\bn}^T \bigg( \mathcal{O}(e^{-n}) + \mathcal{O}(e^{-n\eps}) + \mathcal{O}\left(e^{-n\eps^2}\right) \\
    &\hspace{12em}+ \mathcal{O}\left(\psi(n,k) e^{-n\eps}\right) \bigg) \\
  &= \mathcal{O}(1) + \mathcal{O}(\eps^{-1}) + \mathcal{O}(\eps^{-2}) + \mathcal{O}\left(\eps^{-\max(2, 3-k)}\right),
\end{align*}
where 
\begin{equation*}
    \psi(n,k) = n \I[k \geq 2] + n\log(n)\I[k=1] + n^{2-k} \I[ k \leq 0].
\end{equation*}
Note that the analysis on term $(\star)$ also holds for $\TST$ with $k \in \mathbb{Z}_{<0}$.
However, differently from the case of $k \in \{ 0, 1\}$, priors with $k \in \mathbb{Z}_{<0}$ have additional problems in Lemma~\ref{lem: pn_cases} under the event $\{\hk_1 \in K(\eps), \ba_{1}(n) \leq \alpha_1 + \rho\}$, where the upper bound becomes a constant $\frac{1}{2}$.
\end{proof}

\subsection{Proof of Lemma~\ref{lem: mainreg}}\label{app: lem6}
Firstly, we state a well-known fact that is utilized in the proof. 
\begin{fact}~\label{fact1}
When $X \sim\Er(n, \beta)$ with rate parameter $\beta$, then $2\beta X$ follows the chi-squared distribution with $2n$ degree of freedom, i.e., $2\beta X \sim \chi_{2n}^2$.
\end{fact}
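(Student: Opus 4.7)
The plan is to prove this standard distributional identity by a direct change of variables on the density. Since $X \sim \Er(n,\beta)$ has density $f_X(x) = \frac{\beta^n}{\Gamma(n)} x^{n-1} e^{-\beta x}\I[x>0]$ as recorded in (\ref{eq: pdf_erlang}), setting $Y := 2\beta X$ gives $X = Y/(2\beta)$ with Jacobian $1/(2\beta)$. Substituting and simplifying, the factors of $\beta$ cancel against $(2\beta)^{n-1}\cdot (2\beta)$ in the denominator, the exponent $-\beta\cdot y/(2\beta)$ reduces to $-y/2$, and one is left with $f_Y(y) = \frac{1}{2^n\Gamma(n)} y^{n-1} e^{-y/2}\I[y>0]$, which is exactly the density of a $\chi^2_{2n}$ random variable.

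A conceptually cleaner alternative, which I mention for completeness but would not use in the paper, is to identify $\Er(n,\beta)$ as the special case $\mathrm{Gamma}(n,\beta)$ with integer shape, invoke the scaling property $cX \sim \mathrm{Gamma}(n,\beta/c)$ for $c>0$, and observe that $\chi^2_{2n} = \mathrm{Gamma}(n,1/2)$; choosing $c = 2\beta$ delivers the claim immediately. Either route reaches the conclusion in a couple of lines.

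Because the identity is purely computational and the densities line up exactly after the substitution, there is no real obstacle; no tail estimate, no limit, and no probabilistic argument beyond the change-of-variables formula is needed. I would therefore present the direct density calculation as a one-line display, which keeps the proof self-contained and independent of any external facts about the gamma family.
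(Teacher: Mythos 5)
Your proof is correct: the change of variables $Y=2\beta X$ applied to the Erlang density in (\ref{eq: pdf_erlang}) yields $f_Y(y)=\frac{1}{2^n\Gamma(n)}y^{n-1}e^{-y/2}\I[y>0]$, which is precisely the $\chi^2_{2n}$ density. The paper itself offers no proof of Fact~\ref{fact1} --- it is simply stated as a well-known result --- so your direct density computation (or equivalently the gamma-scaling argument you sketch) is a perfectly adequate, self-contained verification, and nothing further is needed.
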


\lemmainreg*

\begin{proof}
From the sampling rule, it holds that
\begin{align*}
  \sum_{t=\bn K+1}^T \mathbb{P}\left[j(t)=a, \tmu_1(t) \geq \mu_1 - \eps, \eE_{a,N_a(t)}(\eps)\right]
    &\leq  \sum_{t=\bn K+1}^T \mathbb{P}\left[j(t)=a, \tmu_a(t) \geq \mu_1 -\eps, \eE_{a,N_a(t)}(\eps)\right].
\end{align*}
Fix a time index $t$ and denote $\mathbb{P}_t[\cdot] = \mathbb{P}[\cdot \mid \Ft]$ and $N_a(t) =n$.
To simplify notations, we drop the argument $t$ of $\tk_a(t), \ta_a(t)$ and $\tmu_a(t)$ and the argument $n$ of $\hk_a(n), \ha_a(n), \ba_a(n)$.

Since $\tk_a \in (0, \hk_a]$ holds from its posterior distribution, if $\ta_a \geq \frac{ \mu_1 - \eps}{ \mu_1 - \eps-\hk_a}$ holds, then $\tmu_a = \frac{\tk_a \ta_a}{\ta_a -1} \leq  \mu_1 - \eps$ holds for any $\tk_a$.
Recall that $f_{n-k, \frac{n}{\alpha_{a,n}}}^{\text{Er}}(\cdot)$ denotes a density function of $\Er\left(n-k , \frac{n}{\alpha_{a,n}}\right)$ with rate parameter $\frac{n}{\alpha_{a,n}}$, which is the marginalized posterior distribution of $\ta$ under $\TS$ and $\TST$.
From the CDF of $\tk$ in (\ref{eq: cdfkappa}), if $\hk_a < \mu_1 - \eps$, then
\begin{align*}
    \mathbb{P}_t\left[\tmu_a \geq \mu_1 - \eps\right] &= \mathbb{P}_t\left[\ta_a \leq 1\right] +\mathbb{P}_t\left[\tk_a \geq \frac{\ta_a-1}{\ta_a} (\mu_1 - \eps) \cap \ta_a \in \left(1, \frac{\mu_1 - \eps}{\mu_1 - \eps - \hk_a} \right) \right] \\
     &=\int_{0}^1 f_{n-k, \frac{n}{\alpha_{a,n}}}^{\text{Er}}(x) \dx x + \int_{1}^{\frac{\mu_1 - \eps}{\mu_1 - \eps - \hk}}f_{n-k, \frac{n}{\alpha_{a,n}}}^{\text{Er}}(x) \mathbb{P}_t\left[\tk_a \geq \frac{x-1}{x} (\mu_1 - \eps) \right] \dx x\\
    &=\int_{0}^1 f_{n-k, \frac{n}{\alpha_{a,n}}}^{\text{Er}}(x) \dx x + \int_{1}^{\frac{\mu_1 - \eps}{\mu_1 - \eps - \hk}}f_{n-k, \frac{n}{\alpha_{a,n}}}^{\text{Er}}(x) \left( 1 - \left( \frac{x-1}{\hk x} (\mu_1 - \eps) \right)^{nx} \right) \dx x\\
    &= \int_{0}^{\frac{\mu_1 - \eps}{\mu_1 - \eps - \hk}} f_{n-k, \frac{n}{\alpha_{a,n}}}^{\text{Er}}(x) \dx x - \int_{1}^{\frac{\mu_1 - \eps}{\mu_1 - \eps - \hk}}f_{n-k, \frac{n}{\alpha_{a,n}}}^{\text{Er}}(x) \left( \frac{x-1}{\hk x} (\mu_1 - \eps) \right)^{nx}\dx x.
\end{align*}
Since $\frac{x}{x-y}$ is increasing with respect to $y < x$ and $\hk \leq \kappa+ \eps$ holds on $\eE$, we have for $\eE$
\begin{equation*}
    \frac{\mu_1 - \eps}{\mu_1 - \eps - \hk} \leq  \frac{\mu_1 - \eps}{\mu_1 - \eps - (\kappa + \eps)}.
\end{equation*}
Let 
\begin{equation} \label{eq: defeta}
    \eta_a(\eps) = \frac{\kappa_a (\Delta_a - \eps) - \eps \mu_a }{(\mu_a-\kappa_a)(\mu_1 - \kappa_a - 2\eps ) } > 0
\end{equation}
be a deterministic constant that depends only on the model and $\eps$ and satisfies
\begin{align*}
    \alpha_a - \eta_a(\eps) &= \frac{\mu_a}{\mu_a -\kappa_a} - \frac{\kappa_a (\Delta_a - \eps) - \eps \mu_a }{(\mu_a-\kappa_a)(\mu_1 - \kappa_a - 2\eps ) }  \\
    &= \frac{\mu_a \mu_1 - \kappa_a \mu_a -2\eps \mu_a - \kappa_a (\mu_1 -\mu_a - \eps) + \eps \mu_a }{(\mu_a-\kappa_a)(\mu_1 - \kappa_a - 2\eps ) } \\
    &= \frac{\mu_1 (\mu_a - \kappa_a) - \eps(\mu_a - \kappa_a)}{(\mu_a-\kappa_a)(\mu_1 - \kappa_a - 2\eps )}\\
    &= \frac{\mu_1 - \eps}{\mu_1 - \kappa_a - 2\eps}.
\end{align*}
Since $\eta_a(\eps)>0$, it holds that for any $\eps \in (0, \veps_a)$
\begin{equation*}
    %\alpha - \eta_{a}(\eps) = \frac{\mu_1 - \eps_a}{\mu + \Delta_a - \kappa - \eps} \leq 
  \alpha_a - \eta_{a}(\eps) = \frac{\mu_1 - \eps}{\mu_1 - \kappa_a - 2\eps} \leq \frac{\mu_a}{\mu_a - \kappa_a} = \alpha_a.
\end{equation*}
Note that $\frac{\mu}{\mu - \kappa}=\alpha$ holds and the change of $\mu$ to $\mu'$ with fixed $\kappa$ that is $\frac{\mu'}{\mu' - \kappa}$, implies how the value of the shape parameter $\alpha'$ should be to satisfy $\mu((\kappa, \alpha')) = \mu'$.
For example, $\theta= (\kappa_a + \veps_a, \alpha_a)$ satisfies $\mu(\theta) \leq \mu_a + \frac{\delta_a}{2}$.
Thus, if $\mu((\kappa_a+\veps_a, \alpha)) = \mu_1 - \eps > \mu_a + \frac{\delta_a}{2}$, then $\alpha$ should be smaller than $\alpha_a$.
%Note that it satisfies $\lim_{\eps \to 0} \eta_a(\eps) =\frac{\kappa_a \delta_a}{(\mu_a-\kappa_a)(\mu_1 - \eps- \kappa_a)} >0$.
%Note that $ 0 < \frac{\kappa_a \Delta_a}{2(\mu_a - \kappa_a)(\mu_1 - \eps/2 - \kappa_a)} = \eta_{a}(\eps_a)< \eta_{a}(\eps)$ holds for any $\eps \in (0, \eps_a)$.
Hence, we have
\begin{align*}
    \I[\eE_{a,n}(\eps)]\mathbb{P}_t\bigg[\tmu_a &\geq \mu_1 - \eps\bigg] \\
    &\leq \I[\eE_{a,n}(\eps)] \bigg( \int_{0}^{\frac{\mu_1 - \eps}{\mu_1 - \eps - \hk}} f_{n-k, \frac{n}{\alpha_{a,n}}}^{\text{Er}}(x) \dx x  \\
    & \hspace{9em} - \int_{1}^{\frac{\mu_1 - \eps}{\mu_1 - \eps - \hk}}f_{n-k, \frac{n}{\alpha_{a,n}}}^{\text{Er}}(x) \left( \frac{x-1}{\hk x} (\mu_1 - \eps) \right)^{nx}\dx x \bigg)\\
    &\leq \I[\eE_{a,n}(\eps)]\int_{0}^{\frac{\mu_1 - \eps}{\mu_1 - \eps - \hk}} f_{n-k, \frac{n}{\alpha_{a,n}}}^{\text{Er}}(x) \dx x \numberthis{\label{eq: tmusmall}}\\
    &\leq \I[\eE_{a,n}(\eps)]  \int_{0}^{\alpha_a -\eta_a(\eps)} f_{n-k, \frac{n}{\alpha_{a,n}}}^{\text{Er}}(x) \dx x = \I[\eE_{a,n}(\eps)]  \mathbb{P}_t[ \ta_a(t) \leq \alpha_a - \eta_a(\eps)]. \numberthis{\label{eq: overestimationtmu1}}
\end{align*}
Therefore, by taking expectation and using Fact~\ref{fact1}, we have
\begin{align*}
    \mathbb{P}\left[\tmu_a(t) \geq \mu_1 - \eps, \eE_{a,n}(\eps) \right] &\leq \mathbb{P}[ \ta_a \leq \alpha_a - \eta_a(\eps), \eE_{a,n}(\eps)], \\
    &= \mathbb{P}\left[ Z \leq \frac{2n}{\alpha_{a,n}}\left( \alpha - \eta_a(\eps) \right), \eE_{a,n}(\eps)\right] \numberthis{\label{eq: l7bungi}}
\end{align*}
where $Z$ is a random variable following the chi-squared distribution with $2(n-k)$ degrees of freedom, i.e., $Z \sim \chi_{2n-2k}^2$.

\subsubsection{Under $\TS$}~\label{sec: L7STS}
Here, we first consider the case of $\TS$ where we replace $\alpha_{a,n}$ with $\ha_a(n)$.

Since $\ha_a \in [\alpha_a - \eps_{a,l}, \alpha_a + \eps_{a,u}] $ holds on $\eE_{a,n}(\eps)$, we have
\begin{equation}\label{eq: haonE}
    \frac{1}{\alpha_a} - \eps \left( 1 + \frac{1}{\kappa_a} \right)=\frac{1}{\alpha_a + \eps_{a,u}}\leq \frac{1}{\ha_a} \leq \frac{1}{\alpha_a - \eps_{a,l}} =  \frac{1}{\alpha_a} + \eps
\end{equation}
by the definition of $\eps_{a,l}(\eps)$ and $\eps_{a,u}(\eps)$ in (\ref{eq: epslu}).

By replacing $\alpha_{a,n}$ with $\ha_a(n)$ in (\ref{eq: l7bungi}) and applying (\ref{eq: haonE}), we have
\begin{align*}
     \mathbb{P}\left[\tmu_a(t) \geq \mu_1 - \eps, \eE_{a,n}(\eps) \right] &\leq \mathbb{P}\left[ Z \leq \frac{2n}{\ha_a}\left( \alpha_a - \eta_a(\eps) \right), \eE_{a,n}(\eps)\right] \\
     &\leq \mathbb{P}\left[ Z \leq 2n\left( \frac{1}{\alpha_a} +\eps \right)\left( \alpha_a - \eta_a(\eps) \right)\right] \\
     &= \mathbb{P}\left[ Z \leq 2(n-k) \frac{n}{n-k}\left( \frac{1}{\alpha_a} +\eps \right)\left( \alpha_a - \eta_a(\eps) \right)\right]. \numberthis{\label{eq: tildealpha}}
\end{align*}

\paragraph{Priors with $k \in\mathbb{Z}_{\leq 0}$.}
Let us first consider the case $k \in\mathbb{Z}_{\leq 0}$, where we have $\frac{n}{n-k} \leq 1$.
It holds that
\begin{align*}
     \mathbb{P}\left[\tmu_a(t) \geq \mu_1 - \eps, \eE_{a,n}(\eps) \right] &\leq \mathbb{P}\left[ Z \leq 2(n-k) \frac{n}{n-k}\left( \frac{1}{\alpha_a} +\eps \right)\left( \alpha_a - \eta_a(\eps) \right)\right] \\
     &\leq \mathbb{P}\left[ Z \leq 2(n-k)\left( \frac{1}{\alpha_a} +\eps \right)\left( \alpha_a - \eta_a(\eps) \right)\right].
\end{align*}
Note that the definition of $\veps_a$ in Theorem~\ref{thm: RegOpt} is set to satisfy $\left( \frac{1}{\alpha} +\eps \right)\left( \alpha - \eta_a(\eps) \right)<1$ for any $\eps \leq \veps_a$.
Thus, we can apply Lemma~\ref{lem: chicramer}, which shows
\begin{equation}\label{eq: overestimationtmu2}
    \mathbb{P}\left[ Z \leq 2(n-k)\left( 1 - \frac{\eta_a(\eps)}{\alpha_a} + \eps (\alpha_a -\eta_a(\eps) ) \right)\right] \leq e^{-(n-k) D_{a,k}(\eps)},
\end{equation}
where
\begin{multline}\label{eq: defDa}
    D_{a,k}(\eps) := -\log\left( 1 - \frac{\eta_a(\eps)}{\alpha_a} + (\max(0, k) + 1)\eps (\alpha_a -\eta_a(\eps) )\right) \\ - \frac{\eta_a(\eps)}{\alpha_a} +(\max(0, k)+1) \eps (\alpha_a -\eta_a(\eps) )
\end{multline}
is a finite constant that only depends on the model parameters, $\eps$, and prior parameter $k$.

For arbitrary $n_a > 0$, applying (\ref{eq: overestimationtmu2}) to (\ref{eq: l7bungi}) gives
\begin{align*}
  \sum_{t=\bn K+1}^T \mathbb{E}[\I [j(t)=a, \tmu_1(t) \geq \mu_1 -  \eps, \eE_{a,N_a(t)}(\eps) ]] \hspace{-5em}& \\
  &\leq \sum_{t=\bn K+1}^T\mathbb{P}[j(t)=a, \tmu_a(t) \geq \mu_1 - \eps, \eE_{a,n}(\eps)] \\
    &\leq n_a +  \sum_{t=\bn K+1}^{T}\mathbb{P}[\tmu_a(t) \geq \mu_1 - \eps, \eE_{a,N_a(t)}(\eps), N_a(t) \geq n_a ] \\
    &\leq n_a + \sum_{t=\bn K+1}^T e^{-(n_a-k) D_{a,k}(\eps)}  \\
    &\leq n_a + \sum_{t=\bn K+1}^T e^{-n_a D_{a,k}(\eps)} = n_a + Te^{- n_a D_{a,k}(\eps)}.
\end{align*}
Letting $n_a = \frac{\log T }{D_{a,k}(\eps)}$ concludes the cases of priors with $k \in \mathbb{Z}_{\leq 0}$.

\paragraph{Priors with $k \in \mathbb{Z}_{>0}$}
Next, consider the case $k \in \mathbb{Z}_{>0}$.
Recall that we first play every arm $k+1$ times if $k >0$, which implies that $n - k >0$.
For $n \geq \frac{1}{\alpha \eps} + k + 1$, it holds that
\begin{equation}\label{eq: positivek2}
    \frac{n}{n-k}\left( \frac{1}{\alpha} +\eps \right) \leq \frac{1}{\alpha} + (k+1)\eps.
\end{equation}
By applying (\ref{eq: positivek2}) to (\ref{eq: l7bungi}), we have for $n \geq \frac{1}{\alpha \eps} + k + 1$,
\begin{align*}
    \mathbb{P}[ \ta_a \leq \alpha_a - \eta_a(\eps), \eE_{a,N_a(t)}(\eps)] \leq \mathbb{P}\left[ Z \leq 2(n-k)\left( 1 - \frac{\eta_a(\eps)}{\alpha_a} + (k+1)\eps (\alpha_a -\eta_a(\eps) ) \right)\right].
\end{align*}
Similarly, by applying Lemma~\ref{lem: chicramer}, one can see that for $n \geq  \frac{1}{\alpha_a \eps} + k + 1$
\begin{equation}\label{eq: overestimationtmu3}
  \mathbb{P}[ \ta_a \leq \alpha_a - \eta_a(\eps), \eE_{a,N_a(t)}(\eps)] \leq e^{-(n-k)D_{a,k}(\eps)},
\end{equation}
where $D_{a,k}(\eps)$ is defined in (\ref{eq: defDa}).

When $k \in \mathbb{Z}_{>0}$, let $n_a \geq \frac{1}{\alpha_a \eps} + k + 1 $ be arbitrary. By applying (\ref{eq: overestimationtmu3}) to (\ref{eq: l7bungi}) again, we have 
\begin{align*}
  \sum_{t=\bn K+1}^T \mathbb{E}[\I [j(t)=a, \tmu_1(t) \geq \mu_1 - \eps, \eE_{a,N_a(t)}(\eps) ]] \hspace{-10em}&\\ &\leq \sum_{t=\bn K+1}^T\mathbb{P}[j(t)=a, \tmu_a(t) \geq \mu_1 - \eps, \eE_{a,n}(\eps)] \\
    &\leq n_a +  \sum_{t=\bn K+1}^{T}\mathbb{P}[\tmu_a(t) \geq \mu_1 - \eps, \eE_{a,N_a(t)}(\eps), N_a(t) \geq n_a ] \\
    &\leq n_a + \sum_{t=\bn K+1}^T e^{-(n_a-k) D_{a,k}(\eps)} = n_a + Te^{- (n_a-k) D_{a,k}(\eps)}.
\end{align*}
Letting $n_a = k+1 + \frac{1}{\alpha_a \eps} + \frac{\log T }{D_{a,k}(\eps)}$ concludes the cases of priors with $k > 0$.

\subsubsection{Under $\TST$}
Here, we consider the case of $\TST$ where we replace $\alpha_{a,n}$ with $\ba_a(n) = \min(\ha_a(n), n)$.
From the definition of $\ba_a(n)$, it holds that for $\eps \leq \veps_a$
\begin{equation*}
    \forall n \geq \alpha_a + 1 :  \I[\ba_a(n) = \ha_a(n), \eA_{a,n}(\eps) ] = 1.
\end{equation*}
Therefore, for $n \geq \alpha_a + 1$, the analysis on $\TS$ can be applied to $\TST$ directly.

Let us consider the case where $\ba_a(n) = n < \alpha_a + 1$ holds under the condition $\eA_{a,n}(\eps)$.
By replacing $\alpha_{a,n}$ with $n$ in (\ref{eq: l7bungi}) and following the same steps as in (\ref{eq: l7bungi}) and (\ref{eq: overestimationtmu2}), we have for any $k \in \mathbb{Z}$,
\begin{align*}
    \mathbb{P}\left[\tmu_a(t) \geq \mu_1 - \eps, \eE_{a,n}(\eps) \right] &\leq \mathbb{P}\left[ Z \leq \frac{2n}{n}\left( \alpha_a - \eta_a(\eps) \right), \eE_{a,n}(\eps)\right] \\
     &\leq \mathbb{P}\left[ Z \leq 2(n-k) \frac{1}{n-k}\left( \frac{1}{\alpha_a} +\eps \right)\left( \alpha_a - \eta_a(\eps) \right), \eE_{a,n}(\eps)\right] \\ 
     &\leq \mathbb{P}\left[ Z \leq 2(n-k) \left( \frac{1}{\alpha_a} +\eps \right)\left( \alpha_a - \eta_a(\eps) \right), \eE_{a,n}(\eps)\right] \\
     &\leq e^{-(n-k) D_{a,k}(\eps)},
\end{align*}
where $D_{a,k}(\eps)$ defined in (\ref{eq: defDa}).
Therefore, the same result follows by the analysis in Section~\ref{sec: L7STS}.

\subsubsection{Asymptotic behavior of $D_{a,k}(\eps)$}
Finally, we show that $\lim_{\eps \to 0} D_{a,k}(\eps) = \KLinf(a)$.
From their definitions of $\eta_a(\eps)$ in (\ref{eq: defeta}) and $\Delta_a = \mu_1 - \mu_a$, we have
\begin{align*}
    \lim_{\eps \to 0}\eta_a(\eps) &= \lim_{\eps \to 0} \frac{\kappa_a(\Delta_a -\eps) - \eps \mu_a}{(\mu_a - \kappa_a)(\mu_1 - \kappa_a - 2\eps)}
    \\ &=\frac{\kappa_a \Delta_a}{(\mu_a - \kappa_a)(\mu_1 - \kappa_a)} =  \frac{\kappa_a (\mu_1 - \mu_a)}{(\mu_a - \kappa_a)(\mu_1 - \kappa_a)} = (\alpha_a - 1 )\frac{\mu_1 -\mu_a}{\mu_1 -\kappa_a}.
\end{align*}
Then, it holds that
\begin{align*}
  \lim_{\eps \to 0} 1 - \frac{\eta_a(\eps)}{\alpha_a} &= 1 - \left( \frac{\alpha_a - 1}{\alpha_a} \frac{\mu_1 - \mu_a}{\mu_1 - \kappa_a} \right) \\
  &= \frac{\alpha_a (\mu_1 -\kappa_a) - (\alpha_a-1)(\mu_1 -\mu_a)}{\alpha_a(\mu_1 -\kappa_a)} \\
  &= \frac{\alpha_a(\mu_a - \kappa_a) + \mu_1 - \mu_a}{\alpha_a(\mu_1 - \kappa_a)} \\ 
  &=  \frac{\mu_a}{\mu_a - \kappa_a} \frac{(\mu_a - \kappa_a) }{\alpha_a(\mu_1 - \kappa_a)} + \frac{\mu_1 - \mu_a }{\alpha_a(\mu_1 - \kappa_a)} \hspace{2em} \because \alpha_a = \frac{\mu_a}{\mu_a - \kappa_a}\\
  &= \frac{\mu_1}{\alpha_a(\mu_1 - \kappa_a)} =: \frac{1}{X_a},
\end{align*}
Therefore, from the definition of $D_{a,k}(\eps)$ in (\ref{eq: defDa})
\begin{align*}
    \lim_{\eps \to 0} D_{a,k}(\eps)&= \lim_{\eps \to 0} \bigg[
    -\log\left( 1 - \frac{\eta_a(\eps)}{\alpha_a} + (\max(0, k) + 1) (\alpha_a -\eta_a(\eps) )\eps\right) \\ 
    &\hspace{11em}- \frac{\eta_a(\eps)}{\alpha_a} +(\max(0, k)+1)  (\alpha_a -\eta_a(\eps) ) \eps\bigg] \\
    &= - \log (1- \lim_{\eps \to 0} \frac{\eta_a(\eps)}{\alpha_a}) - \lim_{\eps \to 0} \frac{\eta_a(\eps)}{\alpha_a}\\
    &= -\log \frac{1}{X_a} + \frac{1}{X_a} - 1 = \log X_a +  \frac{1}{X_a} - 1 \\
    &= \log\left( \alpha_a \frac{\mu_1 - \kappa_a}{\mu_1} \right) + \frac{\mu_1}{\alpha_a(\mu_1 -\kappa_a)} - 1 = \KLinf(a),
\end{align*}
where the last equality comes from Lemma~\ref{lem: KLinf}.
\end{proof}

\subsection{Proof of Lemma~\ref{lem: korda}}\label{app: lem7}
Firstly, we state two well-known facts that are utilized in the proof. 
\begin{fact}~\label{fact2}
When $X \sim\Par(\kappa, \alpha)$ with the scale parameter $\kappa \in \mathbb{R}$ and rate parameter $\alpha \in \mathbb{R}_{+}$, then $\log\left( \frac{X}{\kappa} \right)$ follows the exponential distribution with rate $\alpha$, i.e., $\log\left( \frac{X}{\kappa} \right) \sim \Exp(\alpha)$.
\end{fact}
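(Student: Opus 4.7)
The plan is to verify this by a direct CDF computation, which is the cleanest route for a one-dimensional monotone transformation. Let $Y = \log(X/\kappa)$. Since $X \geq \kappa$ almost surely under $\Par(\kappa,\alpha)$, we have $Y \geq 0$ almost surely, matching the support of $\Exp(\alpha)$.

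Next, I would compute $\mathbb{P}[Y \leq y]$ for $y \geq 0$. Since $x \mapsto \log(x/\kappa)$ is strictly increasing,
\begin{equation*}
\mathbb{P}[Y \leq y] = \mathbb{P}[X \leq \kappa e^y] = \int_{\kappa}^{\kappa e^y} \frac{\alpha \kappa^\alpha}{x^{\alpha+1}}\,\dx x = 1 - \left(\frac{\kappa}{\kappa e^y}\right)^{\alpha} = 1 - e^{-\alpha y},
\end{equation*}
where the integral is evaluated using the antiderivative $-\kappa^\alpha x^{-\alpha}$ of the Pareto density in (\ref{eq: pdfPar}). This is exactly the CDF of the exponential distribution with rate $\alpha$, so $Y \sim \Exp(\alpha)$.

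There is no real obstacle here; the only thing to be slightly careful about is to note that the support of $X$ is $[\kappa,\infty)$, so $\log(X/\kappa) \geq 0$, which makes the change-of-variables valid without needing to patch the case $y<0$ (for which both sides of the CDF equation equal $0$). Alternatively, one could do the proof by change of variables on densities: writing $x = \kappa e^y$ with Jacobian $\dx x/\dx y = \kappa e^y$, the density of $Y$ at $y \geq 0$ becomes
\begin{equation*}
f_Y(y) = f_{\kappa,\alpha}^{\mathrm{Pa}}(\kappa e^y)\cdot \kappa e^y = \frac{\alpha\kappa^\alpha}{(\kappa e^y)^{\alpha+1}}\cdot \kappa e^y = \alpha e^{-\alpha y},
\end{equation*}
which is the $\Exp(\alpha)$ density. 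Either derivation fits in a couple of lines.
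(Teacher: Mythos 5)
Your computation is correct: the CDF identity $\mathbb{P}[\log(X/\kappa) \leq y] = 1 - e^{-\alpha y}$ for $y \geq 0$ (equivalently, the density change of variables) is exactly the standard argument, and your remark about the support $[\kappa,\infty)$ handling the case $y<0$ is the only point requiring care. The paper itself states this as a well-known fact and gives no proof, so your derivation simply supplies the omitted routine verification.
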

\begin{fact}~\label{fact3}
Let $X_1, \ldots, X_n$ be identically independently distributed with the exponential distribution with the rate parameter $\alpha$, i.e., $X_i \stackrel{\text{i.i.d.}}{\sim} \Exp(\alpha)$ for any $i \in [n]$.
Then, their sum follows the Erlang distribution with the shape parameter $n\in\mathbb{N}$ and rate parameter $\alpha$, i.e., $\sum_{i=1}^n X_i \sim \Er(n, \alpha)$. 
\end{fact}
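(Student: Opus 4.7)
The plan is to prove Fact~\ref{fact3} by the moment generating function (MGF) method, which exploits the uniqueness theorem: two distributions whose MGFs coincide on an open neighborhood of the origin are equal. First I would compute the MGF of a single $X_i \sim \Exp(\alpha)$ as
\[
M_{X_i}(t) = \int_0^\infty e^{tx}\alpha e^{-\alpha x}\dx x = \frac{\alpha}{\alpha - t}, \quad t < \alpha.
\]
Then, by the mutual independence of $X_1,\ldots,X_n$, the MGF of $S_n := \sum_{i=1}^n X_i$ factorizes as $M_{S_n}(t) = \prod_{i=1}^n M_{X_i}(t) = \left(\frac{\alpha}{\alpha-t}\right)^n$ on the same domain $t<\alpha$.

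Next, I would compute the MGF of $Y \sim \Er(n,\alpha)$ directly from the density in (\ref{eq: pdf_erlang}):
\[
M_Y(t) = \int_0^\infty e^{tx}\frac{\alpha^n}{(n-1)!}x^{n-1}e^{-\alpha x}\dx x = \frac{\alpha^n}{(n-1)!}\int_0^\infty x^{n-1}e^{-(\alpha-t)x}\dx x = \left(\frac{\alpha}{\alpha-t}\right)^n,
\]
where the final integral is evaluated using the Gamma-function identity $\int_0^\infty x^{n-1}e^{-cx}\dx x = (n-1)!/c^n$ with $c = \alpha - t > 0$. Since $M_{S_n}$ and $M_Y$ agree on $(-\infty,\alpha)$, an open neighborhood of $0$, the uniqueness theorem for MGFs yields $S_n \sim \Er(n,\alpha)$.

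An equivalent route is induction on $n$ via convolution of densities. The base case $n=1$ is immediate since $\Er(1,\alpha)=\Exp(\alpha)$ from (\ref{eq: pdf_erlang}). For the inductive step, assuming $S_{n-1}\sim\Er(n-1,\alpha)$, the independence of $S_{n-1}$ and $X_n$ together with the support $\mathbb{R}_+$ of both gives
\[
f_{S_n}(x) = \int_0^x \frac{\alpha^{n-1}y^{n-2}e^{-\alpha y}}{(n-2)!}\,\alpha e^{-\alpha(x-y)}\dx y = \frac{\alpha^n e^{-\alpha x}}{(n-2)!}\int_0^x y^{n-2}\dx y = \frac{\alpha^n x^{n-1}e^{-\alpha x}}{(n-1)!},
\]
which matches $f^{\mathrm{Er}}_{n,\alpha}(x)$ in (\ref{eq: pdf_erlang}). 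No substantive obstacle arises since both arguments are standard computations; the only care needed is ensuring $t<\alpha$ so that the exponential MGF is finite and respecting the support restriction $x\geq 0$ when writing the convolution.
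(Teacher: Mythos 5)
Your proof is correct: both the MGF computation and the convolution induction are standard and carried out accurately, including the domain restriction $t<\alpha$ needed for the uniqueness theorem and the support restriction in the convolution. The paper itself offers no proof of this statement --- it lists Fact~\ref{fact3} among well-known facts stated without proof --- so there is nothing to compare against; either of your two routes would serve as a complete justification.
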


\lemkordaused*

\begin{proof}
When one considers the Pareto distribution with known scale parameter $\kappa$ that belongings to the one-dimensional exponential family, the posterior on the shape parameter $\alpha^{\mathrm{one}} >0$ after observing $n=N_1(t)$ rewards is given for $k \in \mathbb{Z}$
\begin{equation}\label{eq: betapost}
    \alpha^{\mathrm{one}}~|~\Ft \sim \Er\left(n-k+1, X_n \right),
\end{equation}
where $X_n = \sum_{s=1}^n \log(r_{1, s}) - n \log (\kappa_1)$.
Note that $X_n \sim \Er(n, \alpha_1)$ from Facts~\ref{fact2} and~\ref{fact3}.
Let $\ta_1^{\mathrm{one}}$ be a sample from the posterior distribution in (\ref{eq: betapost}).
Then, for one-dimensional Pareto bandits, it holds from (\ref{eq: def_inc_gamma}) that
\begin{equation*}
    \mathbb{P}[\tmu_1(t) \leq \mu_1 - \eps| \Ft] = \mathbb{P}\left[ \ta_1^{\mathrm{one}} \geq \beta \Lmid \Ft \right] = \frac{\Gamma\left(n-k+1, \beta  X_n \right)}{\Gamma(n-k+1)},
\end{equation*}
where we denoted $\beta = \frac{\mu_1 - \eps}{\mu_1-\eps -\kappa_1}$ satisfying $\mu(\kappa_1, \beta) = \mu_1 - \eps$.
Therefore, Lemma~\ref{lem: kordaresult} can be written as
\begin{align*}
    \sum_{t=1}^T \mathbb{E}\left[ \I \left[ j(t)\ne 1, \eM_\eps^c(t)\right] \right] &= \sum_{t=1}^T \sum_{n=1}^T \mathbb{E}\left[ \I \left[ j(t)\ne 1, \eM_\eps^c(t), N_1(t) = n \right] \right] \\
    &=  \sum_{t=1}^T \sum_{n=1}^T \mathbb{E}\left[\mathbb{P}\left[ j(t)\ne 1, \eM_\eps^c(t), N_1(t) = n \Lmid \Ft \right] \right] \\
    &= \sum_{t=1}^T \sum_{n=1}^T \int_0^\infty \frac{\Gamma(n+1, \beta x)}{\Gamma(n+1)} \frac{\alpha_1^n}{\Gamma(n)}x^{n-1} e^{-\alpha_1 x} \dx x \leq \mathcal{O}(\eps^{-1}),
\end{align*}
where we injected the density function of the Erlang distribution into the last equality.

On the other hand, for two-parameter Pareto bandits where the scale parameter is unknown, it holds by the law of total expectation that
\begin{align*}
    \mathbb{E}[\I[j(t)\ne 1, \eK_{1,N_1(t)}(\eps), \eM_\eps^c(t)]] &= \mathbb{E}_{\hk_1, \ha_1}\left[ \mathbb{P}[j(t)\ne 1, \eK_{1,N_1(t)}(\eps), \eM_{\eps}^c(t) | \Ft] \right] \\
    &= \mathbb{E}_{\hk_1, \ha_1}\left[ \I[\eK_{1,N_1(t)}(\eps)]\mathbb{P}[j(t)\ne 1,\eM_{\eps}^c(t) | \Ft] \right],
\end{align*}
where the last equality holds since $\eK$ is determined by the history $\Ft$.

From Lemma~\ref{lem: tmusmall} with $y = \mu_1 -\eps $, it holds for any $\xi \leq \frac{\mu_1 -\eps}{\mu_1-\eps - \kappa_1} = \beta$ that
\begin{align*}
    \I[\eK_{1,n}(\eps)]\mathbb{P}[\tmu_1(t) \leq \mu_1 - \eps| \Ft] \hspace{-8em}& \\
    &\leq  \I[\eK_{1,n}(\eps)] \left( \left( \frac{\mu_1 - \eps}{\mu((\kappa_1, \xi))}\right)^n \int_{1}^{\xi} f_{n-k, \frac{n}{\alpha_{1,n}}}^{\text{Er}}(x) \dx x + \int_{\xi}^\infty f_{n-k, \frac{n}{\alpha_{1,n}}}^{\text{Er}}(x) \dx x \right)\\
    &\leq \I[\eK_{1,n}(\eps)] \left(  \left( \frac{\mu_1 - \eps}{\mu((\kappa_1, \xi))}\right)^n \left( 1- \frac{\Gamma\left(n-k, \frac{n}{\alpha_{1,n}}\xi\right)}{\Gamma(n-k)}\right) + \frac{\Gamma\left(n-k, \frac{n}{\alpha_{1,n}}\xi\right)}{\Gamma(n-k)} \right) \numberthis{\label{eq: cvx}}
\end{align*}
which is a convex combination of $1$ and $\left( \frac{\mu_1 - \eps}{\mu((\kappa_1, \xi))}\right)^n$.
Therefore, RHS of (\ref{eq: cvx}) increases as $\frac{\Gamma\left(n-k, \frac{n}{\alpha_{1,n}}\xi\right)}{\Gamma(n-k)}$ increases.
From the definition of $\Gamma(n,x)$, it holds that $\Gamma(n, x) \geq \Gamma(n, x+y)$ for any positive $y>0$ and $\Gamma(n+1, x) = n\Gamma(n,x) + x^n e^{-x}$.
Since $\frac{n}{\ha_1(n)} \leq \frac{n}{\ba_1(n)}$ holds for any $n \in \mathbb{N}$, it holds for $k \in \mathbb{Z}_{\geq 0}$ that
\begin{equation*}
    \frac{\Gamma\left(n-k, \frac{n}{\ba_1(n)}\xi\right)}{\Gamma(n-k)}  \leq \frac{\Gamma\left(n-k, \frac{n}{\ha_1(n)}\xi\right)}{\Gamma(n-k)} \leq \frac{\Gamma\left(n, \frac{n}{\ha_1(n)}\xi\right)}{\Gamma(n)}.
\end{equation*}

Let us denote $Y_n := \frac{n}{\ha_1(n)} = \sum_{i=1}^n \log (r_{1,s}) - n \log(\hk_1(n))$, which follows the Erlang distribution with shape $n-1$ and rate $\alpha_1$~\citep{malik1970estimation}.
By taking expectation with respect to $\hk_1(n)$, we have for any $\xi \leq \beta$ that
\begin{align*}
  \mathbb{E}_{\hk_1}[ \I[\eK_{1,n}(\eps)]\mathbb{P}[\tmu_1(t) \leq \mu_1 - \eps| \Ft] ] \hspace{-8em}& \\
  &\leq \int_{\kappa_1}^{\kappa_1+\eps}  \left(  \left( \frac{\mu_1 - \eps}{\mu((\kappa_1, \xi))}\right)^n \left( 1- \frac{\Gamma\left(n, \xi Y_n\right)}{\Gamma(n)}\right) + \frac{\Gamma\left(n, \xi Y_n\right)}{\Gamma(n)} \right) \mathbb{P}[\hk_1(n) = x] \dx x  \\
  &= \mathbb{P}[\eK_{1,n}(\eps)] \left(  \left( \frac{\mu_1 - \eps}{\mu((\kappa_1, \xi))}\right)^n \left( 1-\frac{\Gamma\left(n, \xi Y_n\right)}{\Gamma(n)} \right) + \frac{\Gamma\left(n, \xi Y_n\right)}{\Gamma(n)} \right) \\
  &= \left(1 - \left( \frac{\kappa_1}{\kappa_1 + \eps} \right)^{n\alpha_1} \right) \left(  \left( \frac{\mu_1 - \eps}{\mu((\kappa_1, \xi))}\right)^n \left( 1- \frac{\Gamma\left(n, \xi Y_n\right)}{\Gamma(n)} \right) + \frac{\Gamma\left(n, \xi Y_n\right)}{\Gamma(n)} \right),
\end{align*}
where we used $\hk_1(n) \sim \Par(\kappa_1, n\alpha_1)$ in (\ref{eq: MLEdist}) for the last equality.

Therefore, under the two-parameter Pareto distribution, the following holds for any $\xi \leq \beta$ under both $\TS$ and $\TST$ with $k\in \mathbb{Z}_{\geq 0}$ that
\begin{multline*}
  \mathbb{E}_{\hk_1,\ha_1}[ \I[\eK_{1,n}(\eps)]\mathbb{P}[\tmu_1(t) \leq \mu_1 - \eps| \Ft] ] \\ 
    \leq \left(1 - \left( \frac{\kappa_1}{\kappa_1 + \eps} \right)^{n\alpha_1} \right) \int_0^{\infty} \left(  \left( \frac{\mu_1 - \eps}{\mu((\kappa_1, \xi))}\right)^n \left( 1- \frac{\Gamma\left(n, \xi y\right)}{\Gamma(n)} \right) + \frac{\Gamma\left(n, \xi y\right)}{\Gamma(n)} \right) \frac{\alpha_1^{n-1}}{\Gamma(n-1)}y^{n-2} e^{-\alpha_1 y} \dx y.
\end{multline*}
Therefore, Lemma~\ref{lem: kordaresult} concludes the proof for any $n \in \mathbb{N}$, by carefully selecting $\xi \leq \beta$ satisfying
\begin{multline*}
    \left(1 - \left( \frac{\kappa_1}{\kappa_1 + \eps} \right)^{n\alpha_1} \right) \int_0^{\infty} \left(  \left( \frac{\mu_1 - \eps}{\mu((\kappa_1, \xi))}\right)^n \left( 1- \frac{\Gamma\left(n, \xi y\right)}{\Gamma(n)} \right) + \frac{\Gamma\left(n, \xi y\right)}{\Gamma(n)} \right) \frac{\alpha_1^{n-1}}{\Gamma(n-1)}y^{n-2} e^{-\alpha_1 y} \dx y \\
    \leq \int_0^\infty \frac{\Gamma(n+1, \beta y)}{\Gamma(n+1)} \frac{\alpha_1^n}{\Gamma(n)}y^{n-1} e^{-\alpha_1 y} \dx y .
\end{multline*}
Note that when we consider $\TS$ with $k=-1$, we have to find $\xi' \leq \beta$ such that
\begin{align*}
    \left(1 - \left( \frac{\kappa_1}{\kappa_1 + \eps} \right)^{n\alpha_1} \right)  \hspace{-7em}& \\ 
    &\times \int_0^{\infty} \left(  \left( \frac{\mu_1 - \eps}{\mu((\kappa_1, \xi'))}\right)^n \left( 1- \frac{\Gamma\left(n+1, \xi' y\right)}{\Gamma(n+1)} \right) + \frac{\Gamma\left(n+1, \xi' y\right)}{\Gamma(n+1)} \right) \frac{\alpha_1^{n-1}}{\Gamma(n-1)}y^{n-2} e^{-\alpha_1 y} \dx y \\
    &\hspace{15em}\leq \int_0^\infty \frac{\Gamma(n+1, \beta y)}{\Gamma(n+1)} \frac{\alpha_1^n}{\Gamma(n)}y^{n-1} e^{-\alpha_1 y} \dx y .
\end{align*}
From $\Gamma(n, x) \geq \Gamma(n, x+y)$ for any positive $x, y>0$ and $\xi' \leq \beta$, we have for any $x > 0$ that
\begin{equation*}
    \frac{\Gamma\left(n+1, \xi' x \right)}{\Gamma(n+1)} \geq  \frac{\Gamma(n+1, \beta x)}{\Gamma(n+1)}.
\end{equation*}
Therefore, for $k \in \mathbb{Z}_{\leq -1}$, we might not be able to apply the results by \citet{KordaTS}.
\end{proof}

\subsection{Proof of Lemma~\ref{lem: minor}}\label{sec: minor}
We first state two lemmas on the event $\eK$ and $\eA$.
\begin{restatable}{lemma}{lemeKbound}\label{lem: eKbound}
For any algorithm and $a\in [K]$, it holds that for all $\eps >0$, $t >0 $, and $n \in \mathbb{N}$
\begin{equation*}
    \mathbb{P}\left[\eK_{a,N_a(t)}^c(\eps), N_a(t)=n\right]  \leq \exp\left( -\frac{\alpha_a \eps}{\kappa_a+\eps}  n \right).
\end{equation*}
\end{restatable}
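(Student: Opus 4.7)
The strategy is to reduce the event to one that depends only on the i.i.d.\ reward stream from arm $a$, exploit the explicit distribution of the MLE $\hk_a(n)$, and finish with a standard logarithmic inequality.

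First I would note that the reward sequence $(r_{a,s})_{s \geq 1}$ of arm $a$ is i.i.d.\ $\Par(\kappa_a,\alpha_a)$, and is independent of the algorithm's plays (the algorithm only selects which of these pre-generated rewards is revealed at each round). Consequently $\hk_a(n) = \min_{s \in [n]} r_{a,s}$ is a well-defined random variable whose law does not depend on the policy. Since the support of $\Par(\kappa_a,\alpha_a)$ is $[\kappa_a,\infty)$, we always have $\hk_a(n) \geq \kappa_a$, so
\begin{equation*}
  \eK_{a,n}^c(\eps) = \{\hk_a(n) > \kappa_a + \eps\}.
\end{equation*}

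Next, I would use the trivial bound
\begin{equation*}
  \mathbb{P}[\eK_{a,n}^c(\eps),\, N_a(t) = n] \;\leq\; \mathbb{P}[\eK_{a,n}^c(\eps)]
\end{equation*}
to drop the dependence on the algorithm entirely. By (\ref{eq: MLEdist}), $\hk_a(n) \sim \Par(\kappa_a,\, n\alpha_a)$, so the tail of a Pareto directly gives
\begin{equation*}
  \mathbb{P}[\hk_a(n) > \kappa_a + \eps] \;=\; \left(\frac{\kappa_a}{\kappa_a+\eps}\right)^{n\alpha_a}.
\end{equation*}

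Finally, I would convert this bound into the desired exponential form via the elementary inequality $\log(1+x) \geq \frac{x}{1+x}$ for $x > -1$. Applying it with $x = \eps/\kappa_a$ yields
\begin{equation*}
  \log\!\left(\frac{\kappa_a+\eps}{\kappa_a}\right) \;\geq\; \frac{\eps}{\kappa_a+\eps},
\end{equation*}
so
\begin{equation*}
  \left(\frac{\kappa_a}{\kappa_a+\eps}\right)^{n\alpha_a} \;=\; \exp\!\left(-n\alpha_a \log\frac{\kappa_a+\eps}{\kappa_a}\right) \;\leq\; \exp\!\left(-\frac{n\alpha_a \eps}{\kappa_a + \eps}\right),
\end{equation*}
which is exactly the claim. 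There is no real obstacle here: the only subtle point is justifying the independence between $\hk_a(n)$ and the algorithm's trajectory, which follows from the standard pre-generated-reward formulation of the stochastic MAB problem; everything else is a one-line computation.
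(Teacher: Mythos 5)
Your proposal is correct and follows essentially the same route as the paper: invoke the known law $\hk_a(n)\sim\Par(\kappa_a,n\alpha_a)$ from (\ref{eq: MLEdist}), compute the Pareto tail $\left(\kappa_a/(\kappa_a+\eps)\right)^{n\alpha_a}$, and convert it via $\log(1+x)\geq x/(1+x)$. If anything, you are slightly more careful than the paper, which writes the joint probability with $\{N_a(t)=n\}$ as an equality with the marginal tail rather than bounding it as you do.
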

\begin{restatable}{lemma}{lemeAbound}\label{lem: eAbound}
For any algorithm and for any $a\in [K]$, it holds that for all $\eps \in \left(0, \frac{\kappa_a}{\alpha_a( \kappa_a+1)}\right)$ and $t >0$, and $n\geq \bn$
\begin{equation*}
  \mathbb{P}\left[\eA_{a,N_a(t)}^c(\eps), \eK_{a,N_a(t)}(\eps), N_a(t)=n\right] \leq 2\exp\left( -\frac{\alpha_a^2 \eps^2}{4}n \right),
\end{equation*}
\end{restatable}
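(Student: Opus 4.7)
The plan is to reduce the statement to a standard one-dimensional concentration bound on $\ha_a(n)$. Since $\ha_a(n)$ is measurable with respect to the first $n$ rewards of arm $a$, which are i.i.d.\ $\Par(\theta_a)$ regardless of the policy, the usual bandit argument gives
\begin{equation*}
    \mathbb{P}[\eA_{a,n}^c(\eps),\eK_{a,n}(\eps),N_a(t)=n]\ \leq\ \mathbb{P}[\eA_{a,n}^c(\eps)],
\end{equation*}
with $\ha_a(n)\sim \InvG(n-1,n\alpha_a)$ as in (\ref{eq: MLEdist}); the event $\eK_{a,n}(\eps)$ can be discarded for free here because $\hk_a(n)$ is independent of $\ha_a(n)$. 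Next, I would invert the definitions in (\ref{eq: epslu}) by direct algebra: $\alpha_a-\eps_{a,l}(\eps)=\alpha_a/(1+\eps\alpha_a)$ and $\alpha_a+\eps_{a,u}(\eps)=\alpha_a\kappa_a/(\kappa_a-\eps\alpha_a(\kappa_a+1))$, so that
\begin{equation*}
    \frac{1}{\alpha_a-\eps_{a,l}(\eps)}=\frac{1}{\alpha_a}+\eps,\qquad \frac{1}{\alpha_a+\eps_{a,u}(\eps)}=\frac{1}{\alpha_a}-\eps\!\left(1+\tfrac{1}{\kappa_a}\right).
\end{equation*}
The constraint $\eps<\kappa_a/(\alpha_a(\kappa_a+1))$ is exactly what keeps the lower threshold positive. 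Setting $Z:=1/\ha_a(n)$, Fact~\ref{eq: fact_IG} gives $Z\sim\Er(n-1,n\alpha_a)$, and the two tails of $\eA^c$ translate into $\mathbb{P}[Z>1/\alpha_a+\eps]$ and $\mathbb{P}[Z<1/\alpha_a-\eps(1+1/\kappa_a)]$.

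I would then apply Chernoff to each tail using the Erlang moment generating function $\mathbb{E}[e^{sZ}]=(1-s/(n\alpha_a))^{-(n-1)}$, valid for $s<n\alpha_a$. Optimizing over $s$ for the upper tail at $u=1/\alpha_a+\eps$ yields $s^\star=n\alpha_a^2\eps/(1+\alpha_a\eps)$ and produces a bound of the form $\exp\!\bigl(-(n-1)[\alpha_a\eps-\log(1+\alpha_a\eps)]+\log(1+\alpha_a\eps)\bigr)$; the lower tail is handled analogously with $s<0$, using the slightly larger deviation $\eps(1+1/\kappa_a)$. An alternative route, which gives the same shape, is to rewrite $2n\alpha_a Z\sim\chi^2_{2(n-1)}$ and invoke the chi-squared Cramér inequality (Lemma~\ref{lem: chicramer}) that is already used for Lemma~\ref{lem: mainreg}. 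Either way, a union bound of the two one-sided probabilities yields the factor of $2$ appearing in the statement.

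The main technical obstacle is pinning down the constant $\alpha_a^2\eps^2/4$ in the exponent uniformly in $n\geq\bn$ and $\eps$ in the stated range. Concretely, one needs the second-order lower bound $x-\log(1+x)\geq x^2/4$ applied at $x=\alpha_a\eps$ (upper tail) and at $x=\alpha_a\eps(1+1/\kappa_a)$ (lower tail), together with a bit of bookkeeping to absorb two minor nuisances: first, $\mathbb{E}[Z]=(n-1)/(n\alpha_a)$ rather than $1/\alpha_a$, which biases the tails by $\mathcal{O}(1/n)$ and is easily absorbed for $n\geq\bn$; second, passing from the $(n-1)$ appearing in the Chernoff exponent to the target factor $n$. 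Both are elementary, and once the inequality $x-\log(1+x)\geq x^2/4$ is established on the relevant interval, each tail is bounded by $\exp(-n\alpha_a^2\eps^2/4)$ as required. The remaining application to Lemma~\ref{lem: minor} is then routine: summing $\mathbb{P}[\eK_{a,n}^c]+\mathbb{P}[\eK_{a,n},\eA_{a,n}^c]$ over $n$ via geometric series yields $\mathcal{O}(\eps^{-1})+\mathcal{O}(\eps^{-2})=\mathcal{O}(\eps^{-2})$.
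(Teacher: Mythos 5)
Your setup is a genuinely different route from the paper's: you discard the event $\eK_{a,n}(\eps)$ and work with the exact marginal law $1/\ha_a(n)=:Z\sim\Er(n-1,n\alpha_a)$, whereas the paper keeps $\eK_{a,n}(\eps)$ and never touches the marginal of $\ha_a$. Your algebraic inversion of (\ref{eq: epslu}) is right, and the upper tail $\mathbb{P}[Z>1/\alpha_a+\eps]$ (underestimation of $\ha_a$) does go through: the optimized Chernoff exponent is $1+n\alpha_a\eps-(n-1)\log(1+\alpha_a\eps)-(n-1)\log\tfrac{n}{n-1}\ \geq\ n\bigl[\alpha_a\eps-\log(1+\alpha_a\eps)\bigr]\ \geq\ n\alpha_a^2\eps^2/4$, using $(n-1)\log\tfrac{n}{n-1}\le 1$ and $u-\log(1+u)\ge u^2/4$ on $(0,1]$. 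The gap is in the other tail, $\mathbb{P}[Z<1/\alpha_a-\eps(1+1/\kappa_a)]$. Since $\mathbb{E}[Z]=\tfrac{n-1}{n\alpha_a}$, the $\tfrac{1}{n\alpha_a}$ bias points \emph{toward} this threshold, so with $v:=\alpha_a\eps(1+1/\kappa_a)$ the effective deviation from the mean is $v-1/n$, not $v$; this is not a nuisance that the target exponent can always absorb. Concretely, the optimized Chernoff bound for this tail is $\exp\bigl(nv-1-(n-1)\log\tfrac{n-1}{n(1-v)}\bigr)$, and for $n$ large with $nv=c$ fixed this is $\exp\bigl(-\tfrac1n(\tfrac{c^2}{2}-c+\tfrac12)+o(1/n)\bigr)$, which \emph{exceeds} the required $\exp(-n\alpha_a^2\eps^2/4)\approx\exp(-\tfrac{c^2}{4n}(\tfrac{\kappa_a}{\kappa_a+1})^2)$ whenever $c\in(2-\sqrt2,\,2+\sqrt2)$ and $\kappa_a$ is large. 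The inequality you want is still true in that regime (the probability is at most $F_{2(n-1)}(2n)\le 1-e^{-2}<e^{-1/8}\le e^{-n\alpha_a^2\eps^2/4}$ when $n\alpha_a^2\eps^2\le 1/2$), but proving it requires an explicit case split, not the stated ``$x-\log(1+x)\ge x^2/4$ plus bookkeeping.''

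The paper avoids this entirely, and this is worth seeing. It keeps $\eK_{a,n}(\eps)$ and writes $\tfrac{n}{\ha_a(n)}=\sum_{s=1}^n\log\tfrac{r_{a,s}}{\kappa_a}-n\log\tfrac{\hk_a(n)}{\kappa_a}$, where $\sum_s\log\tfrac{r_{a,s}}{\kappa_a}$ is a sum of $n$ i.i.d.\ $\Exp(\alpha_a)$ variables with exact mean $n/\alpha_a$ (Fact~\ref{fact2}). Using $\log\tfrac{\hk_a(n)}{\kappa_a}\ge 0$ for one tail and $0\le\log\tfrac{\hk_a(n)}{\kappa_a}\le\eps/\kappa_a$ on $\eK_{a,n}(\eps)$ for the other --- this is precisely what the asymmetric $\eps_{a,u}$ is engineered for --- both tails of $\eA^c\cap\eK$ reduce to $\bigl|\tfrac1n\sum_s\log\tfrac{r_{a,s}}{\kappa_a}-\tfrac1{\alpha_a}\bigr|\ge\eps$, with no shape deficit ($n$ rather than $n-1$ summands) and no mean shift. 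Bernstein's inequality for $(2/\alpha_a^2,2/\alpha_a)$-subexponential variables (Lemma~\ref{lem: subexp}) then gives $2\exp(-\tfrac n4\min\{\alpha_a^2\eps^2,\alpha_a\eps\})=2\exp(-n\alpha_a^2\eps^2/4)$ directly, since $\alpha_a\eps<\kappa_a/(\kappa_a+1)<1$. If you want to keep your marginal-law approach, you must add the near-mean case analysis above; otherwise, retaining $\eK$ as the paper does is the cleaner path.
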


\lemminor*

\begin{proof}
From the Lemmas~\ref{lem: eKbound} and~\ref{lem: eAbound}, one can see that for $n \geq \bn$,
\begin{align*}
    \mathbb{P}\left[ \eE_{a,N_a(t)}^c(\eps), N_a(t)=n \right] &= \mathbb{P}\left[\eK_{a,N_a(t)}^c(\eps), N_a(t)=n\right] + \mathbb{P}\left[\eA_{a,N_a(t)}^c(\eps) , \eK_{a,N_a(t)}(\eps), N_a(t)=n \right] \\
    &\leq \exp\left( -\frac{ \alpha_a \eps}{\kappa_a+\eps}n \right) + 2 \exp\left( -\frac{\alpha_a^2 \eps^2}{4}n \right).
\end{align*}
Since $\left\{j(t)=a, \eE_{a,n}^c(\eps), N_a(t)=n \right\}$ occurs only once for any $n \in \mathbb{N}$, it holds that
\begin{align*}
    \sum_{t= \bn K + 1}^T \mathbb{E}\left[\I\left[ j(t)=a, \eE^c_{a,N_a(t)}(\eps) \right] \right] &= 
     \sum_{t= \bn K + 1}^T \sum_{n=\bn}^{T} \mathbb{E}\left[\I\left[ j(t)=a, \eE^c_{a,N_a(t)}(\eps), N_a(t) =n \right] \right] \\
    &\leq \sum_{n=\bn}^{\infty} \mathbb{E}\left[\I\left[ \eE_{a,N_a(t)}^c(\eps), N_a(t) =n\right]\right] \\
    & = \sum_{n=\bn}^{\infty} \mathbb{P}\left[\eK_{a,N_a(t)}^c(\eps), N_a(t) =n\right] \\
    & \hspace{3em}+ \mathbb{P}\left[\eA_{a,N_a(t)}^c(\eps) \cap \eK_{a,N_a(t)}(\eps), N_a(t) =n \right] \\
    &\leq\sum_{n=\bn}^{\infty} \exp\left( -\frac{ \alpha_a \eps}{\kappa_a+\eps}n \right) + 2 \exp\left( -\frac{ \alpha_a^2 \eps^2}{4}n \right).
\end{align*}
Since $\exp(-an)$ with $a >0$ is a decreasing function with respect to $n$, it holds that
\begin{equation*}
    \sum_{n=2}^{\infty} \exp(-an) \leq \int_{1}^{\infty} \exp(-an) \dx n = \frac{\exp(-a)}{a},
\end{equation*}
which concludes the proof.
\end{proof}

\subsubsection{Proof of Lemma~\ref{lem: eKbound}}
\lemeKbound*
\begin{proof}
Since $\hk_a(n) \sim \Par(\kappa_a, n\alpha_a)$ holds for any $n \in \mathbb{N}$ in (\ref{eq: MLEdist}), it holds that
\begin{align*}
    \mathbb{P}\left[ \eK^c_{a,N_a(t)}, N_a(t) =n \right] &= \mathbb{P}\left[ \hk_a(N_a(t)) \geq \kappa_a + \eps, N_a(t) =n \right] \\
    &= \left( \frac{\kappa_a}{\kappa_a + \eps} \right)^{n\alpha_a} \leq \exp\left( -\frac{ \alpha_a \eps}{\kappa_a + \eps }  n \right),
\end{align*}
which concludes the proof.
\end{proof}

\subsubsection{Proof of Lemma~\ref{lem: eAbound}}                  
\lemeAbound*
\begin{proof}
Fix a time index $t$ and denote $\mathbb{P}_t[\cdot] = \mathbb{P}[\cdot \mid \Ft]$ and $N_a(t) =n$.
To simplify notations, we drop the argument $n$ of $\hk_a(n)$ and $\ha_a(n)$. 

Let $r_{a,k}'$ be the $k$-th order statistics of $(r_{a,s})_{s=1}^n$ for arm $a$ such that $r_{a,1}' \leq r_{a,2}' \ldots \leq r_{a,n}'$.
From the definition of MLE of $\alpha_a$,
\begin{align*}
    \mathbb{P}[\ha_a \leq \alpha_a - \eps_{a,l}(\eps), \eK_{a,N_a(t)}(\eps), N_a(t)=n ] &\leq \mathbb{P} \left[\frac{n}{\sum_{s=1}^n \log r_{a,s}' - n \log r_{a,1}'} \leq \alpha_a -\eps_{a,l}(\eps)  \right] \\
    &= \mathbb{P}\left[\frac{n}{\alpha_a -\eps_{a,l}(\eps)} \leq  \sum_{s=1}^{n} \log \frac{r_{a,s}'}{r_{a,1}'} \right] \\
    &= \mathbb{P}\left[\frac{n}{\alpha_a -\eps_{a,l}(\eps)} \leq n \log \frac{\kappa}{r_{a,1}'} + \sum_{s=1}^{n} \log \frac{r_{a,s}'}{\kappa}\right] \\
    &\leq \mathbb{P}\left[\frac{n}{\alpha_a -\eps_{a,l}(\eps)} \leq  \sum_{s=1}^{n} \log \frac{r_{a,s}}{\kappa_a} \right] \\ 
    &\leq \mathbb{P}\left[\eps \leq \frac{1}{n} \sum_{s=1}^{n} \log \frac{r_{a,s}}{\kappa_a} -\frac{1}{\alpha_a} \right],
\end{align*}
where the first equality holds from the definition of MLEs in (\ref{eq: MLEdist}), the second inequality holds since any sample generated from the Pareto distribution cannot be smaller than its scale parameter $\kappa$, and the last inequality holds from the definition of $\eps_{a,l}(\eps)$ in (\ref{eq: epslu}).

Similarly, one can derive that
\begin{align*}
        \mathbb{P}[\ha_a \geq \alpha_a + \eps_{a,u}(\eps) , \eK_{a,N_a(t)}(\eps) , N_a(t)=n] &\leq \mathbb{P} \left[ \sum_{s=1}^n \log \frac{r_{a,s}}{\kappa_a} \leq \frac{n}{\alpha_a + \eps_{a,u}(\eps)} + n\log \frac{r_1'}{\kappa} \cap \eK\right] \\
        &\leq \mathbb{P} \left[ \sum_{s=1}^n \log \frac{r_{a,s}}{\kappa} \leq \frac{n}{\alpha_a + \eps_{a,u}(\eps)} + n\log \frac{\kappa_a + \eps}{\kappa_a} \right] \\
        &\leq \mathbb{P} \left[ \sum_{s=1}^n \log \frac{r_{a,s}}{\kappa_a} \leq \frac{n}{\alpha_a + \eps_{a,u}(\eps)} + \frac{n \eps}{\kappa_a} \right] \\
        &\leq \mathbb{P} \left[ \frac{1}{n}\sum_{s=1}^n \log \frac{r_{a,s}}{\kappa_a} - \frac{1}{\alpha_a} \leq -\eps \right],
\end{align*}
where the second inequality holds since $r_{a,1}' = \hk_a \leq \kappa_a + \eps$ holds on $\eK_{a,n}$, the third inequality from $\log(1+x) \leq x $ for $x>-1$, and the last inequality comes from the definition of $\eps_{a,u}(\eps)$.
From Fact~\ref{fact2}, $y_{a,s} := \log \left( \frac{r_{a,s}}{\kappa_a} \right) \sim \Exp (\alpha_a)$ and the last probability can be considered as a deviation of the sum of exponentially distributed random variables.

For the exponential distribution $\Exp(\alpha)$, we say that Bernstein's condition with parameter $b$ holds if
\begin{equation*}
    \mathbb{E}\left[M_k\right] \leq \frac{1}{2}k! \frac{1}{\alpha^2} b^{k-2} \quad \text{ for } k =3, 4, \ldots,
\end{equation*}
where $M_k$ implies the $k$-th central moment.
For $\Exp(\alpha_a)$, it holds that
\begin{equation*}
    \mathbb{E}\left[M_k\right] = \frac{!k}{\alpha_a^k} \leq \frac{k!}{2} \frac{1}{\alpha_a^2} \left( \frac{1}{\alpha_a}\right)^{k-2},
\end{equation*}
where $!k$ is the subfactorial of $k$ such that $!k \leq \frac{k!}{e} + \frac{1}{2} \leq \frac{k!}{2}$ for $k\geq 3$.
Hence, the exponential distribution with parameter $\alpha_a$ satisfies Bernstein's condition with parameter $\frac{1}{\alpha_a}$, so that it is subexponential with parameters $\left( \frac{2}{\alpha_a^2} , \frac{2}{\alpha_a} \right)$.
Therefore, by applying Lemma~\ref{lem: subexp}, we have
\begin{equation*}
    \mathbb{P}\left( \abs{\frac{1}{n} \sum_{s=1}^n y_{a,s} - \frac{1}{\alpha_a}} \geq \eps \right) \leq 2 \exp \left( - \frac{n}{4}\min \{ \alpha_a^2 \eps^2, \alpha_a \eps \}\right). 
\end{equation*}

Note that it holds for $\eps < \frac{\kappa_a}{\alpha_a(\kappa_a+1)}$ that
\begin{align*}
    \mathbb{P}[\ha_a \leq \alpha_a - \eps_{a,l}(\eps) \cap \eK_{a,n} ] &\leq \mathbb{P}\left( \frac{1}{n} \sum_{s=1}^n y_{a,s} - \frac{1}{\alpha_a} \geq \eps \right)\\
    \mathbb{P}[\ha_a \geq \alpha_a + \eps_{a,u}(\eps) \cap \eK_{a,n} ]&\leq \mathbb{P}\left( \frac{1}{n} \sum_{s=1}^n y_{a,s} - \frac{1}{\alpha_a} \leq -\eps \right),
\end{align*}
for $\eps_{a,l}(\eps) = \frac{\eps \alpha_a^2}{1+\eps \alpha_a}$ and $\eps_{a,u}(\eps) = \frac{\eps \alpha_a^2(\kappa_a+1)}{\kappa_a - \eps\alpha_a(\kappa_a+1)}$, which satisfy $\lim_{\eps \to 0} \max\{ \eps_{a,l}(\eps), \eps_{a,u}(\eps) \} = 0_+$.
Hence, by recovering the original notations, we obtain
\begin{align*}
    \mathbb{P}[\eA_{a,N_a(t)}^c(\eps), \eK_{a,N_a(t)}(\eps), N_a(t) =n]&=\mathbb{P}[\ha_a(n) \leq \alpha_a - \eps_{a,l}(\eps) , \eK_{a,N_a(t)}, N_a(t)=n ] \\
    &\hspace{1em}+  \mathbb{P}[\ha_a(n) \geq \alpha_a + \eps_{a,u}(\eps), \eK_{a, N_a(t)}, N_a(t)=n ] \\
    &\leq 2\exp \left( - \frac{\alpha^2_a \eps^2}{4}n \right),
\end{align*}
for $\eps < \frac{1}{\alpha_a}$ with $\alpha_a >1$.
\end{proof}

\subsection{Proof of Lemma~\ref{lem: tmusmall}}\label{sec: tmusmall}
\lemtmu*
\begin{proof}
Fix a time index $t$ with $N_a(t) =n$ and denote $\mathbb{P}_t[\cdot] = \mathbb{P}[\cdot \mid \Ft]$.
To simplify notations, we drop the argument $n$ of $\hk_a(n)$ and $\ha_a(n)$ and the argument $t$ of $\ta_k(t)$, $\ta_a(t)$, and $\tmu_a(t)$.

When $\hk_a < y$ holds, $\tmu_a \leq y$ can hold regardless of the value of $\tk_a$ if $\hk_a \frac{\ta_a}{\ta_a-1} \leq y$ holds since $\tk_a \in (0, \hk_a]$ holds from its posterior distribution in (\ref{eq: cdfkappa}).
Hence, if $\hk_a < y$, then
\begin{equation}\label{eq: tmuy1}
  \ta(t) \geq \frac{y}{y-\hk_a} \implies \tmu_a \leq y.
\end{equation}
When $1 < \ta(t) < \frac{y}{y-\hk_a}$,
\begin{equation}\label{eq: tk}
    \tmu_a = \tk_a \frac{\ta_a}{\ta_a-1} \leq y  \, \Leftrightarrow  \, \tk_a \leq y \frac{\ta_a-1}{\ta_a} .
\end{equation}

Since $\ta_a \leq 1$ implies $\tmu_a = \infty$, from (\ref{eq: tmuy1}) and (\ref{eq: tk}), it holds that
\begin{align*}
    \mathbb{P}_t[\tmu_a \leq y]& = \int_{1}^{\frac{y}{y - \hk_a}} f_{n-k, \frac{n}{\alpha_{a,n}}}^{\text{Er}}(x) \mathbb{P}_t\left[\tk_a \leq \frac{x-1}{x}y\right] \dx x + \int_{\frac{y}{y - \hk_a}}^{\infty} f_{n-k, \frac{n}{\alpha_{a,n}}}^{\text{Er}}(x) \dx x \\
    &= \int_{1}^{\frac{y}{y-\hk_a}} f_{n-k, \frac{n}{\alpha_{a,n}}}^{\text{Er}}(x) \left( \frac{x-1}{\hk_a x} y \right)^{nx} \dx x + \int_{\frac{y}{y-\hk_a}}^{\infty} f_{n-k, \frac{n}{\alpha_{a,n}}}^{\text{Er}}(x) \dx x, \numberthis{\label{eq: tmucase1}}
\end{align*}
where we denoted $\mathbb{P}_t[\cdot] = \mathbb{P}[\cdot | \Ft]$ and recovered the CDF in (\ref{eq: cdfkappa}) in (\ref{eq: tmucase1}).
Take any finite $y' > y$ and let $\xi := \frac{y'}{y'-\kappa_a}< \frac{y}{y-\kappa_a}$ such that $\mu((\kappa_a, \xi)))=y'$.
Since $\frac{a}{a-b}$ is decreasing with respect to $a > b >0$, one can see that
\begin{align*}
    \mathbb{P}_t[\tmu_a \leq y] &= \int_{1}^{\frac{y}{y-\hk_a}} f_{n-k, \frac{n}{\alpha_{a,n}}}^{\text{Er}}(x) \left( \frac{x-1}{\hk_a x} y \right)^{nx} \dx x + \int_{\frac{y}{y-\hk_a}}^{\infty} f_{n-k, \frac{n}{\alpha_{a,n}}}^{\text{Er}}(x) \dx x \\
    &\leq \int_{1}^{\frac{y'}{y'-\hk_a}} f_{n-k, \frac{n}{\alpha_{a,n}}}^{\text{Er}}(x) \left( \frac{x-1}{\hk_a x} y \right)^{nx} \dx x + \int_{\frac{y'}{y'-\hk_a}}^{\infty} f_{n-k, \frac{n}{\alpha_{a,n}}}^{\text{Er}}(x) \dx x
    \\
    &\leq \int_{1}^{\frac{y'}{y'-\kappa}} f_{n-k, \frac{n}{\alpha_{a,n}}}^{\text{Er}}(x) \left( \frac{x-1}{\hk_a x} y \right)^{nx} \dx x + \int_{\frac{y'}{y'-\kappa}}^{\infty} f_{n-k, \frac{n}{\alpha_{a,n}}}^{\text{Er}}(x) \dx x\\
    &\leq \int_{1}^{\xi} f_{n-k, \frac{n}{\alpha_{a,n}}}^{\text{Er}}(x) \left( \frac{x-1}{\kappa x} y \right)^{n} \dx x + \int_{\xi}^{\infty} f_{n-k, \frac{n}{\alpha_{a,n}}}^{\text{Er}}(x) \dx x\numberthis{\label{eq: star_5}}\\
    &\leq \left( \frac{\xi-1}{\kappa \xi} y\right)^{n} \int_{1}^{\xi} f_{n-k, \frac{n}{\alpha_{a,n}}}^{\text{Er}}(x) \dx x + \int_{\xi}^{\infty} f_{n-k, \frac{n}{\alpha_{a,n}}}^{\text{Er}}(x) \dx x \numberthis{\label{eq: star_6}}\\
    &= \left( \frac{y}{\mu((\kappa, \xi))}\right)^{n} \int_{1}^{\xi} f_{n-k, \frac{n}{\alpha_{a,n}}}^{\text{Er}}(x) \dx x + \int_{\xi}^{\infty} f_{n-k, \frac{n}{\alpha_{a,n}}}^{\text{Er}}(x) \dx x,
\end{align*}
where (\ref{eq: star_5}) comes from $\hk_a \geq \kappa$ and we used the increasing property of $\frac{x-1}{x}$ in (\ref{eq: star_6}).
\end{proof}

\subsection{Proof of Lemma~\ref{lem: pn_cases}}

\lempn*

\begin{proof}
Similarly to other proofs, fix $t$ and let $N_1(t) =n$.
To simplify notations, we drop the argument $t$ of $\tk_1(t), \ta_1(t)$ and $\tmu_1(t)$ and the argument $n$ of $\hk_1(n), \ha_1(n), \ba_1(n)$.

\paragraph{Case 1. On $\{\hk_1 \geq \mu_1 - \eps\}$}
Under the condition $\{\hk_1 \geq \mu_1 - \eps\}$, the event $\{ \tmu_1 \leq \mu_1 - \eps \}$ is eventually determined by the value of $\tk$ since $\{\tk_1 \in (\mu_1 - \eps, \hk_1]\}$ is a sufficient condition to $\{\tmu_1 > \mu_1 - \eps\}$.
Therefore, if $\hk_1 \geq \mu_1 - \eps$, then
\begin{equation*}
     p_n(\eps|{\theta}_{1,n}) = \int_{1}^{\infty} f_{n-k, \frac{n}{\alpha_{1,n}}}^{\text{Er}}(x) \left( \frac{\mu_1 - \eps}{\hk_1} \frac{x-1}{x}\right)^{nx} \dx x.
\end{equation*}
Then, 
\begin{align*}
    \I[\hk_1 \geq \mu_1 -  \eps]p_n(\eps|{\theta}_{1,n}) &=  \I[\hk_1 \geq \mu_1 - \eps] \left( \int_{1}^{\infty} f_{n-k, \frac{n}{\alpha_{1,n}}}^{\text{Er}}(x) \left( \frac{\mu_1 -\eps}{\hk_1} \frac{x-1}{x}\right)^{nx} \dx x \right) \\
    &\leq \I[\hk_1 \geq \mu_1 - \eps]\int_{1}^{\infty} f_{n-k, \frac{n}{\alpha_{1,n}}}^{\text{Er}}(x) \left( 1- \frac{1}{x} \right)^{nx} \dx x \\
    &  \leq \I[\hk_1(n) \geq \mu_1 - \eps] \int_{1}^{\infty} f_{n-k, \frac{n}{\alpha_{1,n}}}^{\text{Er}}(x)   e^{-n} \dx x   \\
    &\leq \I[\hk_1(n) \geq \mu_1 - \eps] e^{-n},
\end{align*}
where the second inequality holds from $\mu_1 -\eps \leq \hk_1$.

\paragraph{Case 2. On $\{\hk_1 \in K(\eps), \alpha_{1,n} \leq \alpha_1 + \rho\}$}
By applying Lemma~\ref{lem: tmusmall} with $y=\mu_1 - \eps$, we have for any $\xi \leq \frac{\mu_1- \eps}{\mu_1 -\eps - \kappa_1}$ that
\begin{align*}
    \I[\hk_1 < \mu_1 - \eps, \alpha_{1,n} \leq \alpha + \rho]p_n(\eps|{\theta}_{1,n}) &\leq \left( \frac{\mu_1 -\eps}{\mu((\kappa_1, \xi))} \right)^{n} \int_{1}^\xi f_{n-k, \frac{n}{\alpha_{1,n}}}^{\text{Er}}(x) \dx x   + \int_{\xi}^\infty f_{n-k, \frac{n}{\alpha_{1,n}}}^{\text{Er}}(x) \dx x \\
    &\leq \left( \frac{\mu_1 -\eps}{\mu((\kappa_1, \xi))} \right)^{n} \int_{0}^\xi f_{n-k, \frac{n}{\alpha_{1,n}}}^{\text{Er}}(x) \dx x + \int_{\xi}^\infty f_{n-k, \frac{n}{\alpha_{1,n}}}^{\text{Er}}(x) \dx x. \numberthis{\label{eq: L9case2_1}}
\end{align*}
Let us define $\bar{\rho} := \rho_{\theta}(\eps/2)$.
Then, it satisfies $\mu((\kappa_1, \alpha_1+\bar{\rho})) = \mu_1 - \frac{\eps}{4}$ and
\begin{equation*}
    \alpha_1 + \bar{\rho} = \frac{\mu -\eps/4}{\mu -\eps/4 -\kappa_1} < \frac{\mu -\eps}{\mu -\eps -\kappa_1},
\end{equation*}
where the inequality holds from the decreasing property of the function $\frac{x}{x-y}$ with respect to $x>y$.
By replacing $\xi$ with $\alpha_1 + \bar{\rho}$ in (\ref{eq: L9case2_1}), we have
\begin{align*}
  \I[\hk_1 &< \mu_1 - \eps, \alpha_{1,n} \leq \alpha_1 + \rho]p_n(\eps|\bar{\theta}_{1,n}) \\ 
  &\leq \I[\hk_1 < \mu_1 - \eps, \alpha_{1,n} \leq \alpha_1 + \rho] \bigg( \left( \frac{\mu_1 - \eps}{\mu_1 -\eps/4} \right)^n  \int_{0}^\xi f_{n-k, \frac{n}{\alpha_{1,n}}}^{\text{Er}}(x) \dx x
    + \int_{\alpha_1 + \bar{\rho}}^\infty f_{n-k, \frac{n}{\alpha_{1,n}(n)}}^{\text{Er}}(x) \dx x \bigg) \\
    &\leq \I[\hk_1 < \mu_1 - \eps, \alpha_{1,n} \leq \alpha_1 + \rho] \bigg(e^{-n \left( \frac{3\eps}{4\mu_1 -\eps} \right)} \left( 1- \mathbb{P}[\ta_1 \geq \alpha_1 + \bar{\rho}]\right)
    + \mathbb{P}[\ta_1 \geq \alpha_1 + \bar{\rho}] \bigg).
    \numberthis{\label{eq: case21}}
\end{align*}

Let $Z_n$ be a random variable that follows the chi-squared distribution with $n$ degree of freedom and $F_n(\cdot)$ denote the CDF of $Z_n$.
Then, it holds that
\begin{align*}
  \mathbb{P}\bigg[ \ta_1 \geq \alpha_1 + \bar{\rho}, \alpha_{1,n} \leq \alpha_1 + \rho \bigg]  &=\mathbb{P}\left[ Z \geq \frac{2n}{\alpha_{1,n}}(\alpha_1 + \bar{\rho}), \alpha_{1,n} \leq \alpha_1 + \rho \right]  \hspace{1em}\text{by Fact~\ref{fact1}}
    \\
    & 
    \leq \mathbb{P}\left[ Z \geq 2n \frac{\alpha_1 + \bar{\rho}}{\alpha_1 + \rho} \right]
    \\ 
    &\leq \mathbb{P}\left[ Z \geq  2n \frac{\mu_1 - \eps/4}{\mu_1 - \eps/2} \right] \\
    &= 1 - F_{2n-2k}(2n(1+\zeta)), \numberthis{\label{eq: Fchi}}
\end{align*}
where $\zeta = \frac{\eps}{4\mu_1 - 2\eps} \in (0, 1)$.
By applying Lemma~\ref{lem: wallace}, we have if $n\zeta > -k$,
\begin{align*}
    \mathbb{P}[\ta_1 \geq \alpha_1 + \bar{\rho}, \alpha_{1,n} \leq \alpha_1 + \rho] \hspace{-4em}& \\
    &\leq 1-F_{2n-2k}\left( 2n (1+\zeta) \right) 
    \\ &< \frac{1}{2}\frac{\sqrt{2\pi} (n-k)^{n-k-1/2}e^{-(n-k)}}{\Gamma(n-k)} \mathrm{erfc}\left(\sqrt{n(\zeta+k)-(n-k)\log \frac{n(1+\zeta)}{n-k}} \right),
\end{align*}
where $\Gamma(\cdot)$ denotes the Gamma function.
For $n \geq 1/2$, it holds from Stirling's formula that
\begin{equation*}
    \sqrt{2\pi}n^{n-1/2}e^{-n} \leq \Gamma(n) \leq \sqrt{2\pi}e^{1/6}n^{n-1/2}e^{-n},
\end{equation*}
which results in
\begin{equation}\label{eq: erfcineq}
  \mathbb{P}[\ta_1\geq \alpha_1 + \bar{\rho}, \alpha_{1,n} \leq \alpha_1 + \rho]
    <  \frac{1}{2}\mathrm{erfc}\left(\sqrt{n(\zeta+k)-(n-k)\log \frac{n(1+\zeta)}{n-k}} \right).
\end{equation}
Notice that $(n-k)\log \frac{n(1+\zeta)}{n-k} >0$ always holds from the assumption of $n\zeta> -k$ where priors with $k \in \mathbb{Z}_{\geq 0}$ satisfies regardless of $n$.
Thus, if $\zeta + k \leq 0$, then the argument in the complementary error function in (\ref{eq: erfcineq}) becomes negative.
This makes the upper bound in (\ref{eq: erfcineq}) greater than or equal to $\frac{1}{2}$.
Therefore, for the priors with $k \in \mathbb{Z}_{<0}$, the right term in (\ref{eq: erfcineq}) is bounded by a constant since $\zeta \in (0,1)$.

Since the complementary error function is a decreasing function, for priors with $k\in \mathbb{Z}_{\geq 0}$, it holds from (\ref{eq: erfcineq}) that 
\begin{equation*}
    \mathbb{P}[\ta_1 \geq \alpha_1 + \bar{\rho}, \alpha_{1,n} \leq \alpha_1 + \rho]
  \leq \frac{1}{2}\mathrm{erfc}\left(\sqrt{n( \zeta - \log (1+\zeta)} \right),
\end{equation*}
where we substitute $k=0$.
By the change of variables, the complementary error function is bounded for any $x \geq 0$ as follows~\citep{simon1998erfc}:
\begin{equation*}%\label{eq: erfc}
    \mathrm{erfc}(x) \leq e^{-x^2},
\end{equation*}
which implies
\begin{equation}\label{eq: case2}
  \mathbb{P}[\ta_1 \geq \alpha_1 + \bar{\rho}, \alpha_{1,n} \leq \alpha_1 + \rho]\leq \frac{1}{2} e^{-nc_{\mu_1}(\eps)},
\end{equation}
where $c_{\mu_1}(\eps) = \zeta - \log(1+\zeta) >0$ is a deterministic constants on $\mu_1$ and $\eps$.
By combining (\ref{eq: case2}) with (\ref{eq: case21}), we have
\begin{equation*}
     \I[\hk_1 < \mu_1 - \eps, \alpha_{1,n} \leq \alpha_1 + \rho]p_n(\eps|{\theta}_{1,n}) \leq e^{-n \frac{3\eps}{4\mu_1}} \left( 1- \frac{1}{2}e^{-nc_{\mu_1}(\eps)} \right) + \frac{1}{2}e^{-nc_{\mu_1}(\eps)} =: h(\mu_1, \eps, n).
\end{equation*}

From the power-series expansion of $\log(1+x)$, we have $\log(1+x) \geq x - \frac{x^2}{2} + \frac{x^3}{3}$ for $x \in (0,1)$ and 
\begin{align*}
    c_{\mu_1}(\eps) = \zeta - \log(1+\zeta) \leq \frac{\zeta^2}{2} - \frac{\zeta^3}{3} &= \frac{\zeta^2}{6}(3-2\zeta) \\
    &\leq \frac{\zeta^2}{2} = \mathcal{O}(\eps^{-2}).
\end{align*}

\paragraph{Case 3. On $\{ \hk_1 \in K(\eps), \alpha_{1,n} \geq \alpha_1 + \rho \}$}
By applying Lemma~\ref{lem: tmusmall} with $y=\mu_1 - \eps$ and $\xi = \alpha_1 + \rho$, we have
\begin{align*}
      \I[\hk_1 < \mu_1 - \eps] p_n(\eps | {\theta}_{1,n}) &\leq
      \bigg( \frac{\mu_1 - \eps}{\mu_1 - \eps/2} \bigg)^{n} \int_1^{\alpha_1 + \rho}  f_{n-k, \frac{n}{\alpha_{1,n}}}^{\mathrm{Er}}(x) \dx x + \int_{\alpha_1 + \rho}^\infty  f_{n-k, \frac{n}{\alpha_{1,n}}}^{\mathrm{Er}}(x)  \dx x  \\
      &=
        C_1(\mu_1,\eps, n) \mathbb{P}\left[\ta_1 \in [1, \alpha_1+ \rho] \mid \alpha_{1,n}\right] + \mathbb{P}\left[\ta_1 \geq  \alpha_1 + \rho \mid \alpha_{1,n} \right]\\
      &\leq  C_1(\mu_1,\eps, n) \mathbb{P}\left[\ta_1 \leq \alpha_1 + \rho \mid \alpha_{1,n}\right] + \mathbb{P}\left[\ta_1 \geq  \alpha_1  + \rho \mid \alpha_{1,n} \right] \\
      &=  C_1(\mu_1,\eps, n)G_k(1/\alpha_{1,n};n) + (1-G_k(1/\alpha_{1,n};n)),  \numberthis{\label{eq: pnKeps}}
\end{align*}
where $A_n = C_1(\mu_1,\eps, n) := \left( \frac{\mu_1 - \eps}{\mu_1 - \eps/2} \right)^{n} \leq e^{-n \frac{\eps}{2\mu_1 - \eps}} < 1$.
Since $\ta_1$ follows $\Er\left(n-k, \frac{n}{\alpha_{1,n}} \right)$, it holds that
\begin{align*}
    \mathbb{P}\left[\ta_1 \leq \alpha_1 + \rho \mid \alpha_{1,n}\right]= \frac{\gamma\left(n-k, \frac{n(\alpha_1+\rho)}{\alpha_{1,n}}\right) }{\Gamma(n-k)},
\end{align*}
where $\gamma(\cdot, \cdot)$ denotes the lower incomplete gamma function.
Therefore, letting 
\begin{equation*}
    G_k(x;n) := \frac{\gamma\left(n-k, n(\alpha_1+\rho)x \right) }{\Gamma(n-k)}
\end{equation*}
concludes the proof.
\end{proof}

\section{Proof of Theorem~\ref{thm: RegSubOopt}}~\label{sec: SubOpt}
As shown in proofs of Theorem~\ref{thm: RegOpt}, the integral term in (\ref{eq: integral}) diverges for $k \in \mathbb{Z}_{\leq 1}$ without the restriction on $\ha$.
In this section, we provide the partial proof of Theorem~\ref{thm: RegSubOopt} for $k\in \mathbb{Z}_{\leq 0}$, which shows the necessity of such requirement to achieve asymptotic optimality.
\begin{proof}[Proof of Theorem~\ref{thm: RegSubOopt}]
We consider a two-armed bandit problem with $\theta_1 = (\kappa, \alpha_1)$ and $\theta_2 = (\kappa, \alpha_2)$.
Let $\gamma = \max\{ \lceil\alpha_2  \rceil, \lfloor\alpha_2\rfloor +1\}$ and $m = \frac{\gamma}{\gamma-1}$, so that $\frac{\mu_2}{m} = \kappa \frac{\alpha_2}{\alpha_2 -1} \frac{\gamma-1}{\gamma} > \kappa$.
Assume $1 < \alpha_1 <\alpha_2$ and $\tmu_2(s) = \mu_2 = \kappa \frac{\alpha_2}{\alpha_2 -1}$ for all $s\in \mathbb{N}$.
Recall that $\TS$ starts from playing every arms twice for priors $k \leq 1$, i.e., $N_a(s) \geq 2$ holds for all $a \in \{1, 2\}$.
We have for $T \geq 5$
\begin{align*}
    \mathbb{E}[\reg(T)] &= \Delta_2 \mathbb{E} \left [\sum_{t=1}^T \I [j(t) = 2] \right] \\
    &\geq \Delta_2 \mathbb{E} \left[ \sum_{t=5}^T \I [j(t)=2, N_1(t) = 2] \right].
\end{align*}
From the definition of $N_1(\cdot)$, $\{ j(s) \ne 2, N_1(s) =2 \}$ implies $N_1(t) > 2$ for $t > s$.
Therefore, for any $t\geq 5$,
\begin{align*}
    \{j(t) = 2, N_1(t) = 2 \} &\Leftrightarrow \{ \forall s \in [1, t-4] : j(s+4) = 2 \} \\
     &\Leftrightarrow \{ \forall s \in [1, t-4] :\tmu_1(s+4) < \mu_2 \}.
\end{align*}
Let $T' = T - 4$, then we have
\begin{align*}
    \mathbb{E}\bigg[\sum_{t=5}^T \I  [j(t)=2, N_1(t) = 2] \bigg] &=
    \mathbb{E}\left[  \sum_{t=5}^{T} \I \left[\forall s \in [1, t-4] :\tmu_1(s+4) < \mu_2 \right] \right]  \\
    % &= \mathbb{E}\left[ \sum_{t=5}^{T} \left(\mathbb{P}[\tmu_1 \leq \mu_2 | \hk_1(2), \ha_1(2)]\right)^{t-4}  \right] \\
    &= \mathbb{E}\left[ \sum_{s=1}^{T'} \left(\mathbb{P}[\tmu_1 \leq \mu_2 | \hk_1(2), \ha_1(2)]\right)^{s}  \right]. \numberthis{\label{eq: sublower}}
\end{align*}
Notice that $\hk_1(N_1(s))=\hk_1(2)$ and $\ha_1(N_1(s))=\ha_1(2)$ hold for all $s\geq 2$ since only $j(s)=2$ holds for all $s \geq 2$.
Here, we first provide the lower bound on $\mathbb{P}[\tmu_1 \leq \mu_2 | \hk_1, \ha_1]$.
Since $\frac{\mu_2}{m} = \frac{\kappa \alpha_2}{\alpha_2 - 1} \frac{\gamma-1}{\gamma} > \kappa$ holds, we can consider the case where $\hk_1(2) \leq  \frac{\mu_2}{m}$ occurs.

From (\ref{eq: tmucase1}), it holds for $y \geq \hk_1(n)$ that
\begin{align*}
    \mathbb{P}_t[\tmu_a \leq y] &= \int_{1}^{\frac{y}{y-\hk(n)}} f_{n-k, \frac{n}{\alpha_{n}}}^{\text{Er}}(x) \left( \frac{x-1}{\hk x} y \right)^{nx} \dx x + \int_{\frac{y}{y-\hk(n)}}^{\infty} f_{n-k, \frac{n}{\alpha_{n}}}^{\text{Er}}(x) \dx x \hspace{2em} \text{by (\ref{eq: tmucase1})}\\
    &\geq \int_{\frac{y}{y-\hk(n)}}^{\infty} f_{n-k, \frac{n}{\alpha_{n}}}^{\text{Er}}(x) \dx x.
\end{align*}
By letting $n=2$ and $y=\mu_2$, we have for $k \in \mathbb{Z}_{\leq 1}$
\begin{align*}
    \mathbb{P}[\tmu_1 \leq \mu_2 | \hk_1(2), \ha_1(2)] &\geq \I\left[\hk \leq  \frac{\mu_2}{m}\right]\int_{\frac{\mu_2}{\mu_2 - \hk}}^\infty f_{2-k, \frac{2}{\ha}}^{\text{Er}}(x)  \dx x \\
    &\geq \I\left[\hk \leq  \frac{\mu_2}{m}\right] \int_{\gamma}^\infty f_{2-k, \frac{2}{\ha}}^{\text{Er}}(x)  \dx x \numberthis{\label{eq: star_2}}\\
    &= \I\left[\hk \leq  \frac{\mu_2}{m}\right] \frac{\Gamma(2-k, \frac{2\gamma}{\ha})}{\Gamma(2-k)},  \numberthis{\label{eq: gz}}
    % \\
    % &\geq \I\left[\hk \leq  \frac{\mu_2}{m}\right]\frac{\Gamma(1, \frac{2\gamma}{\ha})}{\Gamma(1)}
\end{align*}
where (\ref{eq: star_2}) holds from $\frac{\mu_2}{\mu_2 - \hk_1(2)} \leq \frac{\mu_2}{\mu_2 - \mu_2/m} = \frac{m}{m-1} = \gamma = \lceil \alpha_2 \rceil$
and $\Gamma(\cdot, \cdot)$ is the upper incomplete Gamma function.
To simplify the notations, we drop the arguments on $n$ and $t$ of $\tmu_1(t)$, $\hk_1(n)$, and $\ha_1(n)$ in the following sections.

\subsection{Priors $k\in\mathbb{Z}_{\leq 0}$}
Note that $\Gamma(n, x)$ is an increasing function with respect to $n$ for fixed $x$.
Therefore, (\ref{eq: gz}) implies that if the lower bound of regret for the reference prior is larger than the lower bound, then every prior with $k\in\mathbb{Z}_{\leq 0}$ are suboptimal.
Therefore, let us consider the case $k=1$, where we can rewrite (\ref{eq: gz}) as
\begin{equation}\label{eq: gz0}
    \mathbb{P}[\tmu_1 \leq \mu_2 | \hk_1, \ha_1]  \geq \I\left[\hk_1 \leq  \frac{\mu_2}{m}\right] \frac{\Gamma(1, \frac{2\gamma}{\ha_1})}{\Gamma(1)} = e^{-\frac{2\gamma}{\ha_1}}.
\end{equation}
Since $\ha_1(2) \sim \InvG(1, 2\alpha_1)$ in (\ref{eq: MLEdist}), $z := \frac{2\gamma}{\ha}$ follows an exponential distribution with rate parameter $\alpha_1/\gamma$, i.e., $z \sim \Exp(\alpha_1/\gamma)$.
By combining (\ref{eq: gz0}) with (\ref{eq: sublower}), we have
\begin{align*}
    \mathbb{E}\Bigg[  \sum_{t=5}^T \I [j(t)=2, N_1(t) = 2] \Bigg]
    &\geq \mathbb{E}_{\hk, z}\left[\sum_{s=1}^{T'} \bigg( \I[\hk \leq \mu_2/m]e^{-z} \bigg)^{s}  \right] \\
    &= \mathbb{P}[\hk \leq \mu_2/m]\mathbb{E}_{z\sim \Exp(\alpha_1/\gamma)}\left[ \sum_{s=1}^{T'} e^{-zs}  \right], \numberthis{\label{eq: gz0_1}}
\end{align*}
where we used the stochastic independence of $\ha$ and $\hk$.
Here,
\begin{align*}
    \mathbb{E}_{z\sim \Exp(\alpha_1/\gamma)}\left[ \sum_{s=1}^{T'} e^{-zs}  \right] &=  \mathbb{E}_{z\sim \Exp(\alpha_1/\gamma)}\left[ (1-e^{-zT'}) \frac{e^{-z}}{1-e^{-z}}  \right] \\
    &=  \int_{0}^{\infty} (1-e^{-xT'})\frac{e^{-x}}{1-e^{-x}} e^{-\frac{\alpha_1}{\gamma}x} \dx x \\
    &\geq \int_{0}^{\infty} (1-e^{-xT'}) \frac{e^{-2x}}{1-e^{-x}} \dx x \hspace{2em} \text{by } \frac{\alpha_1}{\gamma} < 1 \\
    &\geq \left(1 - \frac{1}{e} \right) \int_{\frac{1}{T'}}^{\infty} \frac{e^{-2x}}{1-e^{-x}} \dx x  \\
    &= \left(1 - \frac{1}{e} \right) \left[ \log(e^{x}-1) - z + e^{-z} \right]_{x=\frac{1}{T'}}^{\infty} 
    \\
    &\geq \left(1 - \frac{1}{e} \right) \left ( \log T'+1-\frac{3}{2T'}\right), \numberthis{\label{eq: T3k0rslt}}
\end{align*}
where the last inequality holds from its power series expansion
\begin{align*}
     \log(e^{x}-1) - x + e^{-x} \geq \log(x) + 1 - \frac{3}{2}x
\end{align*}
and $\lim_{x\to \infty} \log(e^{x}-1) - x + e^{-x} =0$.
By combining (\ref{eq: T3k0rslt}) with (\ref{eq: gz0_1}) and (\ref{eq: sublower}) and elementary calculation with $\hk_1(2) \sim \Par(\kappa_1, 2\alpha_1)$, we have
\begin{align*}
    \mathbb{E}[\reg(T)] &\geq \Delta_2 \left( 1- \left(\frac{m\kappa}{\mu_2}\right)^{2\alpha_1} \right) \left( 1 - \frac{1}{e} \right)\left( \log T' + 1 -\frac{3}{2T'} \right) \\
    &= \Delta_2 \left( 1- \left(\frac{m\kappa}{\mu_2}\right)^{2\alpha_1} \right) \left( 1 - \frac{1}{e} \right) \left( \log (T+4) + 1 -\frac{3}{2(T+4)} \right).
\end{align*}
Therefore, under $\TS$ with $k \in \mathbb{Z}_{\leq 1}$, there exists a constant $C(\alpha_1, \alpha_2)$ such that
\begin{equation*}
    \liminf_{T \to \infty} \frac{\mathbb{E}[\reg(T)]}{\log T} \geq C(\alpha_1, \alpha_2).
\end{equation*}

\subsection{Priors $k \in \mathbb{Z}_{\leq 0}$}
Similarly to the last section, it is sufficient to consider the case $k=0$, where we can rewrite (\ref{eq: gz}) as
\begin{equation}\label{eq: gz1}
    \mathbb{P}[\tmu_1 \leq \mu_2 | \hk_1, \ha_1]  \geq \I\left[\hk_1 \leq  \frac{\mu_2}{m}\right] \frac{\Gamma(2, \frac{2\gamma}{\ha_1})}{\Gamma(2)}.
\end{equation}
From the definition of the upper incomplete Gamma function, we have
\begin{equation*}
    g(z) := \Gamma(2, z) = \int_z^\infty x^1 e^{-x} \dx x = e^{-z} (z + 1),
\end{equation*}
as a counterpart of $e^{-z}$ in (\ref{eq: gz0_1}) with the same notations $z = \frac{2\gamma}{\ha_1} \sim \Exp\left(\frac{\alpha_1}{\gamma}\right)$.

Therefore, by replacing $e^{-zs}$ in (\ref{eq: gz0_1}) with $g(z)^s$, we have
\begin{align*}
    \mathbb{E}_{z}\left[ \sum_{s=1}^{T'} (g(z))^s  \right] 
    &\geq \mathbb{E}_{z}\left[ \I[z\in(0,1]] \sum_{s=1}^{T'} (g(z))^s  \right] \\
    &\geq \mathbb{E}_{z}\left[ \I[z\in(0,1]] \sum_{s=1}^{T'} (1-z^2)^s  \right] \\
    &= \mathbb{E}_{z} \left[ \I[z\in(0,1]] (1-(1-z^2)^{T'}) \frac{1-z^2}{z^2} \right],
\end{align*}
where we used the fact $z \in [0,1]$, $g(z) \geq 1-z^2$ in the second inequality.
Since $z \in \left(0, \frac{1}{\sqrt{T'}} \right]$, $(1-z^2)^{T'} \leq \frac{1}{1+T'z^2}$ holds, we have $1-(1-z^2)^{T'} \geq \frac{T'z^2}{1+T'z^2}$.

By applying this fact, we have for $T' > 1$,
\begin{align*}
  \mathbb{E}_{z}\Bigg[ \sum_{s=1}^{T'}  (g(z))^s  \Bigg] &\geq \mathbb{E}_{z} \left[ \frac{T'(1-z^2)}{1+T'z^2}  \I\left[z\in\left(0,\frac{1}{\sqrt{T'}}\right]\right] \right] \\
    &\geq\mathbb{E}_{z\sim \Exp(\alpha_1/\gamma)} \left[  \left( \frac{T'}{2} - \frac{1}{2} \right) \I\left[z\in\left(0,\frac{1}{\sqrt{T'}}\right]\right] \right] \hspace{3em} \\
    &= \left( \frac{T'}{2} - \frac{1}{2} \right) \int_0^{\frac{1}{\sqrt{T'}}}  \frac{\alpha_1}{\gamma}  e^{-\frac{\alpha_1}{\gamma}z} \dx z \numberthis{\label{eq: star_4}} \\
    &= \left( \frac{T'}{2} - \frac{1}{2} \right) \left(1-e^{-\frac{\alpha_1}{\gamma\sqrt{T'}}}\right).
\end{align*}
Notice that $e^{-x} \leq 1 - \frac{x}{2}$ holds for $x < 1$, which gives
\begin{align*}
      \mathbb{E}_{z}\bigg[  \sum_{s=1}^{T'}  (g(z))^s  \bigg] &\geq \left( \frac{T'}{2} - \frac{1}{2} \right) \left(1-e^{-\frac{\alpha_1}{\gamma\sqrt{T'}}}\right) \\
      &\geq \left( \frac{T'}{2} - \frac{1}{2} \right) \frac{\alpha_1}{2\gamma\sqrt{T'}} = \frac{\alpha_1}{4\gamma}\left( \sqrt{T'} - \frac{1}{\sqrt{T'}} \right). \numberthis{\label{eq: suboptimalend}}
\end{align*}

By applying (\ref{eq: suboptimalend}) to (\ref{eq: sublower}), we obtain for $k \in \mathbb{Z}_{\leq 0}$ and $T' = T-4 > 1$, 
\begin{align*}
    \mathbb{E}[\reg(T)] &\geq \Delta_2 \frac{\alpha_1}{4\gamma} \left( 1- \left(\frac{m\kappa}{\mu_2}\right)^{2\alpha_1} \right) \left( \sqrt{T'} - \frac{1}{\sqrt{T'}} \right) \\
    % &= \Delta_2 \frac{\alpha_1}{4\ceil{\alpha_2}} \left( 1- \left(\frac{m\kappa}{\mu_2}\right)^{2\alpha_1} \right) \left( \sqrt{T'} - \frac{1}{\sqrt{T'}} \right) \\
    &= \mathcal{O}(\sqrt{T}).
\end{align*}
Notice that from the definition of $m = \frac{\gamma}{\gamma-1} = \frac{\ceil{\alpha_2}}{\ceil{\alpha_2}-1}$, $m \frac{\kappa}{\mu_2} = m\left( 1 - \frac{1}{\alpha_2}\right) < 1$ holds.
Therefore, under $\TS$ with priors $k \in \mathbb{Z}_{\leq 0}$, there exists a constant $C'(\alpha_1, \alpha_2) > 0$ such that
\begin{equation*}
    \liminf_{T \to \infty} \frac{\mathbb{E}[\reg(T)]}{\sqrt{T}} \geq C'(\alpha_1, \alpha_2).
\end{equation*}
\end{proof}

\section{Priors and posteriors}\label{sec: bayesian}
In this section, we provide details on the problem of Jeffreys prior and the probability matching prior under the multiparameter models.
One can find more details from references in this section.
\subsection{Problems of the Jeffreys prior in the presence of nuisance parameters}
The Jeffreys prior was defined to be proportional to the square root of the determinant of the FI matrix so that it remains invariant under all one-to-one reparameterizations of parameters~\citep{jeffreys1998theory}.
However, the Jeffreys prior is known to suffer from many problems when the model contains nuisance parameters~\citep{datta1995some,ghosh2011objective}.
Therefore, Jeffreys himself recommended using other priors in the case of multiparameter models~\citep{berger1992development}.
For example, for the location-scale family, Jeffreys recommended using alternate priors, which coincide with the exact matching prior~\citep{diciccio2017simple}.

As mentioned in the main text, it is known that the Jeffreys prior leads to inconsistent estimators for the variance in the Neyman-Scott problem~\citep[see][Example 3.]{berger1992development}.
Another example is Stein's example~\citep{stein1959example}, where the model of the Gaussian distribution with a common variance is considered.
In this example, the Jeffreys prior lead to an unsatisfactory posterior distribution since the generalized Bayesian estimator under the Jeffreys prior is dominated by other estimators for the quadratic loss~\citep[see][Example 3.5.9.]{robert2007bayesian}.
Note that \citet{bernardo1979reference} showed that the reference prior does not suffer from such problems, which would explain why the reference prior shows better performance than the Jeffreys prior in the multiparameter bandit problems.
% Another example is the marginalization paradox, where the marginal likelihood is not equal to the prior predictive distribution~\citep[see][Figure 1.]{wallstrom2007marginalization}.
% It is known that the marginalization paradox can arise when one uses the improper priors uncritically~\citep{stone1972bayesian, dawid1973marginalization}, which could occur in several multiparameter problems~\citep{jaynes1978marginalization}.

\subsection{Probability matching prior}
The probability matching prior is a type of noninformative prior that is designed to achieve the synthesis between the coverage probability of the Bayesian interval estimates and that of the frequentist interval estimates~\citep{welch1963formulae, tibshirani1989noninformative}. 
Therefore, the posterior probability of certain intervals matches exactly or asymptotically the frequentist's coverage probability under the probability matching prior.
If the posterior probability of certain intervals exactly matches the confidence interval, such a prior is called an exact matching prior.
In cases where the Bayesian interval estimate does not exactly match the frequentist's coverage probability, but the difference is small, it is called a $k$-th order matching prior. The difference between the two probabilities is measured by a remainder term, usually denoted as $\mathcal{O}(n^{-\frac{k}{2}})$, where $n$ is the sample size and $k$ is the order of the matching\footnote{Some papers call a prior $k$-th order matching prior when a remainder is $\mathcal{O}\left(n^{-\frac{k+1}{2}}\right)$~\citep{diciccio2017simple}. In this paper, we follow notations used in \citet{mukerjee1997second} and \citet{datta2005probability}.}.

For example, let $\theta \in \mathbb{R}_{+}$ be a parameter of interest.
For some priors $\pi(\theta)$, let $\psi(\theta|X_n)$ be a posterior distribution after observing $n$ samples $X_n$.
Then, for any $\alpha \in (0,1)$, let us define $\theta_{\alpha} > 0$ such that
\begin{equation*}
    \int_{0}^{\theta_{\alpha}} \psi(\theta|X_n) \dx \theta = \alpha.
\end{equation*}
When $\pi(\theta)$ is the second order probability matching prior, it holds that
\begin{equation*}
    \mathbb{P}[\theta \leq \theta_\alpha | X_n] = \alpha + \mathcal{O}(n^{-1}). 
\end{equation*}
When $\pi(\theta)$ is the exact probability matching prior, we have
\begin{equation*}
    \mathbb{P}[\theta \leq \theta_\alpha | X_n] = \alpha.
\end{equation*}
For more details, we refer readers to \citet{datta2005probability} and \citet{ghosh2011objective} and the references therein.

\subsection{Details on the derivation of posteriors}\label{sec: derivation_posterior}
In this section, we provide the detailed derivation of posteriors.

Let the observation $\br = (r_{1}, \ldots, r_{n})$ of an arm and let $q(n) = \sum_{s=1}^n \log r_s$.
Then, Bayes' theorem gives the posterior probability density as
\begin{align*}
    p(\ka \mid\br) 
    =\frac{p(\br| \ka) p(\ka)}{\int_{0}^\infty \int_{0}^\infty p(\br \mid \ka)p(\ka) \dx \kappa \dx \alpha},
\end{align*}
where 
\begin{align*}
    p(\br\mid\ka) &= \alpha^n \kappa^{n\alpha}\left( \prod_{s=1}^n r_{a,s} \right)^{-\alpha-1} \I[\kappa \leq \hk(n)] \\
    &= \alpha^n \kappa^{n\alpha} \exp(-q(n)(\alpha+1)) \I[\kappa \leq \hk(n)].
\end{align*}
By direct computation with given prior with $k \in \mathbb{Z}$, we have
\begin{align*}
    \int_{0}^\infty \int_{0}^\infty p(\br \mid \ka)p(\ka) \dx \kappa \dx \alpha &= \int_{0}^\infty \int_{0}^\infty p(\br \mid \ka) \frac{\alpha^{-k}}{\kappa}\dx \kappa \dx \alpha \\
    &= \int_{0}^\infty  \alpha^{n-k} \exp(-q(n) (\alpha+1)) \int_{0}^{\hk} \kappa^{n\alpha -1}  \dx \kappa \dx \alpha \\
    &=  \int_{0}^\infty  \frac{\alpha^{n-k-1}}{n} e^{-q(n)} \exp(-\alpha(q(n) - n \log \hk)) \dx \alpha \\
    &= \frac{\Gamma(n-k)}{n}\frac{e^{-q(n)}}{(q(n) - n \log \hk)^{n-k}}.
\end{align*}
Therefore, the joint posterior probability density is given as follows:
\begin{equation*}
    p(\ka \mid \br) = \frac{n[q(n) - n\log \hk(n)]^{n-k}}{\Gamma(n-k)}\alpha^{n-k} \kappa^{n\alpha-1} e^{-q(n)\alpha} \I[0<\kappa \leq \hk(n)],
\end{equation*}
which gives the marginal posterior of $\alpha$ as
\begin{equation} \label{eq: postak1}
    p(\alpha \mid \br) = \frac{\alpha^{n-k-1}[q(n) - n\log \hk(n)]^{n-k}}{\Gamma(n-k)} e^{-\alpha(q(n)-n\log \hk(n))}.
\end{equation}
Thus, sample $\ta$ generated from the marginal posterior actually follows the Gamma distribution with shape $n-k$ and rate $q(n)-n\log \hk(n)= \frac{n}{\ha}$, i.e., $\ta \sim \Er\left(n-k, \frac{n}{\ha} \right)$ as $n\in \mathbb{N}$ and $k \in \mathbb{Z}$ if $n > k$. 
When $\ta$ is given, the conditional posterior of $\kappa$ is given as
\begin{align*}
  p(\kappa \mid \br, \alpha) 
    &= \frac{p(\ka \mid \br)}{p(\alpha \mid \br)}
  \\&= \frac{n\alpha}{\hk^{n\alpha}} \kappa^{n\alpha -1} \I[0<\kappa \leq \hk(n)]. \numberthis{\label{eq: pdfkappak1}}
\end{align*}
Hence, the cumulative distribution function (CDF) of $\kappa$ given $\alpha$ is given as
\begin{equation}\label{eq: cdfkappak1}
  \mathbb{P}(\kappa \leq x) = F(x \mid \br, \alpha=\ta) = \left( \frac{x}{\hk(n)}\right)^{n\ta}, \, 0 < x \leq \hk(n). 
\end{equation}
Note that MLEs of $\ka$ are equivalent to the maximum a posteriori (MAP) estimators when one uses the Jeffreys prior~\citep{Pareto_inference, Pareto_prior}.

In sum, under aforementioned priors, we consider the marginalized posterior distribution on $\alpha$
\begin{equation*}
    p(\alpha \mid \br) = \Er\left(n-k, \frac{n}{\ha}\right)
\end{equation*}
and the cumulative distribution function (CDF) of the conditional posterior of $\kappa$
\begin{equation*}
  F(x \mid \br, \alpha=\ta) = \left( \frac{x}{\ha(n)}\right)^{n\ta}, \, 0 < x \leq \hk(n).
\end{equation*}
Note that we require $\max\left\{2, k+1 \right\}$ initial plays to avoid improper posteriors and improper MLEs.

\section{Technical lemma}
In this section, we present some technical lemmas used in the proof of main lemmas.

\begin{restatable}{lemma}{lemCramer}~\label{lem: chicramer}
Let $Z$ be a random variable following the chi-squared distribution with the degree of freedom $2n$.
Then, for any $ x \in (0,1)$
\begin{equation*}
    \mathbb{P}[Z \leq 2nx] \leq e^{-n h(x)},
\end{equation*}
where $h(x) = (x-1 -\log x) \geq 0$.
\end{restatable}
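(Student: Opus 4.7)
}
The plan is to apply the standard Chernoff/Cram\'er technique for the chi-squared distribution, optimizing the moment generating function (MGF) parameter. Recall that for $Z \sim \chi^2_{2n}$, the MGF is $\mathbb{E}[e^{tZ}] = (1-2t)^{-n}$ for $t < 1/2$, so for any $\lambda > 0$ the Laplace transform satisfies $\mathbb{E}[e^{-\lambda Z}] = (1+2\lambda)^{-n}$. Applying Markov's inequality to $e^{-\lambda Z}$ gives, for every $\lambda > 0$,
\begin{equation*}
    \mathbb{P}[Z \leq 2nx] = \mathbb{P}[e^{-\lambda Z} \geq e^{-2n\lambda x}] \leq e^{2n\lambda x}(1+2\lambda)^{-n}.
\end{equation*}

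Next I would optimize the right-hand side over $\lambda > 0$. Taking the derivative of the exponent $2n\lambda x - n\log(1+2\lambda)$ and setting it to zero gives $1+2\lambda = 1/x$, i.e., $\lambda^\star = (1-x)/(2x)$, which is strictly positive precisely because $x \in (0,1)$. Substituting $\lambda^\star$ back yields $2n\lambda^\star x = n(1-x)$ and $(1+2\lambda^\star)^{-n} = x^n$, so
\begin{equation*}
    \mathbb{P}[Z \leq 2nx] \leq e^{n(1-x)} x^n = e^{-n(x-1-\log x)} = e^{-nh(x)},
\end{equation*}
as claimed.

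Finally, for non-negativity $h(x) \geq 0$, I would note $h'(x) = 1 - 1/x$, so $h$ is decreasing on $(0,1)$ and increasing on $(1,\infty)$ with unique minimum $h(1) = 0$; equivalently this follows from the standard inequality $\log x \leq x-1$.

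There is no real obstacle here: the computation is a routine Cram\'er-type bound, and the only mild point to check is that the optimizer $\lambda^\star$ lies in the admissible range $\lambda > 0$, which is guaranteed by the hypothesis $x \in (0,1)$.
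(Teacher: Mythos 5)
Your proof is correct, and it is essentially the same argument as the paper's: the paper writes $Z$ as a sum of $2n$ i.i.d. squared standard normals and invokes Cram\'er's theorem, whose rate function $\Lambda^*(z)=\tfrac{1}{2}(z-1-\log z)$ is exactly the Legendre transform you compute by optimizing the Chernoff parameter $\lambda$ directly against the closed-form $\chi^2_{2n}$ moment generating function. Your version is marginally more self-contained (no appeal to Cram\'er's theorem as a black box), but the optimization and resulting bound are identical.
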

\begin{proof}
Let $X_i$ be random variables following the standard normal distribution, so that $Z = \sum_{i=1}^{2n} X_i^2$ holds.
From Lemma~\ref{lem: cramer}, one can derive
\begin{equation*}
    \mathbb{P}[Z \leq 2nx] =  \mathbb{P}\left[ \frac{1}{2n}\sum_{i=1}^{2n} X_i^2 \leq x \right] \leq \exp{\left( - 2n\inf_{z \leq x} \Lambda^* (z) \right)}.
\end{equation*}
From the definition of the moment generating function, one can see that
\begin{equation*}
    \Lambda^*(z) = \sup_{\lambda \in \mathbb{R}} \lambda z - \log \mathbb{E}\left[e^{\lambda X_1^2} \right] = \sup_{\lambda \in \mathbb{R}} \lambda z + \frac{1}{2}\log (1-2\lambda) = \frac{1}{2}(z -1 -\log z),
\end{equation*}
which concludes the proof.
\end{proof}

\section{Known results}
In this section, we present some proved lemmas that we use without proofs.
\begin{lemma}[Bernstein's inequality]\label{lem: subexp}
Let $X$ be a $(\sigma^2, b)$-subexponential random variable with $\mathbb{E}[X]=\mu$ and $Var[X]=\sigma^2$, which satisfies
\begin{equation*}
    \mathbb{E}[e^{\lambda X}] \leq \exp{\frac{\lambda^2 \sigma^2}{2}} \quad \text{ for } |\lambda| \leq \frac{1}{b}.
\end{equation*}
Let $X_i$ be independent $(\sigma^2, b)$-subexponential.
Then, it holds that
\begin{equation*}
    \mathbb{P}\left( \abs{\frac{1}{n} \sum_{s=1}^n X_i - \mu} \geq t \right) \leq 2\exp \left( - \frac{n}{2}\min \left\{ \frac{t^2}{\sigma^2} , \frac{t}{b} \right\}\right). 
\end{equation*}
\end{lemma}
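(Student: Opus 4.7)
The plan is to apply the Chernoff method while restricting the dual variable $\lambda$ to the admissible interval $[-1/b,\,1/b]$ on which the subexponential MGF bound holds, and then to split the resulting optimization into two regimes according to whether $t$ is small or large relative to $\sigma^2/b$. I will read the hypothesis in its standard form as a bound on the MGF of the centered summand $X_i-\mu$ (the stated version must be interpreted this way, since otherwise the inequality at small $\lambda$ forces $\mu=0$).

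First I would exploit independence: for any $\lambda$ with $|\lambda|\le 1/b$,
$$
\mathbb{E}\!\left[\exp\!\left(\lambda\sum_{i=1}^n (X_i-\mu)\right)\right]
= \prod_{i=1}^n \mathbb{E}\!\left[e^{\lambda(X_i-\mu)}\right]
\le \exp\!\left(\tfrac{n\lambda^2\sigma^2}{2}\right),
$$
by applying the hypothesis to each centered summand. Markov's inequality at the threshold $nt$ then yields, for every admissible $\lambda>0$,
$$
\mathbb{P}\!\left[\frac{1}{n}\sum_{i=1}^n (X_i-\mu) \ge t\right]
\le \exp\!\left(-\lambda n t + \tfrac{n\lambda^2\sigma^2}{2}\right).
$$

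Second, I would optimize over $\lambda\in(0,1/b]$. The unconstrained minimizer is $\lambda^{\star}=t/\sigma^2$; if $t\le \sigma^2/b$, this value is admissible and yields the Gaussian-type branch $\exp(-nt^2/(2\sigma^2))$. If $t>\sigma^2/b$, the constraint binds and the boundary choice $\lambda=1/b$ is optimal, producing exponent $-nt/b+n\sigma^2/(2b^2)$; the elementary inequality $\sigma^2/(2b^2)\le t/(2b)$ (valid precisely when $t\ge \sigma^2/b$) upgrades this to $\exp(-nt/(2b))$. Combining the two cases gives the desired one-sided bound $\exp\!\left(-(n/2)\min\{t^2/\sigma^2,\,t/b\}\right)$.

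Finally, the lower deviation is handled symmetrically by rerunning the whole argument with $-(X_i-\mu)$ in place of $X_i-\mu$, and a union bound over the two tails supplies the factor of $2$ in the statement. I do not expect a genuine obstacle; the only delicate point is verifying that the constrained optimum on the boundary branch collapses cleanly to $\exp(-nt/(2b))$, and this is exactly the elementary inequality at which the case split was set up.
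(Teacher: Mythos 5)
Your proof is correct. Note that the paper does not prove this lemma at all: it appears in the ``Known results'' section, stated without proof and with a pointer to \citet{vershynin2018high}, so there is no in-paper argument to compare against. What you wrote is the standard Chernoff-bound derivation from that reference --- independence to factor the MGF, Markov's inequality, the case split at $t=\sigma^2/b$ between the interior optimum $\lambda^\star=t/\sigma^2$ and the boundary choice $\lambda=1/b$, and a union bound for the two tails --- and each step checks out, including the boundary case where monotonicity of $-\lambda t+\lambda^2\sigma^2/2$ on $(0,1/b]$ justifies taking $\lambda=1/b$ and the inequality $\sigma^2/(2b^2)\le t/(2b)$ collapses the exponent to $-nt/(2b)$. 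Your remark that the stated hypothesis must be read as a bound on the MGF of the \emph{centered} variable $X-\mu$ (otherwise $\lambda\to 0$ forces $\mu=0$) is a genuine and correct repair of a small imprecision in the lemma as written.
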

For more details, we refer the reader to \citet{vershynin2018high}.

\begin{lemma}[Theorem~4.1. in \citet{wallace1959}]\label{lem: wallace}
Let $F_n$ be the distribution function of the chi-squared distribution on $n$ degrees of freedom.
For all $t > n$, all $n>0$, and with $w(t) = \sqrt{t - n -n \log(t/n)}$,
\begin{equation*}
    1 - F_n(t) < \frac{d_n}{2} \mathrm{erfc}\left(\frac{w(t)}{\sqrt{2}}\right),
\end{equation*}
where $d_n = \frac{\left( \frac{n}{2} \right)^{\frac{n-1}{2}} e^{-\frac{n}{2}} \sqrt{2\pi} }{\Gamma(n/2)}$ and $\mathrm{erfc}(\cdot)$ is the complementary error function.
\end{lemma}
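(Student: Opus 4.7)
The plan is to reduce Wallace's tail bound to an elementary one-variable inequality by writing both sides as integrals on the common interval $(t,\infty)$ and comparing the integrands pointwise. Concretely, I would start from the density representation
$$1 - F_n(t) = \frac{1}{2^{n/2}\Gamma(n/2)} \int_t^\infty x^{n/2-1} e^{-x/2}\,dx,$$
and transform the erfc integral via the substitution $z = w(x) = \sqrt{x - n - n\log(x/n)}$, which is strictly increasing on $(n,\infty)$ with $dz = \frac{x-n}{2x\,w(x)}\,dx$ and $e^{-z^2/2} = e^{n/2} n^{-n/2} x^{n/2} e^{-x/2}$. Substituting back and absorbing the explicit form of $d_n$ should yield
$$\frac{d_n}{2}\,\mathrm{erfc}\!\left(\frac{w(t)}{\sqrt{2}}\right) = \frac{1}{2^{n/2}\sqrt{2n}\,\Gamma(n/2)} \int_t^\infty x^{n/2-1} e^{-x/2}\,\frac{x-n}{w(x)}\,dx.$$

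Second, subtracting the left-hand side from the right, the claimed inequality becomes equivalent to the strict positivity for $t>n$ of
$$\int_t^\infty x^{n/2-1} e^{-x/2}\left(\frac{x-n}{\sqrt{2n}\,w(x)} - 1\right) dx,$$
which is implied by the pointwise bound $(x-n)^2 > 2n\, w(x)^2$ on $(n,\infty)$. Setting $u = x/n > 1$ reduces this to the elementary inequality $(u-1)^2 > 2(u-1-\log u)$. I would verify it by Taylor's theorem: the function $h(u) := (u-1)^2 - 2(u-1-\log u)$ satisfies $h(1)=h'(1)=h''(1)=0$ together with $h'''(u)=4/u^3>0$, so $h''$, then $h'$, then $h$ itself are strictly positive on $(1,\infty)$.

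The main obstacle I anticipate is the bookkeeping in the change of variables: one has to verify carefully that the factors $(n/2)^{(n-1)/2}$, $e^{-n/2}$, $\sqrt{2\pi}$ and $1/\Gamma(n/2)$ packaged inside $d_n$ combine with the Jacobian $(x-n)/(2xw(x))$ and the exponential identity above to leave exactly the prefactor $1/(\sqrt{2n}\,2^{n/2}\Gamma(n/2))$. Once that cancellation is confirmed, the remaining step is the elementary convexity-type inequality and the proof is complete. Note that since the pointwise inequality $(x-n)^2 > 2n\,w(x)^2$ is strict on $(n,\infty)$, the resulting tail bound is strict, matching the strict inequality in the statement.
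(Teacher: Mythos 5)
Your proof is correct. The paper itself does not prove this statement: it is listed under ``Known results'' and imported without proof as Theorem~4.1 of \citet{wallace1959}, so there is no in-paper argument to compare against; you have supplied a self-contained derivation of a cited result. I checked the key computations. The substitution $z=w(x)$ is a valid increasing bijection from $(t,\infty)$ onto $(w(t),\infty)$ since $w'(x)=\frac{x-n}{2x\,w(x)}>0$ for $x>n$ and $w(x)\to\infty$; the identity $e^{-w(x)^2/2}=e^{n/2}n^{-n/2}x^{n/2}e^{-x/2}$ is right; and the constants do collapse as you claim, since $\sqrt{2\pi}\cdot\sqrt{2/\pi}=2$ and $(n/2)^{(n-1)/2}n^{-n/2}=n^{-1/2}2^{-(n-1)/2}$ give exactly the prefactor $\bigl(2^{n/2}\sqrt{2n}\,\Gamma(n/2)\bigr)^{-1}$, matching the prefactor of $1-F_n(t)$ up to the extra factor $\frac{x-n}{\sqrt{2n}\,w(x)}$ in the integrand. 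The reduction to $(u-1)^2>2(u-1-\log u)$ for $u>1$ is legitimate because both $x-n$ and $w(x)$ are positive on the domain of integration, and your verification via $h(1)=h'(1)=0$ and $h''(u)=2(1-1/u^2)>0$ is sound; strictness of the pointwise bound on all of $(t,\infty)$ gives the strict inequality in the conclusion. This is essentially the classical route (compare the chi-squared tail to a Gaussian tail through the deviance transformation $w$ and dominate the densities pointwise), and it is complete as written.
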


\begin{lemma}[Cram\'{e}r's theorem]\label{lem: cramer}
Let $X_1, \ldots, X_n$ be i.i.d. random variables on $\mathbb{R}$. Then, for any convex set $C \in \mathbb{R}$,
\begin{equation*}
    \mathbb{P}\left[\frac{1}{n}\sum_{i=1}^n X_i \in C\right] \leq \exp{\left( -n \inf_{z \in C} \Lambda^* (z) \right)},
\end{equation*}
where $\Lambda^*(z) = \sup_{\lambda \in \mathbb{R}} \lambda z - \log \mathbb{E}[e^{\lambda X_1}]$.
\end{lemma}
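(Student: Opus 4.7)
The plan is to apply the classical exponential Chernoff bound and invoke Legendre--Fenchel duality with the log-moment generating function $\Lambda(\lambda) := \log \mathbb{E}[e^{\lambda X_1}]$. The key structural observation is that, since $C \subseteq \mathbb{R}$ is convex, it is an interval, so the whole argument reduces to a one-sided tail estimate relative to the mean $m := \mathbb{E}[X_1]$.

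First I would dispose of the trivial case. The conjugate $\Lambda^*$ is always convex, nonnegative, and satisfies $\Lambda^*(m) = 0$. Consequently, if $m \in \overline{C}$, then $\inf_{z \in C} \Lambda^*(z) = 0$ and the right-hand side is at least $1$, so the bound is vacuous. Hence assume $m \notin \overline{C}$. Since $C$ is an interval not containing $m$, it lies entirely on one side of $m$; by symmetry (replacing $X_i$ with $-X_i$ if needed) I assume $C \subseteq [b,\infty)$ with $b := \inf C > m$. For any $\lambda \geq 0$, writing $S_n := \sum_{i=1}^n X_i$ and using Markov's inequality applied to $e^{\lambda S_n}$ together with the i.i.d.\ assumption gives
\begin{equation*}
\mathbb{P}\!\left[\tfrac{1}{n} S_n \in C\right] \;\leq\; \mathbb{P}\!\left[ S_n \geq n b \right] \;\leq\; e^{-n \lambda b}\, \mathbb{E}\!\left[e^{\lambda S_n}\right] \;=\; e^{-n\, (\lambda b - \Lambda(\lambda))}.
\end{equation*}
Taking the supremum of $\lambda b - \Lambda(\lambda)$ over $\lambda \geq 0$ yields a bound of $\exp(-n \sup_{\lambda \geq 0}(\lambda b - \Lambda(\lambda)))$. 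Because $\Lambda$ is convex with $\Lambda'(0) = m < b$, the unrestricted optimizer of $\lambda b - \Lambda(\lambda)$ is attained at some $\lambda^\star \geq 0$, so this constrained supremum agrees with the unconstrained $\Lambda^*(b)$. Finally, $\Lambda^*$ being convex with minimum $0$ at $m$ implies it is nondecreasing on $[m,\infty)$, whence $\inf_{z \in C}\Lambda^*(z) = \Lambda^*(b)$, closing the argument.

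The only real subtleties are the two auxiliary facts used in the last two sentences: (i) the unconstrained Legendre optimizer at $b \geq m$ is nonnegative (so one may optimize over $\lambda \geq 0$ only), and (ii) $\Lambda^*$ is monotone to the right of $m$. Both are standard consequences of convex conjugation theory and are the only places beyond the one-line Chernoff bound that require attention; this is the main obstacle, but it is entirely textbook. Since Lemma~\ref{lem: cramer} is a cornerstone of large deviations theory, for the paper I would simply cite a standard reference (e.g., Dembo and Zeitouni, \emph{Large Deviations Techniques and Applications}) rather than reproduce the derivation in full.
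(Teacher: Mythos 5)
The paper does not prove this statement at all: it appears in the ``Known results'' section, which the authors introduce with ``we present some proved lemmas that we use without proofs,'' so there is no in-paper argument to compare against, and your instinct to simply cite a standard reference (Dembo and Zeitouni) is exactly what the paper does in spirit. Your sketch is the standard Chernoff/Cram\'er upper bound and is essentially sound, with two small glosses worth tightening if you were to write it out. First, you do not need (and cannot always guarantee) that the unconstrained optimizer $\lambda^\star$ of $\lambda b - \Lambda(\lambda)$ is \emph{attained}; the cleaner route is Jensen's inequality $\Lambda(\lambda) \geq \lambda m$, which gives $\lambda b - \Lambda(\lambda) \leq \lambda(b-m) \leq 0$ for all $\lambda \leq 0$ when $b \geq m$, so the supremum over $\lambda \geq 0$ already equals $\Lambda^*(b)$. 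Second, the identity $\inf_{z \in C}\Lambda^*(z) = \Lambda^*(b)$ with $b = \inf C$ can fail when $b \notin C$ and $\Lambda^*$ jumps at $b$ (e.g.\ $X_1$ bounded above by $b$ and $C=(b,\infty)$); the fix is to write $\mathbb{P}[\bar{X}_n \in C] \leq \mathbb{P}[\bar{X}_n > b] = \lim_{\delta \downarrow 0}\mathbb{P}[\bar{X}_n \geq b+\delta] \leq \lim_{\delta \downarrow 0} e^{-n\Lambda^*(b+\delta)}$, which by monotonicity of $\Lambda^*$ on $[m,\infty)$ equals $e^{-n\inf_{z\in C}\Lambda^*(z)}$. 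Neither point affects the paper's use of the lemma, where the $X_i$ are squared standard normals and everything is smooth and finite.
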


\begin{lemma}[Result of term (A) in \citet{KordaTS}]\label{lem: kordaresult}
When one uses the Jeffreys prior as a prior distribution under the Pareto distribution with known scale parameter, TS satisfies that for sufficiently small $\eps >0$,
\begin{equation*}
    \sum_{t=1}^T \mathbb{E}\left[\I[j(t) \ne 1, \eM_\eps^c(t)] \right] \leq \mathcal{O}\left(\eps^{-1}\right).
\end{equation*}
\end{lemma}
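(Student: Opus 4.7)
\textbf{Proof plan for Lemma~\ref{lem: kordaresult}.}

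My plan is to use the standard Korda--Kaufmann--Munos-style decomposition: the number of rounds at which arm~$1$ is underestimated and a suboptimal arm is pulled can be counted by a geometric series tied to the posterior failure probability at each fixed value of $N_1(t)$. Let $\tau_n$ be the round at which arm~$1$ is pulled for the $n$-th time, and let $p_n:=\mathbb{P}[\tmu_1(t)\le \mu_1-\eps \mid \mathcal F_{\tau_n+1}]$ denote the posterior failure probability that is frozen on the interval $[\tau_n+1,\tau_{n+1}]$. A telescoping argument (identical to the one used in the proofs of Lemma~\ref{lem: difficult}) yields
\begin{equation*}
    \sum_{t=1}^T \mathbb{E}\left[\I[j(t)\ne 1,\eM_\eps^c(t)]\right] \;\le\; \sum_{n=1}^T \mathbb{E}\left[\frac{p_n}{1-p_n}\right],
\end{equation*}
so everything reduces to controlling this sum.

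Under the Jeffreys prior for the one-parameter Pareto model with known $\kappa_1$, the posterior of $\alpha$ after $n$ observations is $\mathrm{Er}(n+1, X_n)$ with $X_n=\sum_{s=1}^n\log(r_{1,s}/\kappa_1)$; by Facts~\ref{fact2}--\ref{fact3}, $X_n\sim \mathrm{Er}(n,\alpha_1)$. The event $\tmu_1\le \mu_1-\eps$ is equivalent to $\ta_1^{\mathrm{one}}\ge \beta$ with $\beta=\frac{\mu_1-\eps}{\mu_1-\eps-\kappa_1}$, which satisfies $\beta-\alpha_1=\Theta(\eps)$ for small $\eps$. Consequently $p_n=\Gamma(n+1,\beta X_n)/\Gamma(n+1)$, and I need to show
\begin{equation*}
    \sum_{n=1}^\infty \int_0^\infty \frac{\Gamma(n+1,\beta y)/\Gamma(n+1)}{1-\Gamma(n+1,\beta y)/\Gamma(n+1)} \,\frac{\alpha_1^n}{\Gamma(n)}\,y^{n-1}e^{-\alpha_1 y}\,\mathrm{d}y \;=\; \mathcal{O}(\eps^{-1}).
\end{equation*}

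My next step would be to split the sum at a threshold $n^\star$ of order $1/\eps$. For $n\ge n^\star$, I would exploit the concentration $X_n/n\to 1/\alpha_1$ together with Chernoff bounds for $X_n\sim \mathrm{Er}(n,\alpha_1)$ to show that on a high-probability event $\beta X_n$ is comfortably above $n+1$ by a margin of order $n\eps$, forcing $\Gamma(n+1,\beta X_n)/\Gamma(n+1)\le e^{-c n \eps^2}$ through a standard Cram\'er-type estimate on the Erlang tail; summing the resulting geometric-like terms over $n\ge n^\star$ yields $\mathcal{O}(\eps^{-1})$. For $n<n^\star$ I would bound $p_n/(1-p_n)$ by a constant independent of $n$ (using that even for small $n$, the one-step failure probability is bounded away from $1$ whenever $\beta>\alpha_1+\Theta(\eps)$, by a direct computation with the incomplete gamma function), giving at most $\mathcal{O}(n^\star)=\mathcal{O}(\eps^{-1})$ total. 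Combining the two regimes produces the desired $\mathcal{O}(\eps^{-1})$ bound.

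The main technical obstacle is the intermediate/small-$n$ regime: when $n\eps^2$ is not yet large, the Chernoff tail on $X_n$ does not kick in, yet $p_n$ can be close to $1$ on an event of non-negligible probability, so the denominator $1-p_n$ must be controlled very carefully. I would handle this by isolating the event $\{X_n\le (1+\delta)n/\alpha_1\}$ for a suitable $\delta=\Theta(\eps)$: outside this event, $1-p_n\ge 1/2$ and a naive bound suffices; inside, I would use the monotonicity $\Gamma(n+1,\cdot)$ and the explicit expression $\Gamma(n+1,z)=n!\, e^{-z}\sum_{j=0}^n z^j/j!$ to obtain an explicit Laplace-type lower bound on $1-p_n$ that is uniform in $n$ and scales like $\eps$, which then integrates against the $X_n$-density to give the missing $\eps^{-1}$ factor without spurious $n$-dependence. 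Aligning this quantitative control with the concentration argument for large $n$ is the delicate step, and matches the case analysis used in Korda, Kaufmann, and Munos for one-dimensional exponential families that is cited here.
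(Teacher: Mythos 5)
The paper does not actually prove this lemma: it is stated in the ``Known results'' appendix and imported verbatim from \citet{KordaTS} as the bound on their term (A), so there is no in-paper proof to compare against. The closest in-paper analogues of what you are attempting are the proofs of Lemmas~\ref{lem: difficult} and~\ref{lem: pn_cases}, and your overall skeleton---reduce to $\sum_n \mathbb{E}[p_n/(1-p_n)]$ via the geometric escape-time argument, identify the posterior as $\Er(n+1,X_n)$ with $X_n\sim\Er(n,\alpha_1)$, and translate $\eM_\eps^c(t)$ into $\{\ta_1^{\mathrm{one}}\ge\beta\}$ with $\beta-\alpha_1=\Theta(\eps)$---is the right one and matches what those proofs do in the two-parameter case.

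However, the step you defer as ``delicate'' is where the argument actually breaks. First, the claim that for $n<n^\star$ one can bound $p_n/(1-p_n)$ by a constant independent of $n$ is false: $1-p_n=\gamma(n+1,\beta X_n)/\Gamma(n+1)\sim(\beta X_n)^{n+1}/(n+1)!$ as $X_n\to 0$, while the $\Er(n,\alpha_1)$ density of $X_n$ behaves like $y^{n-1}$ near the origin, so $\mathbb{E}[p_n/(1-p_n)]\gtrsim\int_0 y^{n-1}\,y^{-(n+1)}\,\dx y=+\infty$ for every fixed $n$. The failure probability is not bounded away from $1$ uniformly over realizations of $X_n$; one must first intersect with a concentration event on $X_n$ (equivalently on $\ha_1$) and treat its complement by a separate tail bound---this is exactly the role of the case split on $\alpha_{1,n}$ in Lemma~\ref{lem: pn_cases} and of the analogous MLE event in \citet{KordaTS}---and your sketch omits this. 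Second, the $\eps$-accounting does not close: on the concentration event $\beta X_n\approx n(1+c\eps)$, the Cram\'er rate for the $\mathrm{Gamma}(n+1,1)$ upper tail at relative deviation $c\eps$ is $\Theta(n\eps^2)$, not $\Theta(n\eps)$, so $p_n$ stays near $1/2$ for all $n\lesssim\eps^{-2}$ and your split at $n^\star=\Theta(\eps^{-1})$ gives $\sum_{n\ge n^\star}e^{-cn\eps^2}=\Theta(\eps^{-2})$; both regimes therefore deliver $\mathcal{O}(\eps^{-2})$, and the claimed $\mathcal{O}(\eps^{-1})$ does not follow from this route. (An $\mathcal{O}(\eps^{-2})$ bound would still suffice for Theorem~\ref{thm: RegOpt}, but it is not the statement being proved.)
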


\section{Additional experimental results}\label{app: Exp}
\begin{figure*}[!t]
\centering
    \subfigure[The Jeffreys prior $k=0$]{\label{fig: ka0}\includegraphics[width=0.35\textwidth]{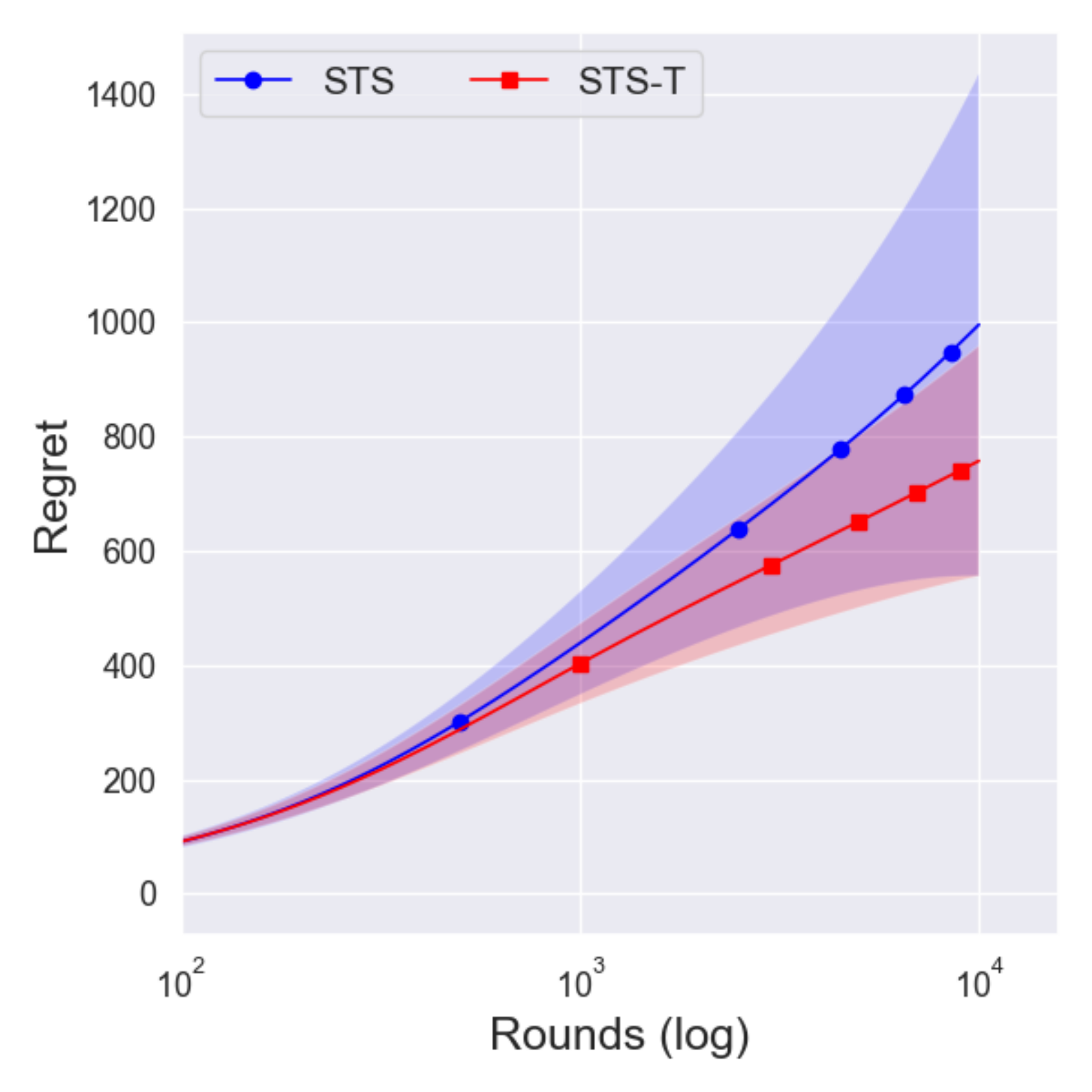}}
    \quad
    \subfigure[The reference prior $k=1$]{\label{fig: ka1}\includegraphics[width=0.35\textwidth]{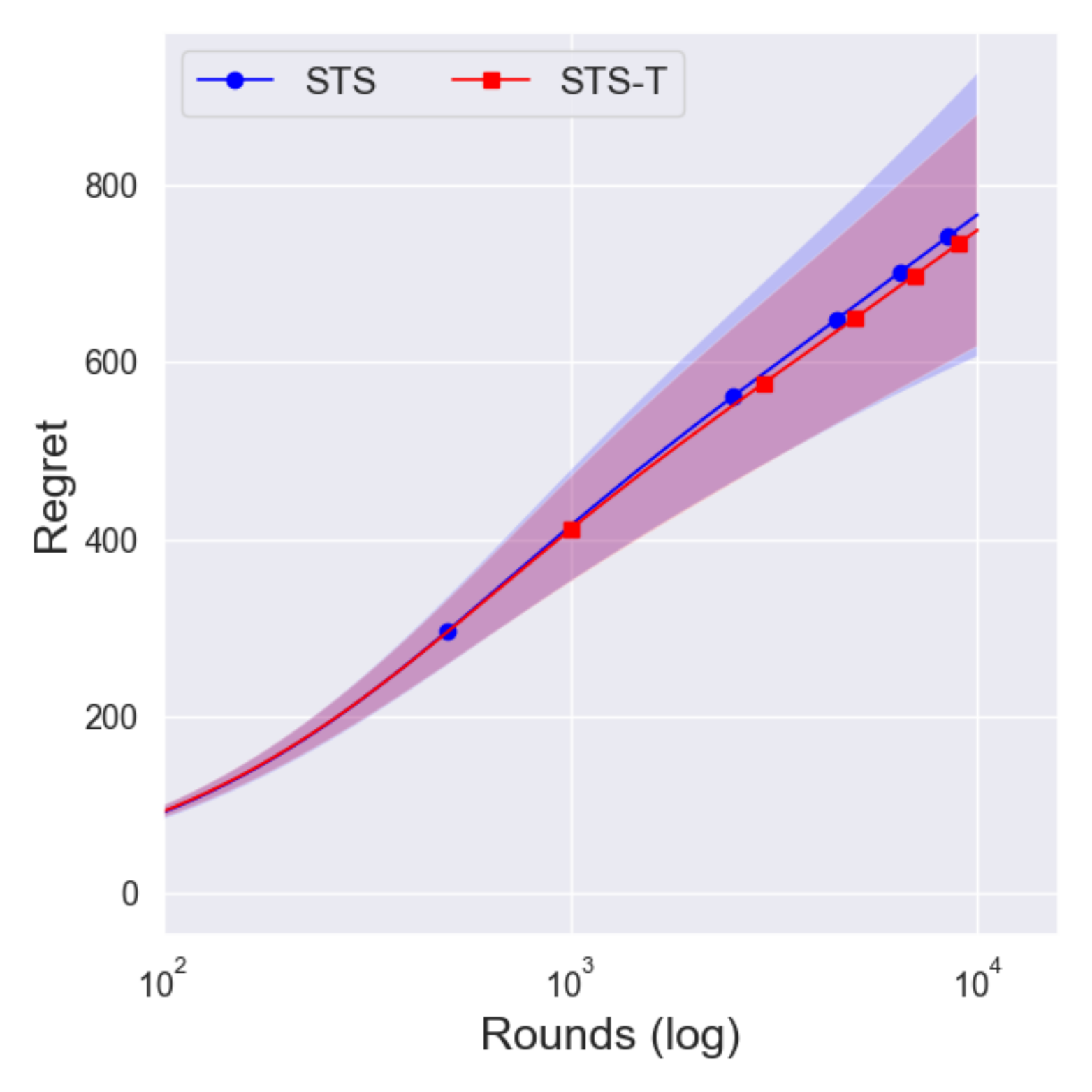}}
    \medskip
    \subfigure[Prior with $k=3$]{\label{fig: ka3}\includegraphics[width=0.3\textwidth]{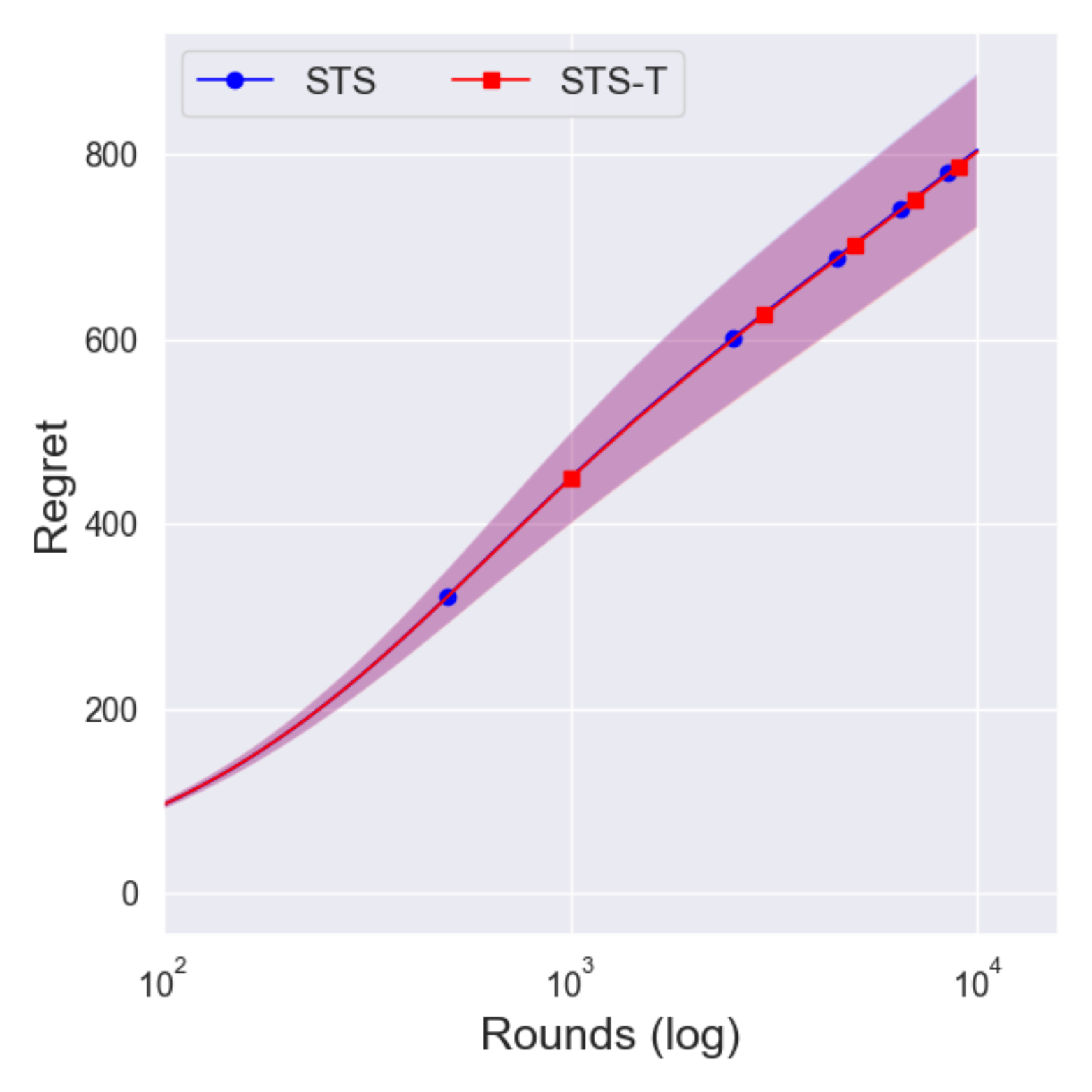}}
    \hfill
    \subfigure[Prior with $k=-1$]{\label{fig: ka-1}\includegraphics[width=0.3\textwidth]{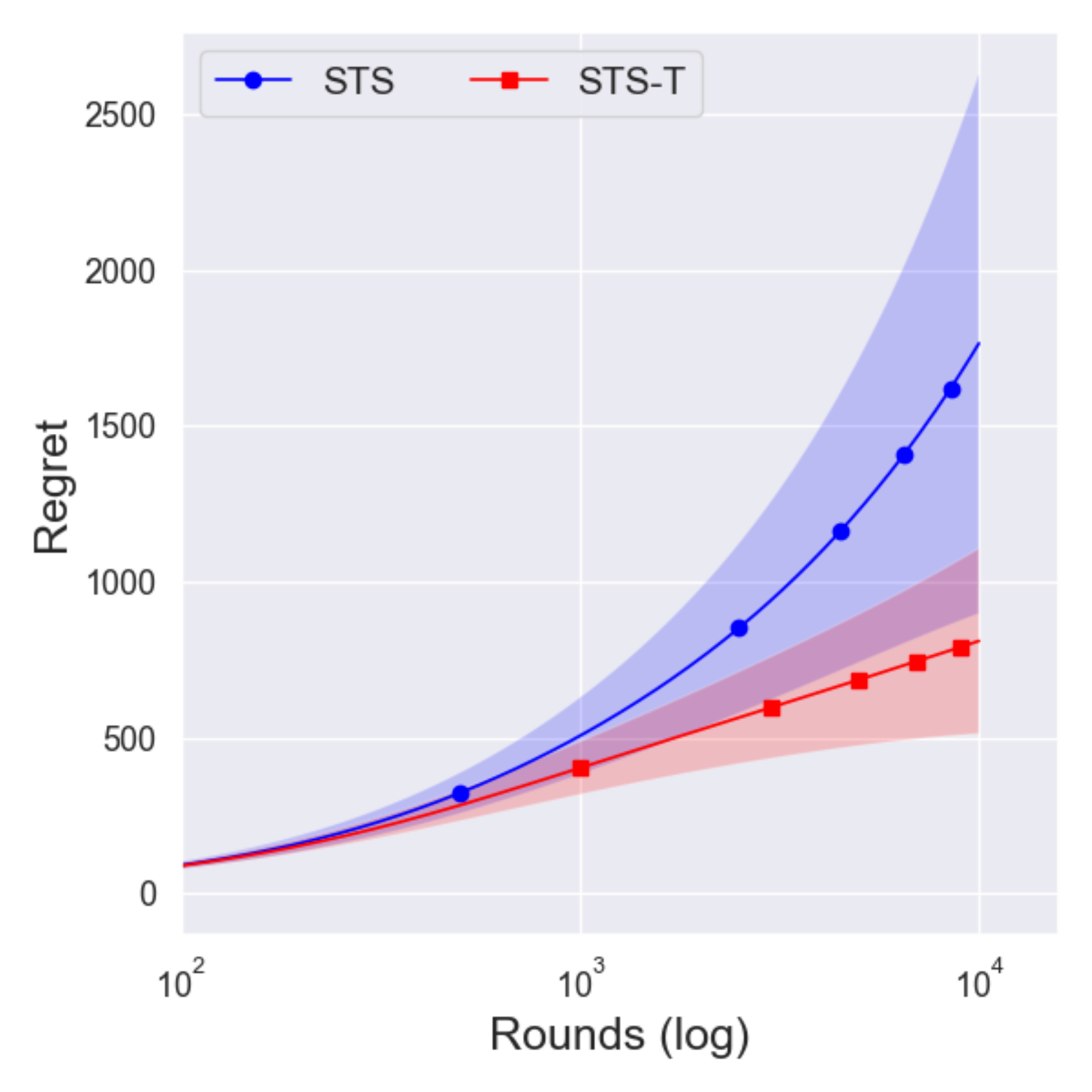}}
    \hfill
    \subfigure[Prior with $k=-3$]{\label{fig: ka-3}\includegraphics[width=0.3\textwidth]{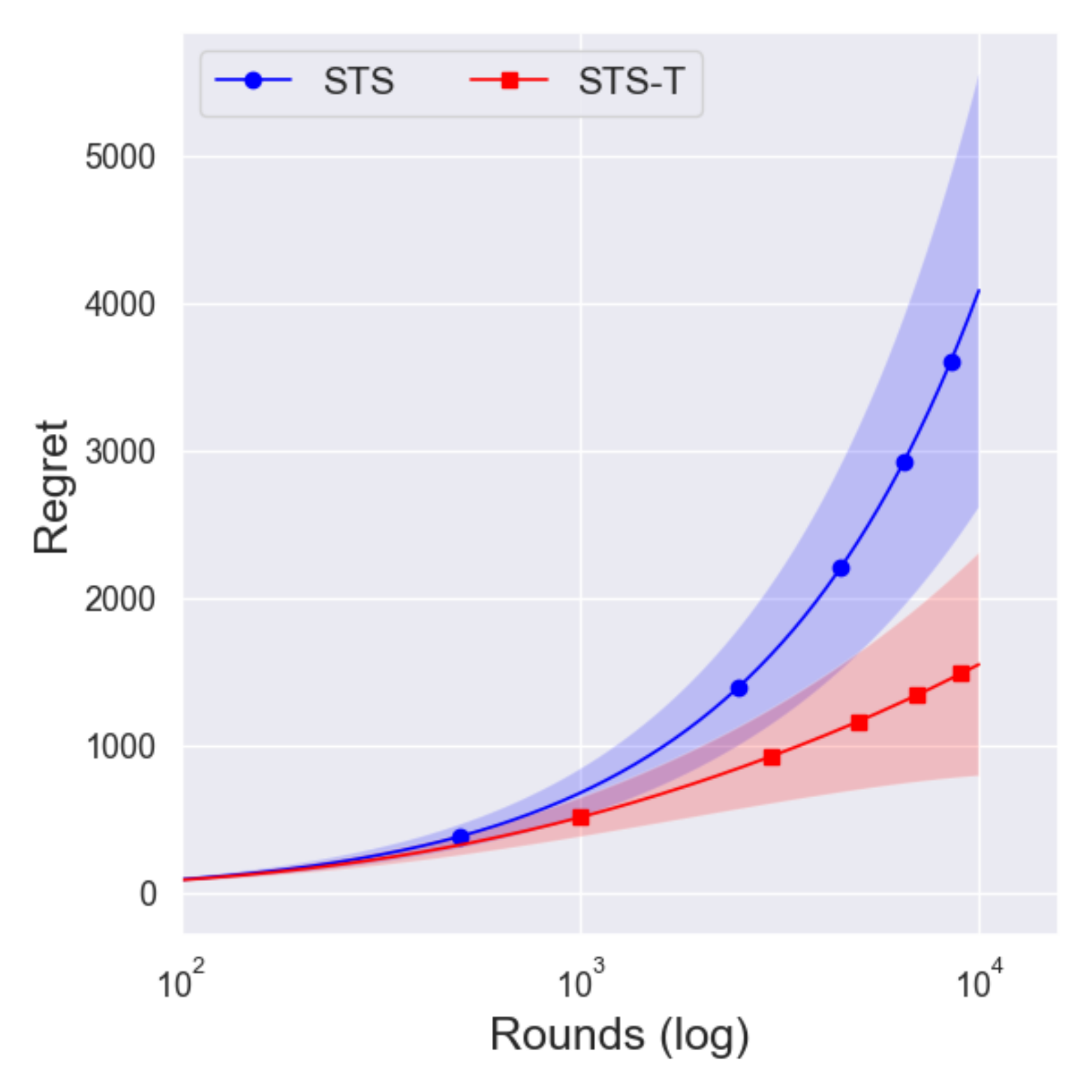}}
\vspace{-1em}
\caption{The solid lines denotes an averaged regret over independent 100,000 runs. The shaded regions show a quarter standard deviation.}
\label{fig: all_k_sqrt}
\end{figure*}
From Figure~\ref{fig: all_k_sqrt}, one can observe that the performance difference between $\TS$ and $\TST$ is large as $k$ decreases.
Since a truncation procedure aims to prevent an extreme case that can occur under $\TS$ with priors $k \in \mathbb{Z}_{\leq 1}$, it is quite natural to see that there is no difference between $\TS$ and $\TST$ with prior $k=3$.
We can further see the improvement of $\TST$ is dramatic as $k$ decreases, where an extreme case can easily occur.

\begin{figure*}[!t]
\centering
    \subfigure[Cumulative regret of $\TS$ with various $k$ under $\bth_4'$]{\label{fig: STS2}\includegraphics[width=0.42\textwidth]{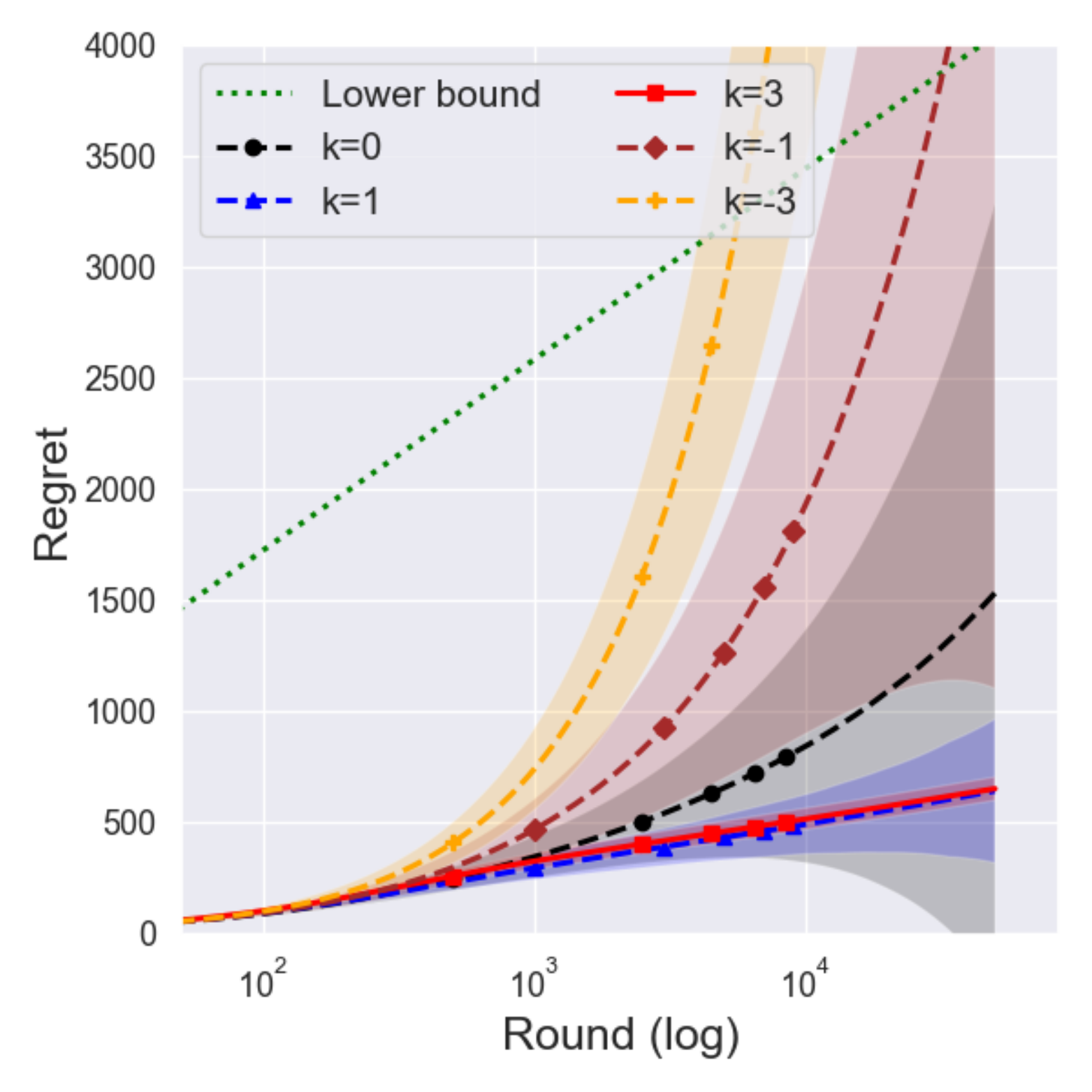}}
    \quad
    \subfigure[Cumulative regret of $\TST$ with various $k$ under $\bth_4'$]{\label{fig: STST2}\includegraphics[width=0.42\textwidth]{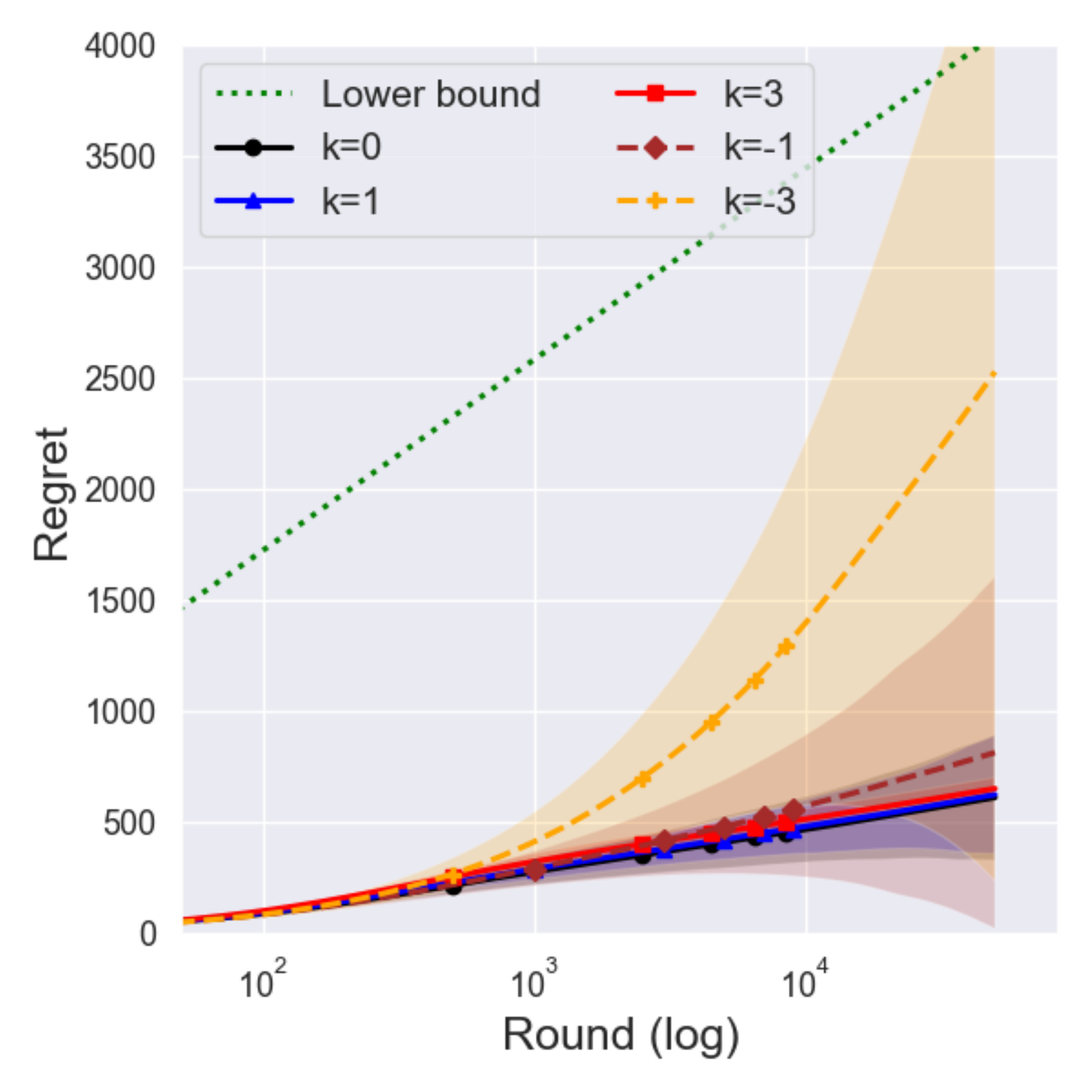}}
\caption{The solid lines denote the averaged cumulative regret over 100,000 independent runs of priors that can achieve the optimal lower bound in (\ref{eq: lowerbound}).
The dashed lines denote that of priors that cannot achieve the optimal lower bound in (\ref{eq: lowerbound}).
The green dotted line denotes the problem-dependent lower bound based on Lemma~\ref{lem: KLinf}.}
\label{fig: overall2}
\end{figure*}

\begin{figure*}[!ht]
\centering
    \subfigure[The Jeffreys prior $k=0$]{\label{fig: 2ka0}\includegraphics[width=0.35\textwidth]{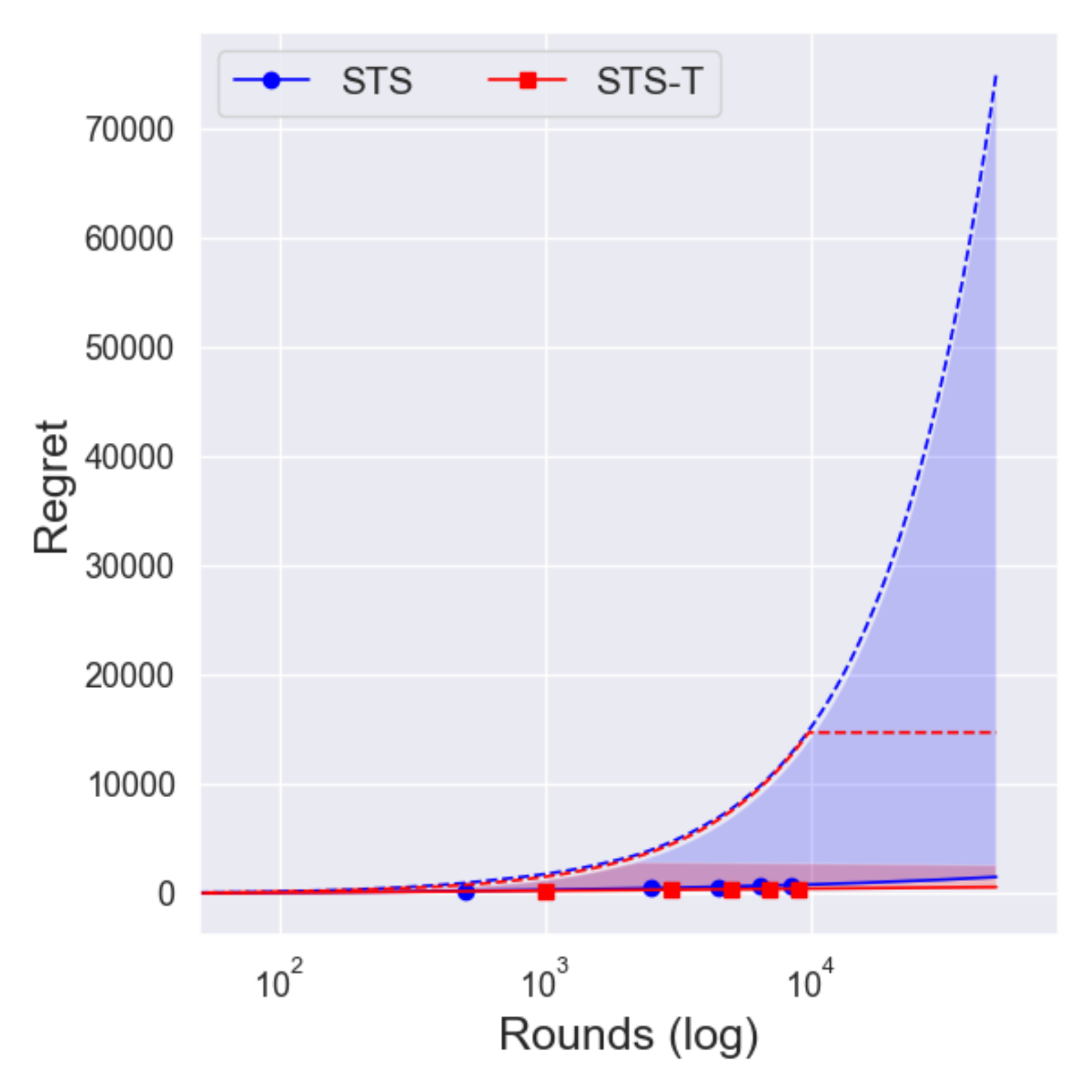}}
    \quad
    \subfigure[The reference prior $k=1$]{\label{fig: 2ka1}\includegraphics[width=0.35\textwidth]{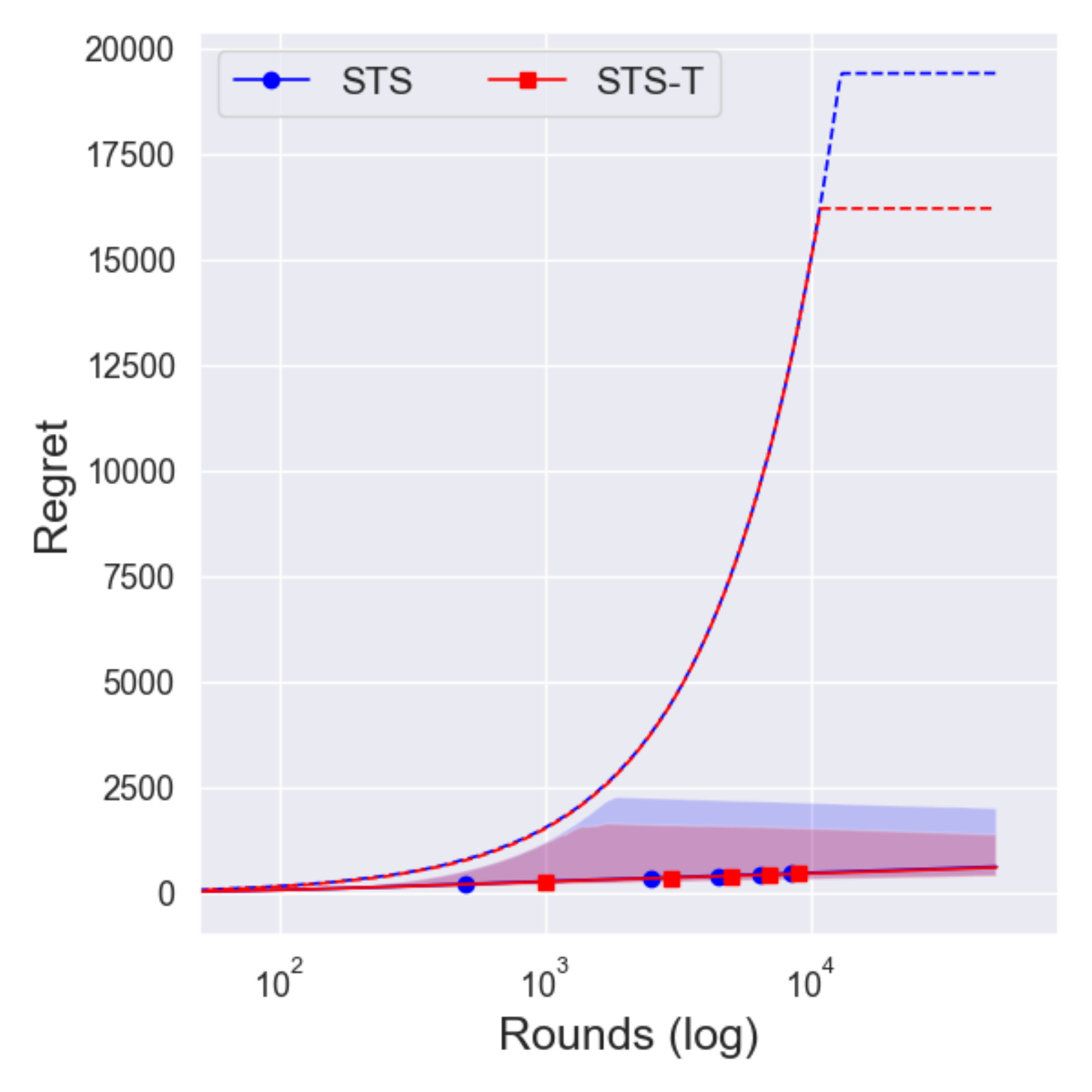}}
    \hfill
    \medskip
    \subfigure[Prior with $k=3$]{\label{fig: 2ka3}\includegraphics[width=0.32\textwidth]{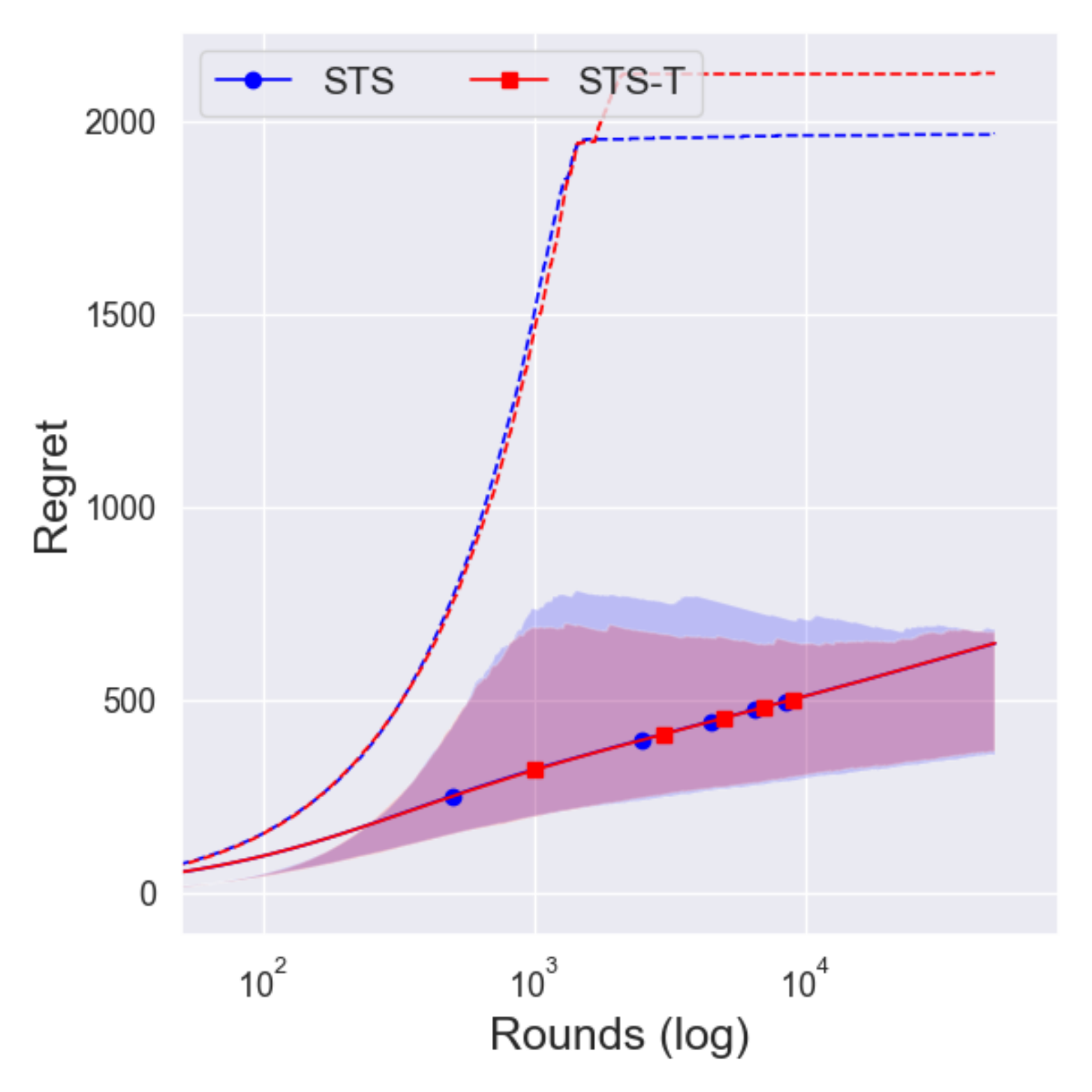}}
    \hfill
    \subfigure[Prior with $k=-1$]{\label{fig: 2ka-1}\includegraphics[width=0.32\textwidth]{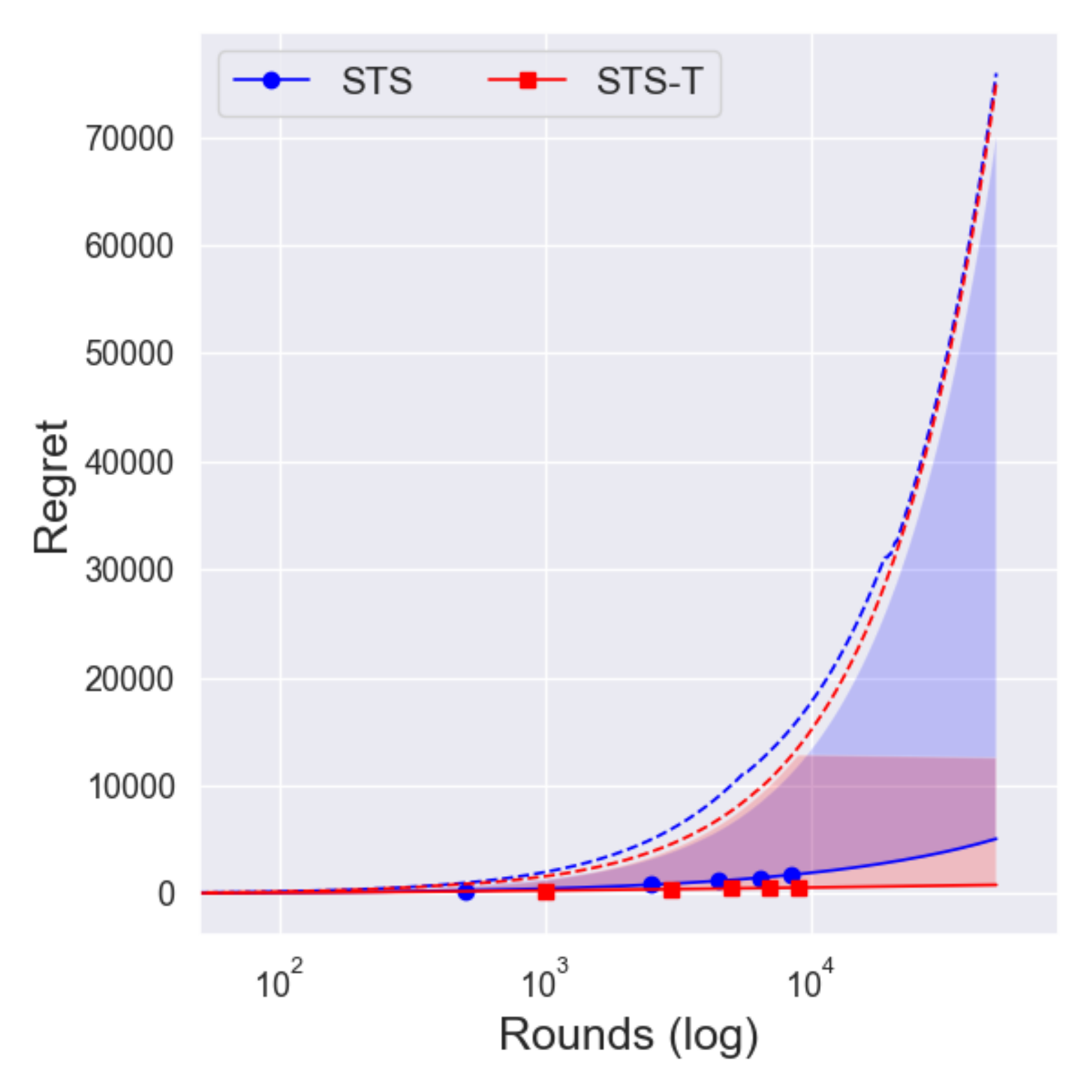}}
    \hfill
    \subfigure[Prior with $k=-3$]{\label{fig: 2ka-3}\includegraphics[width=0.32\textwidth]{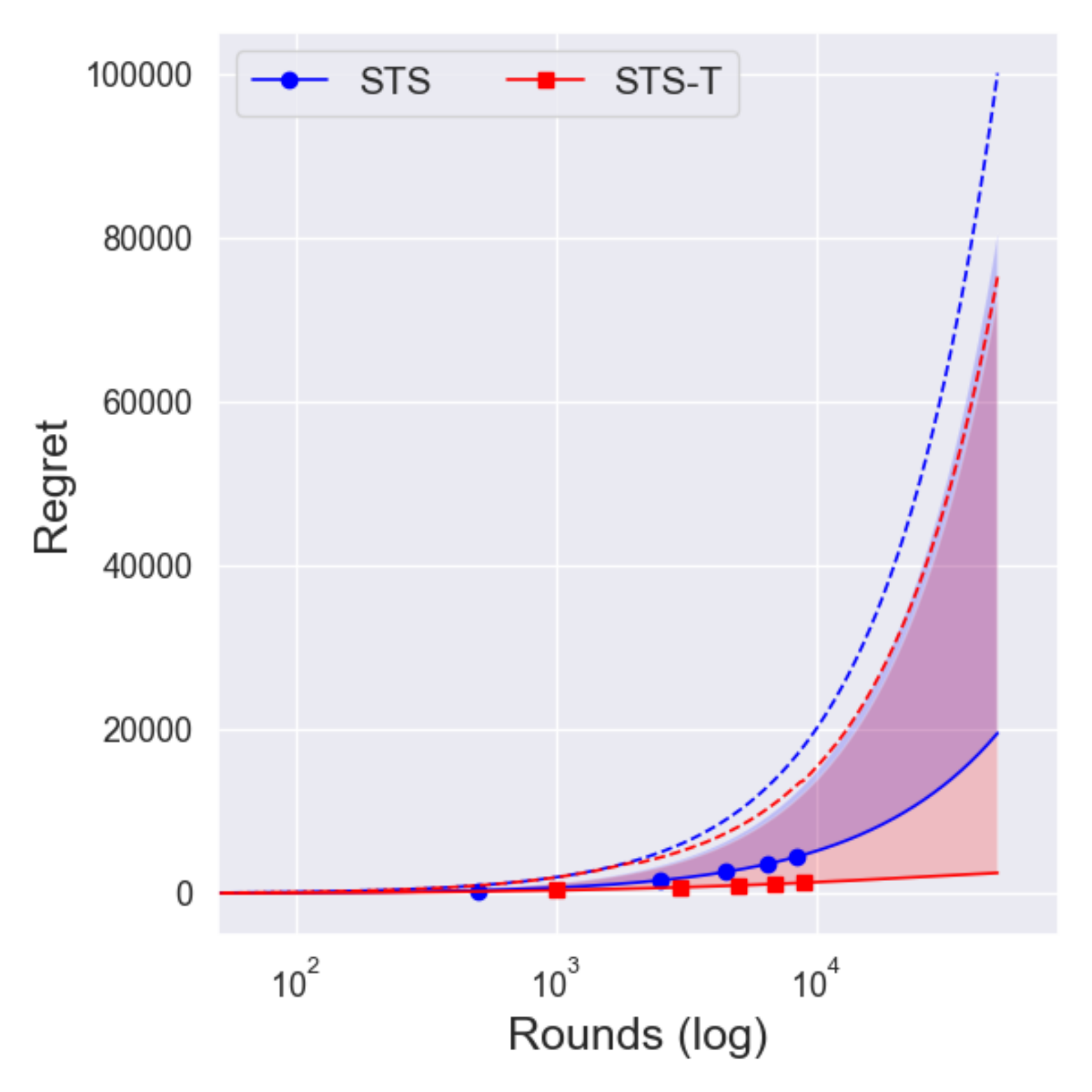}}
\caption{The solid lines denotes an averaged regret over independent 10,000 runs. The shaded regions and dashed lines show the central $99\%$ interval and the upper $0.05\%$ of regret, respectively.}
\label{fig: all_k_995}
\end{figure*}

We further consider another $4$-armed bandit problem $\bth_4'$ where $\bm{\kappa}=(1.0, 1.5, 2.0, 2.0)$ and $\bm{\alpha} = (1.2, 1.5, 1.8, 2.0)$ where $\bmu = (5.0, 4.5, 4.5, 4.0)$.
$\bth_4'$ would be a more challenging problem than $\bth_4$ in the sense that the $\kappa$ determines the left boundary of the support, where larger $\kappa$ implies larger minimum value of the arm.
Therefore, if $\kappa$ of the suboptimal arm is larger than that of the optimal arm, it would make a problem difficult in the first few trials.
Figures~\ref{fig: overall2} and~\ref{fig: all_k_995} show the numerical results with time horizon $T=$ 50,000 and independent 10,000 runs.
Although $\TS$ with the reference prior shows similar performance to the conservative prior $k=3$, its performance varies a lot.

Figures~\ref{fig: 2ka0} and~\ref{fig: 2ka1} show the effectiveness of the truncation procedure where $\TST$ has a much smaller upper $0.05\%$ regret than that of $\TS$. 
Although $k=-1$ also shows huge improvements in the central $99\%$ interval of regret as shown in Figure~\ref{fig: 2ka-1}, $\TST$ with $k=-1$ shows worse performance compared with priors with $k\in \mathbb{Z}_{\geq 0}$ in Figure~\ref{fig: STST2}.

\end{document}